%% file: main.tex
\documentclass[12pt]{article}
\usepackage{times}

\input{math_commands.tex}

\usepackage{fancyhdr}
\usepackage{hyperref}
\usepackage{url}
\usepackage[round]{natbib}
\usepackage{amsmath,amssymb,amsfonts,amsthm,mathtools,bm}
\usepackage{booktabs,multirow,array}
\usepackage{graphicx}
\usepackage[dvipsnames]{xcolor}
\usepackage[table]{xcolor}
\usepackage{microtype}
\usepackage{xspace}
\usepackage{enumitem}
\usepackage{algorithm}
\usepackage{algpseudocode}
\usepackage{appendix}
\usepackage{titletoc}
\hypersetup{colorlinks=true,citecolor=NavyBlue,linkcolor=NavyBlue,urlcolor=NavyBlue}
\usepackage{subfig}
\usepackage{xcolor}
\usepackage{threeparttable}
\definecolor{adamwcolor}{HTML}{1FAC59}
\definecolor{muoncolor}{HTML}{007FFF}
\definecolor{manocolor}{HTML}{FFA500}
\definecolor{normpregcolor}{HTML}{CB2DD7}
\definecolor{normprecolor}{HTML}{E44628}
\definecolor{normprebg}{HTML}{FFFAFA}

\theoremstyle{plain} 
\newtheorem{theorem}{Theorem}
\newtheorem{proposition}{Proposition}
\newtheorem{lemma}{Lemma}
\newtheorem{corollary}{Corollary}
\theoremstyle{definition}

\newtheorem{remark}{Remark}
\usepackage{wrapfig}

\newcommand{\muon}{\textup{Muon}\xspace}
\newcommand{\method}{\textup{NormPre}\xspace}
\newcommand{\methodG}{\textup{NormPre-G}\xspace}
\newcommand{\methodL}{\textup{NormPre-L}\xspace}

\newcommand{\orient}{\mathcal{O}}

\titlecontents{section}[3em]{\vspace{-1pt}}%
{\bfseries\contentslabel{2em}}%
{}%
{\titlerule*[0.5pc]{.}\contentspage}%

\titlecontents{subsection}[5em]{\vspace{-1pt}}%
{\contentslabel{2em}}%
{}%
{\titlerule*[0.5pc]{.}\contentspage}%

\titlecontents{subsubsection}[7em]{\vspace{-1pt}}%
{\contentslabel{3em}}%
{}%
{\titlerule*[0.5pc]{.}\contentspage}%

\usepackage[most]{tcolorbox}

\definecolor{claimbg}{HTML}{F0F8FF}
\newtheoremstyle{claimtitle}
  {\topsep}
  {\topsep}
  {\itshape}
  {}
  {\bfseries\itshape}
  {}
  {0em}
  {\thmnote{#3}}

\theoremstyle{claimtitle}
\newtheorem*{claim}{}

\theoremstyle{plain}
\newcommand{\newtcbenvironment}[2]{%
    \tcolorboxenvironment{#1}{%
        #2,
        enhanced,
        breakable,
        sharp corners,
        leftrule=2pt,
        rightrule=0pt,
        toprule=0pt,
        bottomrule=0pt
    }%
    \tcolorboxenvironment{#1*}{%
        #2,
        enhanced,
        breakable,
        sharp corners,
        leftrule=2pt,
        rightrule=0pt,
        toprule=0pt,
        bottomrule=0pt
    }%
}

\newtcbenvironment{claim}{
    colframe=NavyBlue,
    colback=claimbg
}

\date{}

\usepackage{parskip}
\title{\Large{Normalize-Then-Precondition: A Hierarchical Approach to Marginal Scale and Interaction Geometry for LLM Training}}
\author{
  Zixuan Gong\\
  \small{Gaoling School of Artificial Intelligence} \\ 
  \small{Renmin University of China} \\
  \small{\texttt{zxgong@ruc.edu.cn}} 
  \and
  Zeyu Gan\\
  \small{Gaoling School of Artificial Intelligence} \\ 
  \small{Renmin University of China} \\
  \small{\texttt{zygan@ruc.edu.cn}}
  \and
  Jiaye Teng\\
  \small{School of Statistics and Management} \\ 
  \small{Shanghai University of Finance and Economics} \\
  \small{\texttt{tengjiaye@sufe.edu.cn}}
  \and
  Yong Liu\thanks{Corresponding author.}\\
  \small{Gaoling School of Artificial Intelligence} \\ 
  \small{Renmin University of China} \\
  \small{\texttt{liuyonggsai@ruc.edu.cn}}
}

\ifdefined\usebigfont

\usepackage{times}
\usepackage[fontsize=13pt]{scrextend}
\usepackage[left=1.56in,right=1.56in,top=1.6in,bottom=1.74in]{geometry}
\else
\usepackage[margin=0.98in]{geometry}
\fi

\begin{document}

\maketitle

\begin{abstract}
Matrix optimizers have emerged as a promising direction, with Muon standing out as a prominent design.
Revisiting Muon through its full-Gram representation, we observe that it jointly processes marginal-scale and interaction information.
This opens an alternative way to organize geometric information hierarchically, motivating the \emph{\textbf{Normalize-Then-Precondition}} framework.
Specifically, it first uses diagonal-Gram information to construct a marginally normalized update, then applies spectral preconditioning to its directional interaction geometry.
Building on this framework, we develop \textbf{\method} with \methodG and \methodL adopting global and localized spectral preconditioning, grounded in spectral-norm steepest descent and a regularized formulation followed by leading mode selection, respectively.
To enable large-scale training, \methodG uses Newton-Schulz iterations and \methodL employs randomized sketching to approximate the leading interaction eigenspace.
Theoretically, we establish $\mathcal{O}(T^{-1/2})$ convergence guarantees for simplified versions of \method.
Across extensive pretraining experiments on GPT-2 Small, LLaMA and Qwen3, both variants consistently outperform AdamW, Muon and MANO under matched training budgets. 
Further efficiency and spectral analyses reveal the complementary strengths of two variants and characterize their performance-efficiency trade-off.
We open-source our code through a GitHub repository at \href{https://github.com/zx-gong/NormPre}{https://github.com/zx-gong/NormPre}.
\end{abstract}

\newpage
\section{Introduction}
\label{sec:introduction}
Large language model (LLM) training places increasing demands on optimizer performance, efficiency, and scalability. Recent optimizers address these demands by exploiting richer parameter structure, from individual coordinates~\citep{kingma2015iclr-adam, loshchilov2019decoupled,chen2023symbolic,liu2024sophia,liang2026cautious} and parameter blocks~\citep{zhang2025adam,wang2025sharpness} to full matrices~\citep{gupta2018shampoo,vyas2025soap,jordan2024muon,liu2025muon,gu2026mano,deng2026rmnp,xu2026width}. More specifically, orthogonalization and normalization have emerged as prominent matrix-level techniques. On the orthogonalization front, Muon applies a matrix-sign transformation to momentum via Newton-Schulz iterations for rapid convergence~\citep{jordan2024muon, liu2025muon}. Alternatively, other recent methods use row or column normalization to construct matrix updates. For example, MANO combines momentum projection with alternating normalization~\citep{gu2026mano}, and MOGA derives such normalized updates from mean-normalized operator norms under a steepest-descent perspective~\citep{xu2026width}.

Orthogonalization and normalization may appear to follow different routes, yet their Gram representations reveal a shared structure. For a matrix update $X$, the transformations take the forms
$$
\underbrace{\Phi=(\textcolor{muoncolor}{\mathbf{XX^\top}})^{-1/2}X}_{\text{Full-Gram Representation}},\quad
\underbrace{\Psi=\bigl[\textcolor{manocolor}{\textbf{Diag(diag(}\mathbf{XX^\top}\textbf{))}}\bigr]^{-1/2}X}_{\text{Diagonal-Gram Representation}},
$$
where $\operatorname{Diag}(\operatorname{diag}(\cdot))$ retains only the diagonal entries, and the inverse-square-root expressions require full row rank for $\Phi$ and nonzero rows for $\Psi$, respectively. Muon jointly processes marginal scales and cross-row interactions encoded in the full Gram matrix, while normalization draws solely on the Gram diagonal to equalize row norms without altering pairwise angles.
As an extreme example, consider fixed row directions with one row norm of $10^4$ and all others of $10^{-3}$. As this scale gap increases further, the leading Gram eigenvector approaches the largest row's coordinate axis, even if the other rows are strongly correlated. The principal mode thus becomes dominated by marginal scales despite unchanged directional interactions. By removing this scale weighting, diagonal-Gram normalization provides a reference update for processing directional interactions.
Together, these observations open an alternative way to organize the available geometric information: 
\begin{center}
    \textbf{\emph{Can we construct a normalized update from scales and refine it via directional interactions?}} 
\end{center}
While recent methods apply normalization to gradients or momentum to improve the numerical conditioning of orthogonalization~\citep{ma2024swan, chang2026muoneq}, \textit{a systematic framework} for formalizing the hierarchical organization has yet to be established. In particular, it remains unclear \textit{whether the normalized update should undergo a full-spectrum transformation or serve as a reference for more targeted spectral modifications}.

To address this gap, we formalize the hierarchical organization as the \textbf{Normalize-Then-Precondition} framework. The marginally normalized update retains a directional interaction spectrum encoded by $\Gamma=\Psi\Psi^\top$. To exploit this spectrum, one choice is \emph{global spectral preconditioning}, which performs orthogonalization of $\Psi$ and solves a steepest descent problem under the spectral norm constraint. Another choice is to treat the normalized update $\Psi$ as a reference and minimize the changes needed to satisfy the spectral constraint, seeking the closest update within the unit spectral norm ball. This preference is formalized by regularized spectral steepest descent, whose solution clips singular values above one and leaves the others unchanged. A spectral budget $r$ further restricts the transformation to at most $r$ leading super-unit modes, giving \emph{localized spectral preconditioning}. Here, $r$ controls the selected subspace dimension for spectral transformation and the final update remains dense.

\begin{figure}[t]
    \centering
    \subfloat[GPT-2 Small on OWT]{
        \begin{minipage}[t]{0.245\linewidth}
            \centering
            \includegraphics[width=\linewidth]{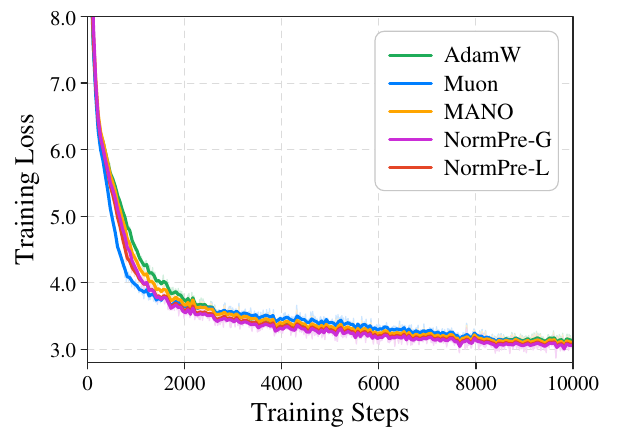}

            \vspace{2pt}

            \includegraphics[width=\linewidth]{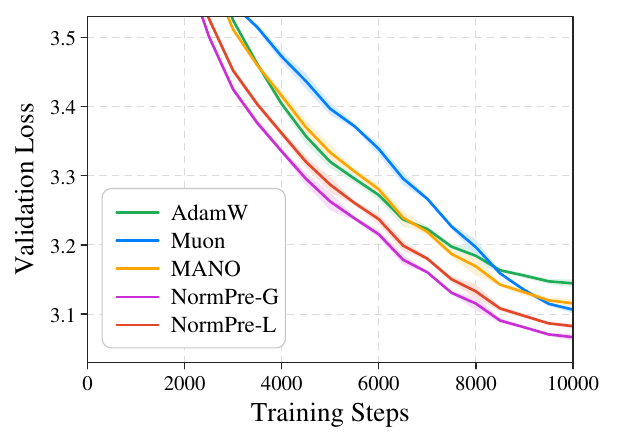}
            \label{fig:gpt2small}
        \end{minipage}
    }\hspace{-6pt}%
    \subfloat[LLaMA-130M on C4]{
        \begin{minipage}[t]{0.245\linewidth}
            \centering
            \includegraphics[width=\linewidth]{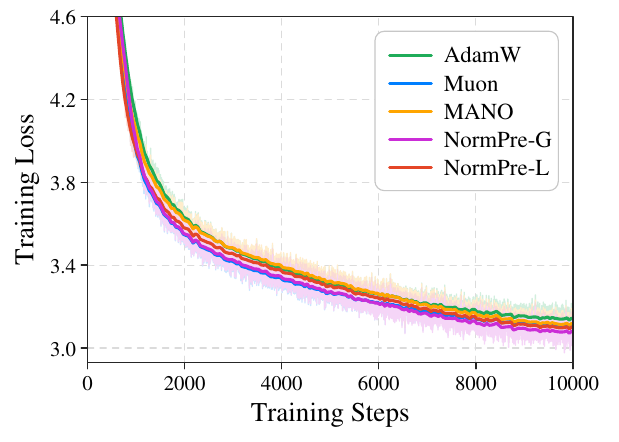}

            \vspace{2pt}

            \includegraphics[width=\linewidth]{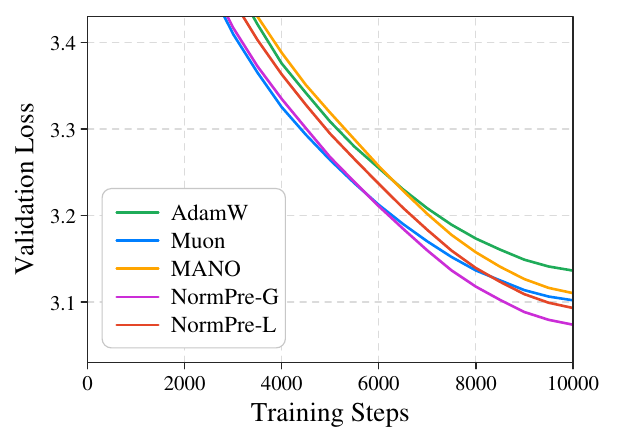}
            \label{fig:llama130m_c4_trainval_loss}
        \end{minipage}
    }\hspace{-6pt}%
    \subfloat[LLaMA-350M on C4]{
        \begin{minipage}[t]{0.245\linewidth}
            \centering
            \includegraphics[width=\linewidth]{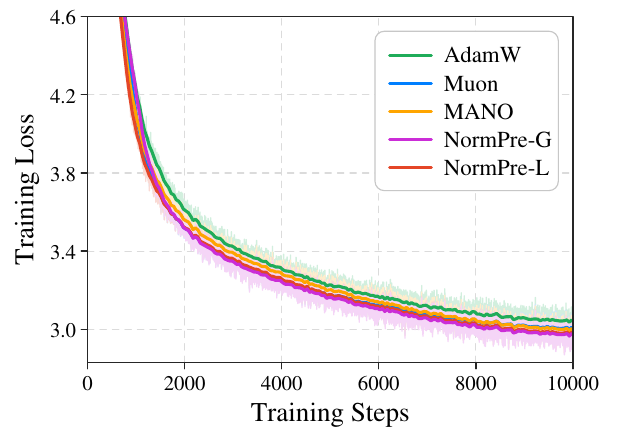}

            \vspace{2pt}

            \includegraphics[width=\linewidth]{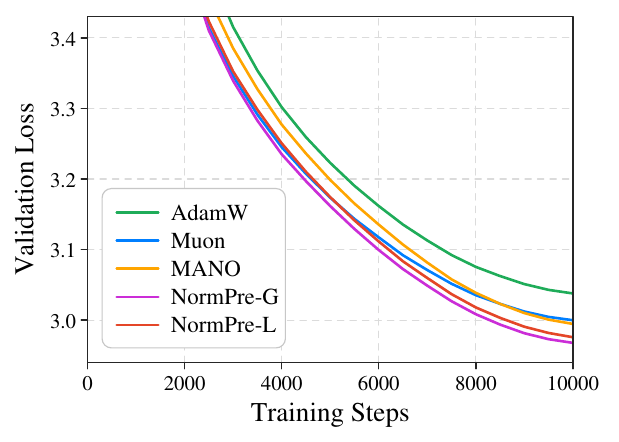}
            \label{fig:llama350m_c4_trainval_loss}
        \end{minipage}
    }\hspace{-6pt}%
    \subfloat[LLaMA-1.3B on C4]{
        \begin{minipage}[t]{0.245\linewidth}
            \centering
            \includegraphics[width=\linewidth]{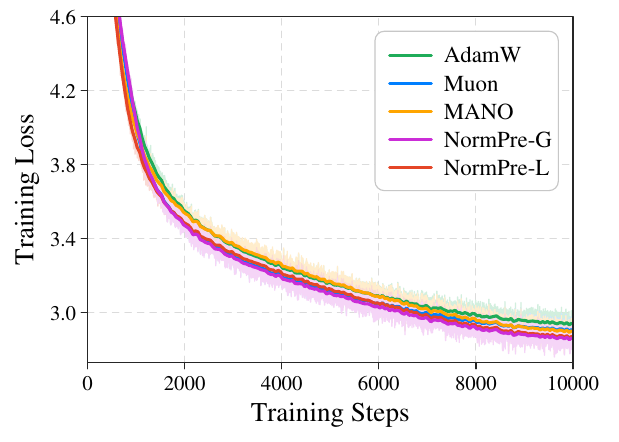}

            \vspace{2pt}

            \includegraphics[width=\linewidth]{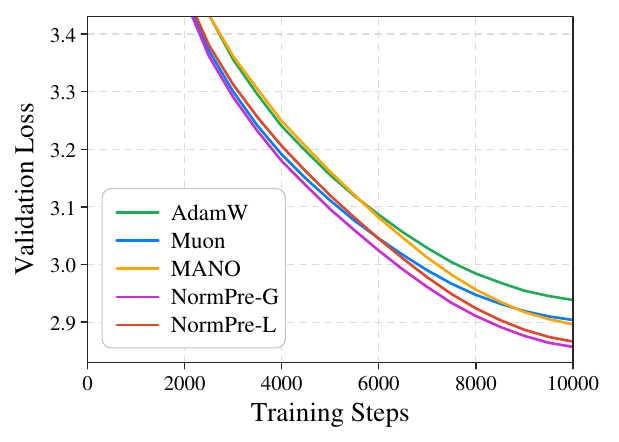}
            \label{fig:llama1p3b_c4_trainval_loss}
        \end{minipage}
    }
    \caption{\textbf{GPT-2 Small on OpenWebText and LLaMA Models on C4.} Training and validation loss for GPT-2 Small and LLaMA-\{130M, 350M, 1.3B\} with AdamW, Muon, MANO, \methodG and \methodL. Dark and light curves denote 50-step moving averages and raw training trajectories.}
    \label{fig:llama_c4}
\end{figure}

Building on this framework, we propose \method, a family of matrix optimizers comprising \methodG and \methodL, which realize marginal normalization followed by global and localized spectral preconditioning, respectively. Both variants combine relaxed tangent momentum processing with alternating row/column normalization and consistent update RMS scaling. Specifically, \methodG approximates the full-spectrum transformation via Newton-Schulz iterations. For \methodL, the leading interaction eigenspace is obtained through exact extraction or scalable randomized sketching. We also establish $\mathcal{O}(T^{-1/2})$ convergence guarantees for simplified \methodG and exact \methodL, extending them to sketch-based \methodL under controlled approximation error.
Empirically, our extensive pretraining evaluations span GPT-2 Small on OpenWebText, LLaMA models up to 1.3B on C4, and Qwen3 models up to 1.7B on Pile. Under matched training budgets, both variants consistently outperform established baselines that represent distinct optimization paradigms: coordinate-wise adaptation (AdamW), matrix orthogonalization (Muon), and normalization (MANO). 
Furthermore, our spectral dynamics and efficiency analyses highlight that \methodL with localized preconditioning delivers strong optimization gains at lower cost and \methodG with global preconditioning further reduces validation loss, providing practitioners with flexible choices to balance optimization performance and computational cost.

\begin{figure}[t]
    \centering
    \includegraphics[width=0.95\linewidth]{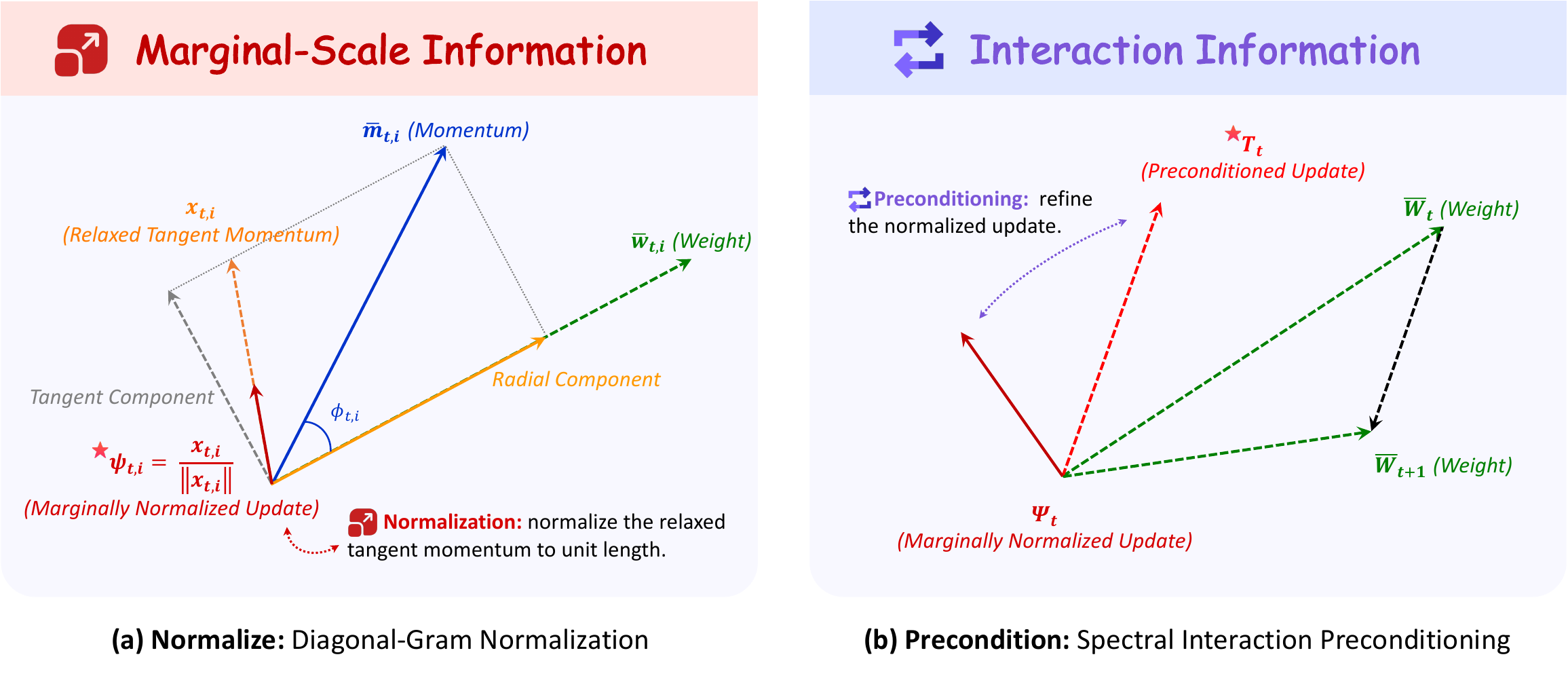}
    \caption{Geometric Workflow of the \method Optimizer (Algorithm~\ref{alg:normpre}).}
    \label{fig:normpre}
\end{figure}

Our main contributions are summarized as follows.

\noindent\textbf{(a) The Normalize-Then-Precondition Framework.}\quad
\textbf{\textit{Conceptually}}, departing from joint geometric transformations, we establish a systematic framework to hierarchically construct a normalized update from marginal scales and refine it via interaction information. Unifying full-spectrum and targeted schemes, we formulate global and localized spectral preconditioning as a solution to spectral-norm steepest descent and a regularized formulation with leading mode selection, respectively (Section~\ref{sec:prelim}).

\noindent\textbf{(b) The NormPre Optimizer.}\quad 
\textbf{\textit{Algorithmically}}, we instantiate this framework as the \method optimizer, featuring alternating row-column normalization and spectral preconditioning. We provide scalable implementations using Newton-Schulz iterations for global preconditioning (\methodG) and randomized sketching for localized eigenspace extraction (\methodL), together with theoretical $\mathcal{O}(T^{-1/2})$ convergence guarantees for simplified variants (Theorem~\ref{thm:normpre-convergence} and Corollary~\ref{cor:normpre-sketch-convergence}).

\noindent\textbf{(c) Optimization Performance and Trade-offs.}\quad
\textbf{\textit{Empirically}}, we demonstrate the superior optimization performance of \method across the evaluated architectures and model scales, consistently outperforming established baselines (AdamW, Muon, MANO). Furthermore, we characterize the performance-efficiency trade-off between its two variants, providing flexible choices to balance optimization gains and computational cost (Section~\ref{sec:experiments}).

\section{Geometric Foundations of Normalize-Then-Precondition}
\label{sec:prelim}
This section formalizes the Normalize-Then-Precondition framework. We begin by establishing a full-Gram view of row-orthogonalized updates (Section~\ref{sec:prelim-muon}) and derive the normalized update with the marginal-scale information encoded in the Gram diagonal (Section~\ref{sec:prelim-normalization}). We finally formulate different spectral preconditioning realizations (Section~\ref{sec:prelim-spectral-preconditioning}).
Throughout this section, $X\in\mathbb R^{m\times n}$ denotes a matrix-valued update signal. We write its thin SVD as $X=U\Sigma V^\top$. We present the row-side analysis and the column-side counterpart follows by applying the same construction to $X^\top$.

\subsection{Orthogonalized Updates and Full-Gram Transformation}\label{sec:prelim-muon}
Muon~\citep{jordan2024muon} orthogonalizes a momentum matrix $X$ by replacing it with the nearest semi-orthogonal matrix. For $X\in\mathbb{R}^{m\times n}$ with $m\leq n$, row orthogonalization is defined by
\begin{align}
    \Phi\in\arg\min_{O}\{\|O-X\|_F: OO^\top=I_m\}.
    \label{eq:muon-semi-orthogonal-objective}
\end{align}
One solution is given by the matrix sign function, $\Phi=\operatorname{msign}(X)= UV^\top.$
Accordingly, the matrix-sign update transforms the singular-value matrix $\Sigma$ into identity, yielding $\Phi\Phi^\top=I_m$. In practice, Muon approximates this orthogonalization efficiently using Newton-Schulz iterations. We next identify an equivalent full-Gram form of the same update, showing how Muon's orthogonalization depends on the full row Gram matrix.

\begin{claim}
    \textbf{Full-Gram Representation of Muon.}
    If $XX^\top\succ0$, the matrix-sign update admits the full-Gram representation
    \begin{equation}
        \Phi=\operatorname{msign}(X)=(XX^\top)^{-1/2}X.
    \label{eq:muon-full-gram-geometry}
    \end{equation}
\end{claim}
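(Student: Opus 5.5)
The plan is to work entirely in the thin SVD coordinates $X=U\Sigma V^\top$ fixed at the start of Section~\ref{sec:prelim}, with $m\le n$ as in \eqref{eq:muon-semi-orthogonal-objective}.

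\textbf{Step 1: reduce the assumption to the singular values.} The hypothesis $XX^\top\succ0$ means $X$ has full row rank $m$. In that case the thin SVD has $U\in\mathbb R^{m\times m}$ orthogonal, $\Sigma=\operatorname{diag}(\sigma_1,\dots,\sigma_m)$ with every $\sigma_i>0$, and $V\in\mathbb R^{n\times m}$ with $V^\top V=I_m$.

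\textbf{Step 2: compute the Gram matrix and its inverse square root.} Substituting the SVD gives
\[
XX^\top=U\Sigma V^\top V\Sigma U^\top=U\Sigma^2U^\top .
\]
This is a spectral decomposition of a symmetric positive definite matrix. The inverse square root is defined through the spectral calculus, and by uniqueness of the positive definite square root,
\[
(XX^\top)^{-1/2}=U\Sigma^{-1}U^\top .
\]
To justify this, I would check that $U\Sigma^{-1}U^\top$ is symmetric positive definite and that its square inverts $U\Sigma^2U^\top$.

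\textbf{Step 3: multiply through.} We then have
\[
(XX^\top)^{-1/2}X=U\Sigma^{-1}U^\top U\Sigma V^\top=UV^\top=\operatorname{msign}(X),
\]
which is the claimed representation \eqref{eq:muon-full-gram-geometry}.

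\textbf{Step 4: tie the result to \eqref{eq:muon-semi-orthogonal-objective}.} I would note that this $\Phi$ satisfies $\Phi\Phi^\top=UV^\top VU^\top=I_m$. I would also note that it is the minimizer there, since $\operatorname{msign}(X)$ was already identified in the text as a solution.

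\textbf{Main obstacle.} The only delicate point is Step 2: the matrix $(XX^\top)^{-1/2}$ must be well defined and equal to $U\Sigma^{-1}U^\top$. This requires invoking the uniqueness of the positive definite square root. It also requires the full-rank assumption, which makes $U$ square so that $UU^\top=I_m$ and $\Sigma$ is invertible.

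A side remark: when $XX^\top\succ0$, $\operatorname{msign}(X)$ does not depend on the choice of SVD, because repeated singular values only rotate the columns of $U$ and $V$ jointly. The Gram formula therefore makes this well-definedness automatic.
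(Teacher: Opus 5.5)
Your proposal is correct and is essentially the argument the paper relies on. The paper states this claim without a separate proof, treating it as the immediate consequence of substituting the thin SVD $X=U\Sigma V^\top$, so that $XX^\top=U\Sigma^2U^\top$, $(XX^\top)^{-1/2}=U\Sigma^{-1}U^\top$, and $(XX^\top)^{-1/2}X=UV^\top=\operatorname{msign}(X)$; this is the same spectral computation it later uses for the global preconditioner $P^{\mathrm{G}}=\Gamma(X)^{-1/2}$, and your explicit checks (full row rank making $U$ square orthogonal with $\Sigma$ invertible, and uniqueness of the positive definite square root) are what make it rigorous.
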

Let $x_i^\top$ denote the $i$-th row of $X$. The entries of the full row Gram matrix satisfy $(XX^\top)_{ii}=\|x_i\|_2^2,$ and $(XX^\top)_{ij}=\langle x_i,x_j\rangle$ for $i\neq j$.
Its diagonal entries encode the marginal scales of individual rows, whereas its off-diagonal entries encode their pairwise interactions. Together, these two components determine the inverse-square-root operator in Equation~\ref{eq:muon-full-gram-geometry}, which acts on the entire singular spectrum of $X$. Thus, Muon jointly utilizes marginal-scale and cross-row interaction information through the full row Gram matrix.

\subsection{Diagonal-Gram Normalization}
\label{sec:prelim-normalization}
Within the joint geometry, marginal-scale information is structurally simpler, since each row scale depends only on its corresponding row. This motivates a natural question: \textit{is it possible to first use this simpler marginal-scale information to construct a valid matrix update?}
To answer this question, we isolate the marginal-scale information encoded in the diagonal of $XX^\top$. Since $(XX^\top)_{ii}=\|x_i\|_2^2$, define the diagonal matrix of row scales as
\begin{align*}
    D(X):=\operatorname{Diag}\left(\operatorname{diag}(XX^\top)\right)^{1/2}=\operatorname{Diag}\left(\|x_1\|_2,\ldots,\|x_m\|_2\right),
\end{align*}
where $\operatorname{diag}(\cdot)$ extracts the diagonal entries as a vector and $\operatorname{Diag}(\cdot)$ constructs a diagonal matrix.

\begin{claim}
    \textbf{Diagonal-Gram Representation of Marginal Normalization.}
    If $X$ has no zero rows, the marginally normalized update admits the diagonal-Gram representation
    \begin{equation}
        \Psi=D(X)^{-1}X=\operatorname{Diag}\left(\operatorname{diag}(XX^\top)\right)^{-1/2}X.
    \label{eq:diagonal-gram-transformation}
    \end{equation}
\end{claim}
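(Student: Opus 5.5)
The claim is essentially definitional, so the plan is to make precise what ``marginally normalized update'' means and then read off the diagonal-Gram form. Mirroring the Muon construction in Equation~\ref{eq:muon-semi-orthogonal-objective}, I take the marginally normalized update to be the nearest matrix (in Frobenius norm) whose rows have unit norm:
\[
\Psi\in\arg\min_{O}\bigl\{\|O-X\|_F:\ \operatorname{diag}(OO^\top)=\mathbf 1_m\bigr\}.
\]
Here the full constraint $OO^\top=I_m$ is replaced by its diagonal part alone. Equivalently, $\Psi$ is the row-wise rescaling $\psi_i=x_i/\|x_i\|_2$. I would establish both descriptions, then convert to the Gram form.

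\textbf{Step 1: decoupling.} The objective and the constraint both separate over rows, since $\|O-X\|_F^2=\sum_i\|o_i-x_i\|_2^2$ and each constraint $\|o_i\|_2=1$ involves only $o_i$. So the problem splits into $m$ independent problems $\min_{\|o_i\|_2=1}\|o_i-x_i\|_2^2$.

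\textbf{Step 2: each row problem.} Expanding gives $\|o_i-x_i\|_2^2=1-2\langle o_i,x_i\rangle+\|x_i\|_2^2$. By Cauchy--Schwarz, this is minimized uniquely at $o_i=x_i/\|x_i\|_2$. Uniqueness uses $x_i\neq0$, which is exactly where the no-zero-rows hypothesis enters.

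\textbf{Step 3: matrix form.} Stacking the rows gives $\Psi=\operatorname{Diag}(\|x_1\|_2^{-1},\dots,\|x_m\|_2^{-1})X=D(X)^{-1}X$. The hypothesis makes $D(X)$ invertible. Since $(XX^\top)_{ii}=\|x_i\|_2^2$, we have $D(X)=\operatorname{Diag}(\operatorname{diag}(XX^\top))^{1/2}$. Because the entries are positive, the square root and the inverse of a diagonal matrix act entrywise and commute, so $D(X)^{-1}=\operatorname{Diag}(\operatorname{diag}(XX^\top))^{-1/2}$. This yields Equation~\ref{eq:diagonal-gram-transformation}.

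As a closing remark, I would check that $\Gamma=\Psi\Psi^\top=D^{-1}XX^\top D^{-1}$ has unit diagonal and off-diagonal entries $\langle x_i,x_j\rangle/(\|x_i\|_2\|x_j\|_2)$. This confirms that normalization removes the marginal scales while preserving pairwise angles.

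The only real obstacle is interpretive: fixing which precise definition of ``marginally normalized update'' the claim refers to. Once the definition is fixed, either as row rescaling or as the constrained nearest-matrix problem, the remaining steps are routine. I would present the variational version for parallelism with Muon and note that it reduces to the row rescaling.
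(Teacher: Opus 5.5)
Your proposal is correct. The paper gives no separate argument. It takes $\Psi=D(X)^{-1}X$, the row normalization used by MANO-type optimizers, as the definition of the marginally normalized update. The claim then reduces to your Step~3: since $(XX^\top)_{ii}=\|x_i\|_2^2$, we have $D(X)=\operatorname{Diag}(\operatorname{diag}(XX^\top))^{1/2}$, and when no row vanishes this diagonal matrix is invertible with entrywise inverse square root.

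Your Steps~1--2 add something the paper does not state: a variational characterization of $\Psi$ as the Frobenius-nearest matrix with unit-norm rows. This is the Muon problem in Equation~\ref{eq:muon-semi-orthogonal-objective} with the constraint $OO^\top=I_m$ relaxed to its diagonal, $\operatorname{diag}(OO^\top)=\mathbf 1_m$. The argument is sound: the problem decouples rowwise, Cauchy--Schwarz solves each row, and the minimizer is unique exactly when $x_i\neq0$.

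This buys you a variational version of the paper's full-Gram versus diagonal-Gram parallel, not just an algebraic one. Both updates become projections of $X$ under the full versus the diagonal Gram constraint. It also explains the no-zero-rows hypothesis as a uniqueness condition, not merely as invertibility of $D(X)$. It is not needed for the identity itself, so if you include it, present it as motivation for the definition rather than as part of the proof.
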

Equation~\ref{eq:diagonal-gram-transformation} provides the diagonal-Gram counterpart of the Muon transformation in Equation~\ref{eq:muon-full-gram-geometry}. Muon constructs its orthogonalized update from the inverse square root of the full row Gram matrix, while $\Psi$ uses only its diagonal component to normalize the marginal scales. This transformation coincides with the core normalization used by normalization-based optimizers~\citep{gu2026mano,xu2026width,deng2026rmnp,pethick2025training,glentis2025minimalist}. Define the normalized interaction matrix as $\Gamma(X):=\Psi\Psi^\top$. By construction, $\Gamma(X)_{ii}=1$ and its off-diagonal entries $\Gamma(X)_{ij}=\langle x_i,x_j\rangle/(\|x_i\|_2\|x_j\|_2)$ encode the cosine similarities between rows.

\subsection{Spectral Interaction Preconditioning}\label{sec:prelim-spectral-preconditioning}

\textbf{Hierarchical Geometric Construction.}\quad 
Beyond serving as an effective standalone update, the normalization $\Psi$ retains directional correlations in $\Gamma(X)$, providing a natural foundation for further spectral preconditioning. Having processed the marginal scales, we now study two realizations of this preconditioning stage (Figure~\ref{fig:spectral}): \textit{global spectral preconditioning}, which performs full-spectrum transformation, and \textit{localized spectral preconditioning}, which retains $\Psi$ as a reference and concentrates the transformation on selected interaction modes.

Let $\Psi=\widetilde U\widetilde\Sigma\widetilde V^\top$ be a thin SVD with singular values $\widetilde\sigma_1\geq\cdots\geq\widetilde\sigma_m\geq0$. Accordingly, $\Gamma(X)=\Psi\Psi^\top=\widetilde U\Lambda\widetilde U^\top$, where $\Lambda=\widetilde\Sigma^2=\operatorname{Diag}(\lambda_1,\ldots,\lambda_m)$ and $\lambda_i=\widetilde\sigma_i^2$. The columns $\widetilde u_i$ of $\widetilde U$ define the corresponding spectral interaction directions.

\begin{figure}
    \centering
    \includegraphics[width=0.95\linewidth]{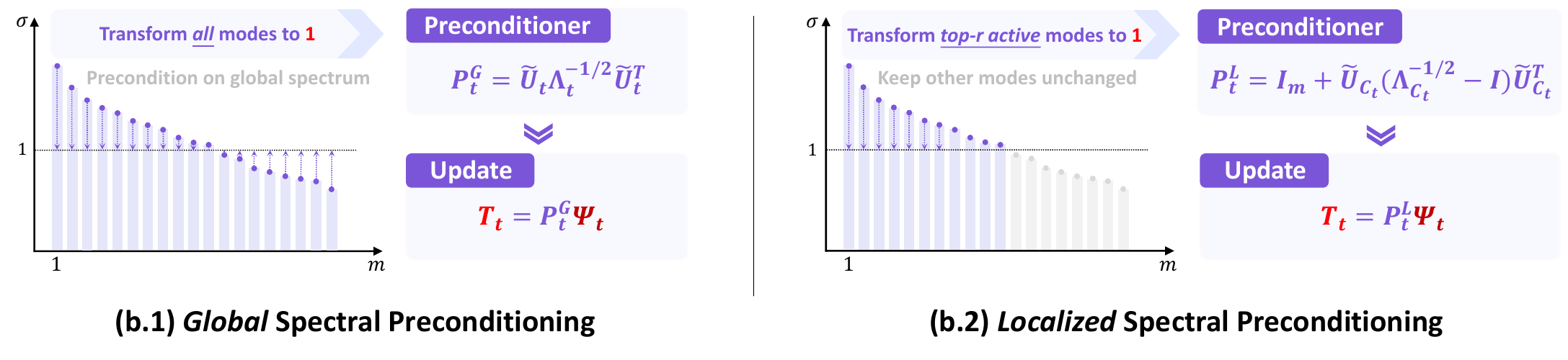}
    \caption{Spectral Interaction Preconditioning.}
    \label{fig:spectral}
\end{figure}

\textbf{Global Spectral Preconditioning.}\quad A direct method is to solve the spectral steepest descent problem for the normalized update $\Psi$~\citep{bernstein2024old},
\begin{equation}
    T^{\mathrm{G}}\in\arg\max_{\|T\|_{\mathrm{op}}\leq1}\langle\Psi,T\rangle_F.
\end{equation}
A solution is the row-orthogonalized update obtained by performing transformation in Equation~\ref{eq:muon-full-gram-geometry}, $T^{\mathrm{G}}=P^{\mathrm{G}}\Psi=\operatorname{msign}(\Psi),$ with the global preconditioner $P^{\mathrm{G}}=\Gamma(X)^{-1/2}=\widetilde U\Lambda^{-1/2}\widetilde U^\top$ when $\Gamma(X)\succ0$. 
Along the $i$-th spectral interaction direction, $P^{\mathrm{G}}$ applies the scaling factor $\lambda_i^{-1/2}=\widetilde\sigma_i^{-1}$, mapping the corresponding singular value $\widetilde\sigma_i$ to one.

\textbf{Localized Spectral Preconditioning.}\quad Another possible method starts from the observation that $\Psi$ already serves as a marginally normalized dense update. This motivates retaining $\Psi$ as the reference and considering the regularized spectral steepest descent problem
\begin{equation}
    T_+:=\arg\min_{\|T\|_{\mathrm{op}}\leq1}\frac{1}{2}\|T-\Psi\|_F^2=\arg\max_{\|T\|_{\mathrm{op}}\leq1}\left\{\langle\Psi,T\rangle_F-\frac{1}{2}\|T\|_F^2\right\}.
\end{equation}
Its solution is $T_+=P_+\Psi$. Defining the active set $\mathcal A:=\{i:\lambda_i>1\}$, the corresponding preconditioner is $P_+=I_m+\widetilde U_{\mathcal A}(\Lambda_{\mathcal A}^{-1/2}-I)\widetilde U_{\mathcal A}^\top$ (see Proposition~\ref{prop:regularized_spectral_sd} in Appendix~\ref{app:proof-regularized-spectral-steepest-descent}). Thus, the solution modifies only the active spectral modes (clipping to one) while leaving the remaining modes unchanged. 
To further localize the transformation, we impose a spectral budget $r$ and let $\mathcal C\subseteq\mathcal A$ index the $\min\{r,|\mathcal A|\}$ largest active eigenvalues (see Proposition~\ref{prop:optimal-rank-constrained-refinement} in Appendix~\ref{app:proof-regularized-spectral-steepest-descent}). The resulting update is $T^{\mathrm{L}}=P^{\mathrm{L}}\Psi,$ with the localized preconditioner $P^{\mathrm{L}}=I_m+\widetilde U_{\mathcal C}(\Lambda_{\mathcal C}^{-1/2}-I)\widetilde U_{\mathcal C}^\top$.

\section{NormPre}\label{sec:method}
This section instantiates the Normalize-Then-Precondition framework as the \method optimizer, with variants \methodG and \methodL using global and localized spectral preconditioning, respectively.

\subsection{The NormPre Optimizer}\label{sec:optimizer}
Let $W_t\in\mathbb{R}^{m\times n}$ denote a matrix-valued parameter with stochastic gradient $G_t=\nabla_{W_t}\mathcal{L}_t$, and let $M_t=\mu M_{t-1}+G_t$ denote its first-order momentum. Following Sections~\ref{sec:prelim-normalization} and~\ref{sec:prelim-spectral-preconditioning}, Algorithm~\ref{alg:normpre} summarizes the \method optimization procedure.

\begin{algorithm}[!h]
    \caption{The NormPre Optimizer}
    \label{alg:normpre}
    \begin{algorithmic}[1]
    \setlength{\baselineskip}{1.05\baselineskip}
    
    \Require Layer weight $W_t\in\mathbb{R}^{m\times n}$, learning rate $\eta_t$, momentum coefficient $\mu$, weight decay coefficient $\lambda_{\mathrm{wd}}$, optimizer variant (\methodG or \methodL).

    \State Initialize $M_0\gets\mathbf{0}$

    \For{each step}
        \State $G_t\gets\nabla_{W_t}\mathcal{L}_t$
        \State $M_t\gets\mu M_{t-1}+G_t$

        \State $k_t\gets t\bmod2$
        \Comment{Alternating Row/Column Orientation}

        \State $X_t\gets\mathcal T_{W_t,k_t}(M_t)$
        \Comment{Relaxed Tangent Momentum}

        \State $\Psi_t\gets D(X_t)^{-1}X_t$
        \Comment{Diagonal-Gram Normalization}

        \If{\ \methodG\ }
            \State $T_t\gets P^\mathrm{G}_t\Psi_t=\operatorname{msign}(\Psi_t)$
        \ElsIf{\ \methodL\ }
            \State $\left(\widetilde U_{\mathcal C_t},\Lambda_{\mathcal C_t}\right)\gets \operatorname{Eig}_{\mathcal C_t}(\Psi_t\Psi_t^\top)$
            \Comment{Eigenspace Extraction}
            \State $T_t\gets P^\mathrm{L}_t\Psi_t=(I+\widetilde U_{\mathcal C_t}(\Lambda_{\mathcal C_t}^{-1/2}-I)\widetilde U_{\mathcal C_t}^\top)\Psi_t$
        \EndIf
        \Comment{Spectral Interaction Preconditioning}
        
        \State $W_{t+1}\gets W_t-\eta_t\left(\mathcal R_{k_t}(T_t)+\lambda_{\mathrm{wd}}W_t\right)$
        \Comment{Consistent Update RMS}
    \EndFor

    \end{algorithmic}
\end{algorithm}

The remaining optimizer-level components in Algorithm~\ref{alg:normpre} specify how the update signal is constructed across matrix orientations and how the final matrix update is rescaled to a consistent RMS.

\textbf{Alternating Row/Column Orientation.}\quad
We alternate the active orientation between rows and columns across optimization steps, allowing the row-side construction in Section~\ref{sec:prelim} to apply symmetrically to both matrix axes. Specifically, let $k_t=t\bmod 2$ with $\orient_0(A)=A$ and $\orient_1(A)=A^\top$, and define $\overline W_t:=\orient_{k_t}(W_t)$ and $\overline M_t:=\orient_{k_t}(M_t)$.

\textbf{Relaxed Tangent Momentum.}\quad
We distinguish the row-wise momentum components orthogonal and parallel to the current parameter direction (\textit{i.e.}, tangent and radial components). These components primarily govern changes in parameter direction and scale, respectively. Let $\bar w_{t,i}$ and $\bar m_{t,i}$ denote the $i$-th rows of $\overline W_t$ and $\overline M_t$, respectively. For each row, we define $x_{t,i}:=\bar m_{t,i}-\langle \bar m_{t,i},\bar w_{t,i}\rangle\bar w_{t,i}.$ This formulation recovers strict tangent projection when $\|\bar w_{t,i}\|_2=1$. For general weights, this relaxed form preserves the tangent component while retaining a norm-dependent radial contribution. Collecting these rows yields the input signal $X_t:=\mathcal T_{W_t,k_t}(M_t)$ for our framework.

\textbf{Consistent Update RMS.}\quad Following the Muon convention~\citep{liu2025muon}, we set the target RMS of matrix updates to $0.2$, facilitating shared hyperparameters and direct comparison with AdamW and Muon. For update $T\in\mathbb{R}^{m\times n}$ with $\operatorname{RMS}(T):=\|T\|_F/\sqrt{mn}$, we define $\mathcal R_k(T):=0.2\cdot\orient_k^{-1}(T)/\operatorname{RMS}(T)$. Here, $\orient_k^{-1}$ maps the update from the active orientation back to the original parameter layout and the rescaling ensures an update RMS of $0.2$ across matrix parameters.

\subsection{Implementation of Spectral Preconditioning}\label{sec:spectral-implementation}
We now detail the spectral preconditioning step in Algorithm~\ref{alg:normpre}, implemented via Newton-Schulz iteration for \methodG and eigenspace extraction for \methodL.

\textbf{\methodG: Newton-Schulz Implementation.}\quad For the global realization, we approximate $T_t=\operatorname{msign}(\Psi_t)$ using the Newton-Schulz iteration adopted by Muon~\citep{jordan2024muon,liu2025muon}. The iteration acts on the normalized update $\Psi_t$ and maps its singular values toward one, realizing full-spectrum preconditioning. We use five Newton-Schulz iterations throughout our experiments.

\textbf{\methodL: Exact and Sketch Implementations.}\quad For the localized realization, the preconditioner depends on a selected eigenspace of the interaction matrix $\Gamma_t=\Psi_t\Psi_t^\top$.

\begin{itemize}[itemsep=0.3em, parsep=0em, topsep=0.3em, partopsep=0em, leftmargin=2em]
    \item \textit{Exact Eigenspace Extraction.}\quad 
    The exact implementation computes the full eigendecomposition of $\Gamma_t$ to identify the up to $r$ largest active eigenvalues ($\lambda_{t,i}>1$). The selected eigenpairs $(\widetilde U_{\mathcal C_t},\Lambda_{\mathcal C_t})$ are then used in the localized preconditioner.

    \item \textit{Sketch-Based Eigenspace Extraction.}\quad For scalable training, we approximate the leading interaction eigenspace with a randomized sketch~\citep{halko2011finding}. The sketch builds a low-dimensional subspace from $\Psi_t$ and applies Rayleigh--Ritz extraction to estimate the leading eigenpairs. Modes with $\widehat\lambda_{t,i}>1$ are retained within the rank budget $r$ and used in the localized preconditioner. The complete procedure is given in Algorithm~\ref{alg:interaction-sketch}, with details in Appendix~\ref{app:spectral-implementation}.
\end{itemize}

\begin{remark}[Computational Complexity]
    \label{rem:computational-complexity}
    \muon and \methodG have the same computational complexity $\mathcal O(mn+qmns)$, where $s=\min\{m,n\}$. Both use the same $q$-step Newton-Schulz transformation ($\mathcal O(qmns)$), while the additional diagonal-Gram normalization in \methodG contributes only $\mathcal O(mn)$.
    For \methodL, the Exact implementation requires $\mathcal O(m^2n+m^3)$ computation to form and decompose the full interaction matrix. The Sketch-based implementation reduces this cost to $\mathcal O\!\left((p+1)mn\ell+(m+n)\ell^2+\ell^3\right)$ by restricting eigenspace extraction to a subspace of dimension $\ell=\min\{m,r+o\}$, ensuring scalability for $\ell\ll m$ (see Appendix~\ref{app-sec:compute_analysis} for detailed analysis).
\end{remark}

In the following, we provide the convergence guarantee for \method.

\begin{theorem}[Convergence of \method without Momentum]
    \label{thm:normpre-convergence}
    Suppose $\mathcal L$ is $L$-smooth and lower bounded by $\mathcal L_{\inf}$ and let $\Delta_0:=\mathcal L(W_0)-\mathcal L_{\inf}$. Under a fixed orientation, let $\phi_{t,i}$ denote the angle between gradient $\bar g_{t,i}$ and weight $\bar w_{t,i}$, and $\phi'_{t,j,i}$ the angle between normalized row $\psi_{t,j}$ and weight $\bar w_{t,i}$. Suppose $\sin\phi_{t,i}\geq\gamma>0$ and $(\sum_j\cos^2\phi'_{t,j,i})^{1/2}\leq\gamma'$ for all $t,i$. Define the maximum radial ratio $\nu_t:=\max_i\|\bar g_{t,i}-x_{t,i}\|_2/\|x_{t,i}\|_2$ and the spectral factor $\chi_t:=\widetilde{\sigma}_{t,1}/\widetilde{\sigma}_{t,m}$ for \methodG (assuming $\Gamma_t\succ0$ for all $t$), or $\chi_t:=\widetilde{\sigma}_{t,1}$ for \methodL (Exact). Assume there exists $\epsilon>0$ such that $\max_{0\leq t\leq T}\nu_t\gamma'\chi_t\leq1-\epsilon$. For any constant $C>0$, choosing $\eta=C/\sqrt{T+1}$ yields
    $$
    \min_{0\leq t\leq T}\|\nabla\mathcal L(W_t)\|_F\leq\frac{1}{\sqrt{T+1}}\left(\frac{\Delta_0\max\{m,n\}^{1/2}}{\epsilon\gamma C}+\frac{LC\max\{m,n\}^{3/2}}{2\epsilon\gamma}\right).
    $$
\end{theorem}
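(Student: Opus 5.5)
The plan is the standard descent-lemma argument for normalized steepest-descent methods: show that the update has a uniformly positive inner product with the gradient, and that its Frobenius norm is fixed by the RMS rescaling. Write $\bar G_t:=\nabla\mathcal L(W_t)$ in the fixed orientation (no momentum, so $M_t=G_t$, and no weight decay). The update is $U_t:=\mathcal R(T_t)$ with $\|U_t\|_F=0.2\sqrt{mn}$, a constant. By $L$-smoothness,
\begin{equation*}
\mathcal L(W_{t+1})\le\mathcal L(W_t)-\eta\langle\bar G_t,U_t\rangle_F+\tfrac{L\eta^2}{2}\|U_t\|_F^2 .
\end{equation*}
Everything reduces to a lower bound of the form $\langle\bar G_t,T_t\rangle_F/\|T_t\|_F\ge\epsilon\gamma\,\|\bar G_t\|_F/\max\{m,n\}^{1/2}$, after which I absorb the RMS constants into $C$ and the dimension factors.

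\textbf{Step 1 (tangent part).} Decompose $\bar g_{t,i}=x_{t,i}+r_{t,i}$, where $r_{t,i}=\langle\bar g_{t,i},\bar w_{t,i}\rangle\bar w_{t,i}$ is the radial residual, so that $\langle \bar G_t,T_t\rangle=\langle X_t,T_t\rangle+\langle R_t,T_t\rangle$. For the main term, write $X_t=D_t\Psi_t$, which gives $\langle X_t,T_t\rangle=\sum_i\|x_{t,i}\|\langle\psi_{t,i},(T_t)_i\rangle$.
\begin{itemize}
\item For \methodG, $T_t=\operatorname{msign}(\Psi_t)$. Then $\Psi_tT_t^\top=\widetilde U\widetilde\Sigma\widetilde U^\top$ is PSD, and its diagonal entries satisfy $(\Psi_tT_t^\top)_{ii}=\langle\psi_{t,i},(T_t)_i\rangle$. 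Using $\Gamma_{ii}=1$, I will show $(\widetilde U\widetilde\Sigma\widetilde U^\top)_{ii}\ge 1/\widetilde\sigma_{t,1}$ by Jensen/Cauchy–Schwarz: $1=(\widetilde U\widetilde\Sigma^2\widetilde U^\top)_{ii}\le \widetilde\sigma_1(\widetilde U\widetilde\Sigma\widetilde U^\top)_{ii}$.
\item For \methodL (Exact), $T_t=P^{\mathrm L}\Psi_t$ only shrinks the modes with $\lambda>1$, so $\Psi_tT_t^\top=\widetilde U f(\widetilde\Sigma)\widetilde U^\top$ with $f(\sigma)\in\{\sigma,\sigma^2\}$ and $f(\sigma)\ge\sigma^2/\widetilde\sigma_1$. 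This gives $(\Psi_tT_t^\top)_{ii}\ge 1/\widetilde\sigma_{t,1}$.
\end{itemize}
Either way, $\langle X_t,T_t\rangle\ge \widetilde\sigma_{t,1}^{-1}\sum_i\|x_{t,i}\|$.

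\textbf{Step 2 (radial cross term, the main obstacle).} Bound $|\langle R_t,T_t\rangle|$ using only $\nu_t$, $\gamma'$ and the spectrum. Row-wise, $\langle r_{t,i},(T_t)_i\rangle=\langle\bar g_{t,i},\bar w_{t,i}\rangle\langle\bar w_{t,i},(T_t)_i\rangle$. Since $(T_t)_i=\sum_j (P_t)_{ij}\psi_{t,j}$, I get $|\langle\bar w_{t,i},(T_t)_i\rangle|\le\|\bar w_{t,i}\|\,\|(P_t)_{i\cdot}\|_2(\sum_j\cos^2\phi'_{t,j,i})^{1/2}$. The preconditioner row norm is at most $\|P_t\|_{\mathrm{op}}$, which is $\widetilde\sigma_{t,m}^{-1}$ for \methodG and $1$ for \methodL. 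Combining with $\|r_{t,i}\|\le\nu_t\|x_{t,i}\|$ gives
\begin{equation*}
|\langle R_t,T_t\rangle|\le\nu_t\gamma'\|P_t\|_{\mathrm{op}}\sum_i\|x_{t,i}\|.
\end{equation*}
Comparing this with Step 1, the ratio of the cross term to the main term is at most $\nu_t\gamma'\chi_t\le1-\epsilon$, where $\chi_t=\widetilde\sigma_1/\widetilde\sigma_m$ or $\widetilde\sigma_1$, exactly matching the two definitions. I expect the delicate point to be getting the factor $\|\bar w_{t,i}\|$ to cancel against the definition of $x_{t,i}$, since the relaxed projection is not a true projection when $\|\bar w\|\ne1$. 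If the normalization does not line up, I would fall back to bounding the cross term directly through $\nu_t$ as defined (as $\|\bar g-x\|$), which is why $\nu_t$ is stated in that form. Hence $\langle\bar G_t,T_t\rangle\ge\epsilon\,\widetilde\sigma_{t,1}^{-1}\sum_i\|x_{t,i}\|$.

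\textbf{Step 3 (conversion to gradient norm and telescoping).} Use $\|x_{t,i}\|\ge\sin\phi_{t,i}\|\bar g_{t,i}\|$ (true tangent part, up to the same relaxation caveat) together with $\sin\phi\ge\gamma$, so that $\sum_i\|x_{t,i}\|\ge\gamma\|\bar G_t\|_F$. For the denominator, use $\widetilde\sigma_{t,1}\le\|\Psi_t\|_F=\sqrt m$ and $\|T_t\|_F\le\sqrt{m}$. Normalizing by $\operatorname{RMS}(T_t)$ then yields $\langle\bar G_t,U_t\rangle\gtrsim\epsilon\gamma\|\bar G_t\|_F\cdot\sqrt{mn}/\max\{m,n\}^{1/2}\cdot$const, with $\|U_t\|_F^2\lesssim mn$. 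Finally, sum the descent inequality over $t=0,\dots,T$, bound the left side by $\Delta_0$, divide by $\eta(T+1)\epsilon\gamma$, and set $\eta=C/\sqrt{T+1}$. Bounding the minimum by the average gives the two terms $\Delta_0/(C\sqrt{T+1})$ and $LC/(2\sqrt{T+1})$, with the stated $\max\{m,n\}^{1/2}$ and $\max\{m,n\}^{3/2}$ dimension factors.
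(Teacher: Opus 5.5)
Your Steps 1 and 2 are sound and follow the paper's route (Lemmas~\ref{lem:tangent-base-alignment}--\ref{lem:cross-row-radial-leakage}). The diagonal bound $(\Psi_tT_t^\top)_{ii}\ge\widetilde\sigma_{t,1}^{-1}$ is the entrywise form of the paper's $\operatorname{tr}(D_tP_t\Gamma_t)\ge\lambda_{\min}(P_t)\operatorname{tr}(D_t\Gamma_t)$ with $\lambda_{\min}(P_t)\ge\widetilde\sigma_{t,1}^{-1}$. Your row-wise Cauchy--Schwarz for $\langle R_t,T_t\rangle_F$ is exactly the paper's radial-interaction lemma.

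Both of your hedges can be dropped. First, $\|\bar g_{t,i}-x_{t,i}\|_2=|\langle\bar g_{t,i},\bar w_{t,i}\rangle|\,\|\bar w_{t,i}\|_2$, so the weight norm cancels exactly. Second, $\|x_{t,i}\|_2\ge\sin\phi_{t,i}\|\bar g_{t,i}\|_2$ holds exactly, because $x_{t,i}$ is the tangent component plus the orthogonal multiple $(1-\|\bar w_{t,i}\|_2^2)$ of the radial component. In the \methodL case of Step 1 you also use $\widetilde\sigma_{t,1}\ge 1$ (for the unselected modes), which holds since $\operatorname{tr}\Gamma_t=m$.

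The genuine gap is in your setup and Step 3. The theorem concerns the un-rescaled iteration $W_{t+1}=W_t-\eta\,\orient_k^{-1}(T_t)$; the paper's remark states it is proved without momentum, RMS scaling or weight decay. You instead insert $\mathcal R$, and with it the stated bound does not follow from your estimates. The linear term becomes $0.2\sqrt{mn}\,\langle\bar G_t,T_t\rangle_F/\|T_t\|_F$. Your Steps 1--2 give $\langle\bar G_t,T_t\rangle_F\ge\epsilon\gamma\widetilde\sigma_{t,1}^{-1}\|\bar G_t\|_F$, and $\widetilde\sigma_{t,1}\|T_t\|_F$ can be as large as $m$. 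So you only obtain $\langle\bar G_t,T_t\rangle_F/\|T_t\|_F\ge\epsilon\gamma\|\bar G_t\|_F/m$, not the $\max\{m,n\}^{-1/2}$ ratio announced in your first paragraph. Carrying this through with $\|U_t\|_F^2=0.04mn$ gives
\[
\min_{0\le t\le T}\|\nabla\mathcal L(W_t)\|_F\le\frac{1}{\sqrt{T+1}}\Bigl(\frac{5\Delta_0\sqrt{m/n}}{\epsilon\gamma C}+\frac{0.1\,LC\,m^{3/2}n^{1/2}}{\epsilon\gamma}\Bigr).
\]
For $m=n$ the second term grows like $m^2$, whereas the statement has $\max\{m,n\}^{3/2}/2$.

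You also cannot ``absorb the RMS constants into $C$''. The bound is asserted for every $C>0$ with explicit constants. For \methodL the factor $0.2\sqrt{mn}/\|T_t\|_F$ is not even constant in $t$. For \methodG it equals $0.2\sqrt n$, so RMS scaling merely replaces $C$ by $0.2\sqrt n\,C$, which changes the bound.

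The repair is to drop $\mathcal R$ and never divide by $\|T_t\|_F$. Plug $\langle\bar G_t,T_t\rangle_F\ge\epsilon\gamma m^{-1/2}\|\bar G_t\|_F$ (using $\widetilde\sigma_{t,1}\le\|\Psi_t\|_F=\sqrt m$) and $\|T_t\|_F^2\le m$ directly into the descent inequality, then use $m\le\max\{m,n\}$. This is exactly the paper's proof and yields the stated constants.
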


\begin{remark}
    Theorem~\ref{thm:normpre-convergence} analyzes deterministic \methodG and \methodL without momentum, RMS scaling, or weight decay. The condition $\max_{0\leq t\leq T}\nu_t\gamma'\chi_t\leq1-\epsilon$ is sufficient but not necessary for convergence. By bounding the product of the radial ratio ($\nu_t$), geometric non-orthogonality ($\gamma'$), and spectral factor ($\chi_t$), this condition controls the worst-case reduction in the first-order gradient signal arising from the radial interaction in the refined update.
    For any constant $C>0$, choosing $\eta=C/\sqrt{T+1}$ yields an $\mathcal O(T^{-1/2})$ convergence rate. This bound is minimized at $\eta^\star=\sqrt{\frac{2\Delta_0}{L\max\{m,n\}(T+1)}}$, yielding $\min_{0\leq t\leq T}\|\nabla\mathcal L(W_t)\|_F\leq\frac{\max\{m,n\}\sqrt{2L\Delta_0}}{\epsilon\gamma\sqrt{T+1}}$.
    We defer the proof of Theorem~\ref{thm:normpre-convergence} and the Sketch analysis (Corollary~\ref{cor:normpre-sketch-convergence}) to Appendices~\ref{app:proof-exact} and~\ref{app:proof-sketch}, respectively.
\end{remark}

\section{Experiments}
\label{sec:experiments}
This section evaluates \method across diverse pretraining settings (Section~\ref{sec:exp-performance}), analyzes its efficiency and spectral dynamics (Section~\ref{sec:efficiency}) and characterizes the performance-efficiency trade-off between \methodG and \methodL (Section~\ref{sec:G-L-comparision}).

\subsection{Scaling Experiments}\label{sec:exp-performance}
\label{sec:scaling}

\textbf{Experimental Setup.}\quad We evaluate \method across model scales, architectures and datasets. Our experiments cover GPT-2 Small on OpenWebText~\citep{radford2019language,Gokaslan2019OpenWeb}, LLaMA-\{130M, 350M, 1.3B\} on C4~\citep{touvron2023llama,raffel2020exploring}, and Qwen3-\{0.6B, 1.7B\} on Pile~\citep{yang2025qwen3,gao2020pile}. 
We compare \methodG and \methodL with AdamW~\citep{loshchilov2019decoupled}, Muon~\citep{jordan2024muon} and MANO~\citep{gu2026mano}.
\methodG uses five Newton-Schulz iterations and \methodL uses the Sketch implementation with rank $r=32$. All models are trained for $10{,}000$ steps with a sequence length of $1024$ and an effective batch size of $512$. Within each setting, all optimizers use matched model initialization, data order, training budget and evaluation protocol. Detailed configurations are provided in Appendix~\ref{app:experimental-configurations}.

\begin{table}[t]
    \centering
    \caption{\textbf{Validation loss across model scales, architectures and pretraining datasets.} GPT-2 Small results show mean $\pm$ standard deviation over three matched seeds, and larger models report single-run results under matched training budgets. Shaded columns indicate our variants. The best baseline and our outperforming variants are shown in \underline{underline} and \textbf{bold}, respectively.}
    \label{tab:scaling-summary}
    \small
    \renewcommand{\arraystretch}{1.1}
    \resizebox{\linewidth}{!}{%
    \begin{tabular}{lcccc>{\columncolor{normprebg}}c>{\columncolor{normprebg}}c}
        \toprule[1.5pt]
        \multicolumn{2}{c}{Setting} & \multicolumn{5}{c}{Validation Loss $\downarrow$} \\
        \cmidrule(lr){1-2}\cmidrule(lr){3-7}
        Model & Dataset &
        \textcolor{adamwcolor}{\textbf{AdamW}} &
        \textcolor{muoncolor}{\textbf{Muon}} &
        \textcolor{manocolor}{\textbf{MANO}} &
        \multicolumn{1}{c}{\textcolor{normpregcolor}{\textbf{\methodG}}} &
        \multicolumn{1}{c}{\textcolor{normprecolor}{\textbf{\methodL}}} \\
        \midrule
        GPT-2 Small & OpenWebText
        & 3.1444 {\scriptsize$\pm$0.0054}
        & \underline{3.1064 {\scriptsize$\pm$0.0054}}
        & 3.1156 {\scriptsize$\pm$0.0057}
        & \textbf{3.0667} {\scriptsize$\boldsymbol{\pm}$\textbf{0.0049}}
        & \textbf{3.0826 {\scriptsize$\pm$0.0034}} \\
        LLaMA-130M & C4
        & 3.1363
        & \underline{3.1019}
        & 3.1102
        & \textbf{3.0737}
        & \textbf{3.0930} \\
        LLaMA-350M & C4
        & 3.0378
        & 2.9999
        & \underline{2.9946}
        & \textbf{2.9678}
        & \textbf{2.9758} \\
        LLaMA-1.3B & C4
        & 2.9385
        & 2.9037
        & \underline{2.8963}
        & \textbf{2.8571}
        & \textbf{2.8662} \\
        Qwen3-0.6B & Pile
        & 2.8956
        & \underline{2.8335}
        & 2.8382
        & \textbf{2.7967}
        & \textbf{2.8066} \\
        Qwen3-1.7B & Pile
        & 2.6758
        & 2.6408
        & \underline{2.6205}
        & \textbf{2.5868}
        & \textbf{2.5897} \\
        \bottomrule[1.5pt]
    \end{tabular}%
    }
\end{table}

\begin{table}[t]
    \centering
    \caption{\textbf{Training Efficiency Comparisons.}
    Measurements use the same training configurations as the main experiments and are averaged over 100 optimizer steps after 20 warmup steps.}
    \label{tab:training-efficiency}
    \small
    \renewcommand{\arraystretch}{1}
    \setlength{\tabcolsep}{5pt}

    \resizebox{\linewidth}{!}{%
    \begin{tabular}{ccccccc}
        \toprule[1.5pt]
        Model
        & Dataset
        & Optimizer
        & Optimizer Latency $\downarrow$
        & E2E Step Time $\downarrow$
        & Throughput $\uparrow$
        & Peak Memory $\downarrow$ \\
        &
        &
        & (ms/step)
        & (ms/step)
        & (k tokens/s)
        & (GiB) \\
        \midrule

        \multirow{5}{*}{GPT-2 Small}
        & \multirow{5}{*}{OpenWebText}
        & \textcolor{adamwcolor}{\textbf{AdamW}}
        & 2.7
        & 3364.8
        & 155.82
        & 14.93 \\

        &
        & \textcolor{muoncolor}{\textbf{Muon}}
        & 112.8
        & 3479.8
        & 150.67
        & 14.61 \\

        &
        & \textcolor{manocolor}{\textbf{MANO}}
        & 8.8
        & 3365.6
        & 155.78
        & 14.61 \\

        &
        & \textcolor{normpregcolor}{\textbf{\methodG}}
        & 119.3\rlap{\;\textcolor{muoncolor}{\scriptsize$\uparrow$5.76\%}}
        & 3495.0\rlap{\;\textcolor{muoncolor}{\scriptsize$\uparrow$0.44\%}}
        & 150.01\rlap{\;\textcolor{muoncolor}{\scriptsize$\downarrow$0.44\%}}
        & 14.62 \\

        &
        & \textcolor{normprecolor}{\textbf{\methodL}}
        & 69.8\rlap{\;\textcolor{muoncolor}{\scriptsize$\downarrow$38.12\%}}
        & 3439.4\rlap{\;\textcolor{muoncolor}{\scriptsize$\downarrow$1.16\%}}
        & 152.44\rlap{\;\textcolor{muoncolor}{\scriptsize$\uparrow$1.17\%}}
        & 14.61 \\

        \specialrule{0.6pt}{4pt}{1pt}
        \specialrule{0.6pt}{0pt}{4pt}

        \multirow{5}{*}{LLaMA-1.3B}
        & \multirow{5}{*}{C4}
        & \textcolor{adamwcolor}{\textbf{AdamW}}
        & 25.1
        & 22121.0
        & 23.70
        & 42.34 \\

        &
        & \textcolor{muoncolor}{\textbf{Muon}}
        & 1010.1
        & 23124.2
        & 22.67
        & 37.84 \\

        &
        & \textcolor{manocolor}{\textbf{MANO}}
        & 87.1
        & 22210.3
        & 23.61
        & 37.84 \\

        &
        & \textcolor{normpregcolor}{\textbf{\methodG}}
        & 1054.2\rlap{\;\textcolor{muoncolor}{\scriptsize$\uparrow$4.37\%}}
        & 23141.8\rlap{\;\textcolor{muoncolor}{\scriptsize$\uparrow$0.08\%}}
        & 22.66\rlap{\;\textcolor{muoncolor}{\scriptsize$\downarrow$0.04\%}}
        & 37.83 \\

        &
        & \textcolor{normprecolor}{\textbf{\methodL}}
        & 333.3\rlap{\;\textcolor{muoncolor}{\scriptsize$\downarrow$67.00\%}}
        & 22399.7\rlap{\;\textcolor{muoncolor}{\scriptsize$\downarrow$3.13\%}}
        & 23.41\rlap{\;\textcolor{muoncolor}{\scriptsize$\uparrow$3.26\%}}
        & 37.84 \\
        \bottomrule[1.5pt]
    \end{tabular}%
    }
\end{table}

\textbf{GPT-2 Small Pretraining on OpenWebText.}\quad
Figure~\ref{fig:gpt2small} shows training and validation loss trajectories averaged over three matched seeds. Both variants achieve lower training and validation loss than the baselines during later training, demonstrating the effectiveness of \method. Table~\ref{tab:scaling-summary} confirms that both variants outperform all three baselines in mean best validation loss, with \methodG achieving the lowest value. Compared with the strongest baseline Muon, \methodG and \methodL reduce validation loss by $0.0397$ and $0.0238$, respectively.

\textbf{LLaMA Pretraining on C4 and Qwen3 Pretraining on Pile.}\quad
We extend the comparison to LLaMA-\{130M, 350M, 1.3B\} on C4 and Qwen3-\{0.6B, 1.7B\} on Pile. Table~\ref{tab:scaling-summary} and Figures~\ref{fig:llama130m_c4_trainval_loss}--\ref{fig:llama1p3b_c4_trainval_loss} show that both variants consistently outperform all three baselines across these diverse architectures, with \methodG achieving the lowest loss. For LLaMA, the advantages over the strongest baseline are most pronounced at the largest evaluated scale (1.3B), where \methodG and \methodL reduce validation loss by $0.0392$ and $0.0301$, respectively. These optimization gains extend similarly to Qwen3. Notably, \methodL performs closely to \methodG at the 1.7B scale with a minimal loss difference of $0.0029$ (Qwen3 loss curves are deferred to Appendix~\ref{app-sec:exp-res}).

\subsection{Training Efficiency and Spectral Dynamics}\label{sec:efficiency}
\textbf{Training Efficiency.}\quad Table~\ref{tab:training-efficiency} reports training efficiency on GPT-2 Small and LLaMA-1.3B under the same configurations as the main experiments. 
As summarized in Table~\ref{tab:optimizer-complexity-summary} and Remark~\ref{rem:computational-complexity} (Appendix~\ref{app-sec:compute_analysis}), \methodG shares Muon's asymptotic complexity, but its additional normalization increases optimizer latency by $5.76\%$ and $4.37\%$ relative to Muon on the two models, respectively. \methodL approximates the interaction subspace via sketching, reducing optimizer latency by $38.12\%$ and $67.00\%$. 
These optimizer-level differences have a smaller effect on end-to-end training efficiency because optimizer updates account for only a small fraction of step time, which is largely spent on forward and backward computation. Relative to Muon, \methodG increases step time by about $0.44\%$ and $0.08\%$ on both models, while \methodL reduces it by $1.16\%$ and $3.13\%$, with corresponding improvements in throughput. 
Both variants remain slower per step than AdamW and MANO due to the additional spectral processing, but match the whole-training peak memory of Muon and MANO. Further discussions and efficiency results on Qwen3 are provided in Appendix~\ref{app-sec:exp-res}.

\textbf{Spectral Dynamics.}\quad
Figures~\ref{fig:before_and_after_spectrum_selected}--\ref{fig:row_spectral_transformation} validate our hierarchical Normalize-Then-Precondition design, revealing two key patterns. \textit{(a) Normalization yields an anisotropic base update.} Equalizing row norms removes scale weighting in the Gram matrix, giving smaller-norm rows greater relative influence. This redistributes the spectral mass and might raise eigenvalues across multiple ranks (dotted curves over solid curves). Crucially, this anisotropy persists throughout training, necessitating further refinement. Table~\ref{tab:interaction-source-ablation} confirms that constructing the preconditioner from $\Psi$ rather than $X$ yields lower losses. \textit{(b) Localized preconditioning captures the dominant transformation energy.} Building on the normalized base, global preconditioning contracts modes above one and amplifies positive modes below one. Decomposing its spectral transformation energy (squared singular-value change $\sum_{\lambda_i>0}(\sqrt{\lambda_i}-1)^2$) reveals that the top-32 modes ($\lambda_i>1$) consistently account for approximately $62\%$--$67\%$ across checkpoints. By concentrating strictly on these modes, localized preconditioning covers a substantial portion of the global transformation to yield the optimization gains observed in Figure~\ref{fig:llama_c4}. Further details and training dynamics are provided in Appendix~\ref{app-sec:dynamics}.

\begin{figure}[t]
    \centering
    \subfloat[]{
        \includegraphics[width=0.32\linewidth]{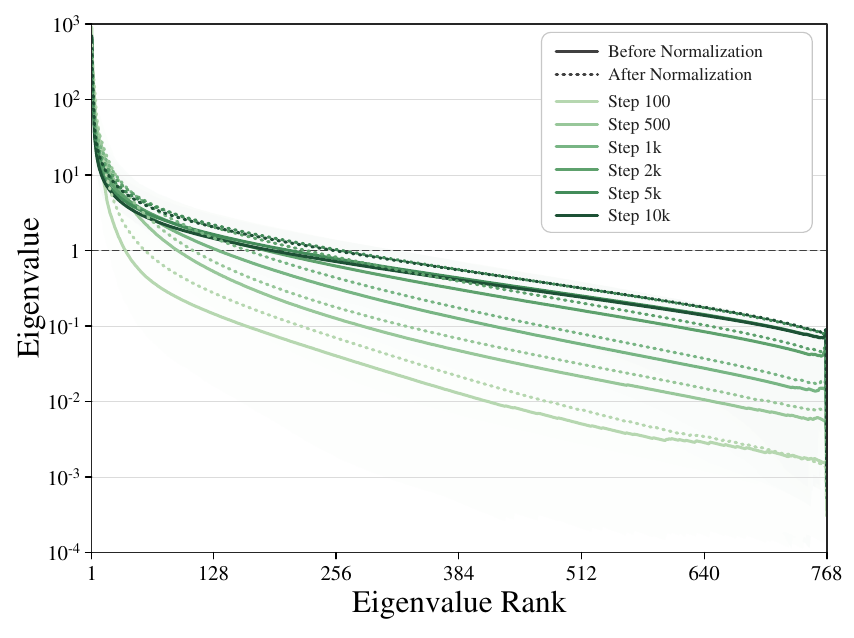}
        \label{fig:before_and_after_spectrum_selected}
    }
    \subfloat[]{
        \includegraphics[width=0.32\linewidth]{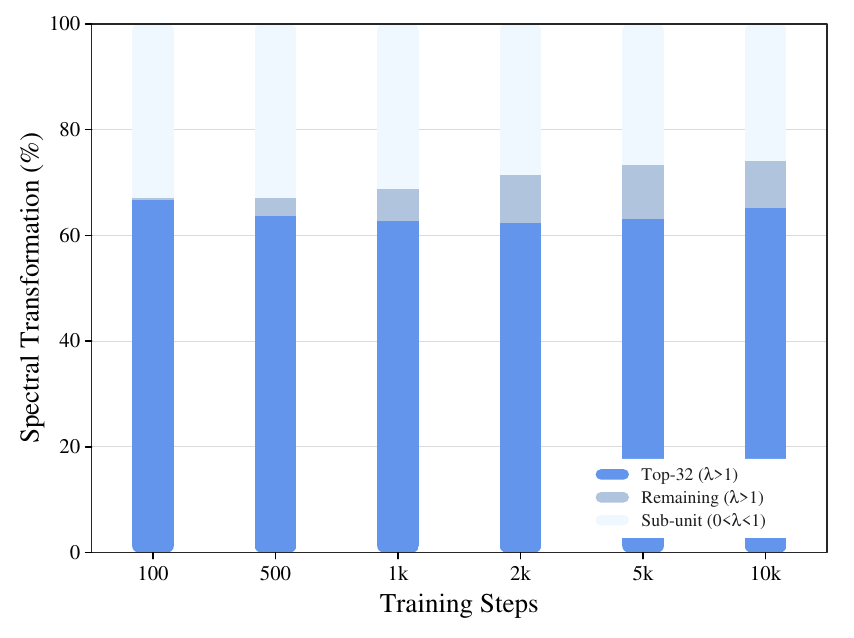}
        \label{fig:row_spectral_transformation}
    }
    \subfloat[]{
        \includegraphics[width=0.32\linewidth]{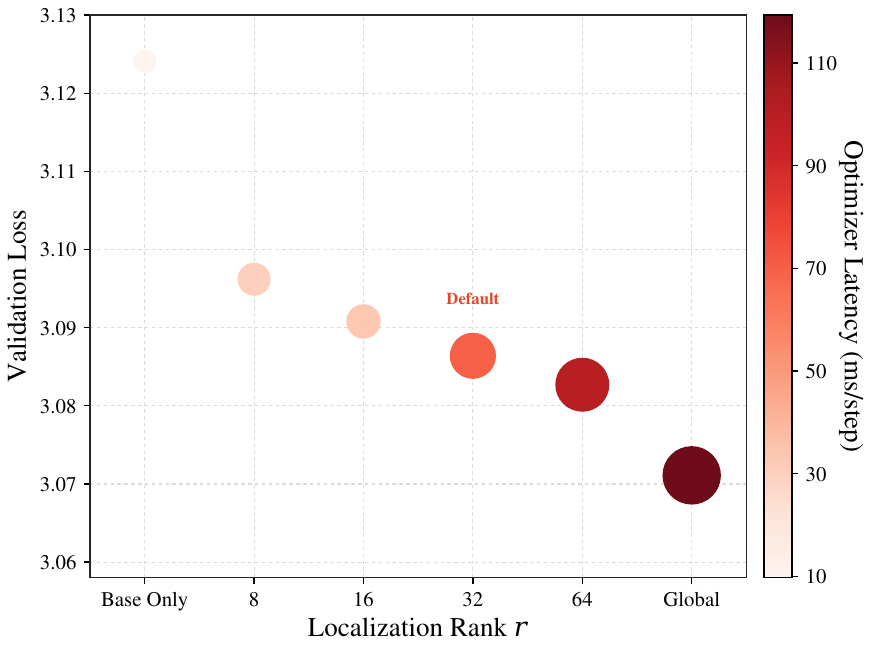}
        \label{fig:gpt2small_rank_localization_frontier_new}
    }
    \caption{(a) Eigenspectra before and after marginal normalization; (b) Spectral transformation percentages under global and localized preconditioning; (c) Performance-efficiency trade-off.}
    \label{fig:spectral_dynamics}
\end{figure}

\subsection{Comparisons of \methodG and \methodL}\label{sec:G-L-comparision}
Drawing on the preceding findings, we discuss the complementary strengths of two variants.

\textit{(a) NormPre-G yields superior performance via full-spectrum transformations.} 
By operating on the entire eigenspectrum, \methodG consistently achieves the lowest validation loss across all evaluations. This indicates that transforming the long-tail modes alongside the dominant spectral energy provides crucial optimization benefits that push the absolute performance limit.

\textit{(b) NormPre-L offers a performance-efficiency trade-off via local-spectrum transformations.}
The normalized base update motivates a regularized steepest-descent problem limiting deviations from this reference, giving \methodL a principled formulation (Propositions~\ref{prop:regularized_spectral_sd} and~\ref{prop:optimal-rank-constrained-refinement}). 
Although empirical results favor the broader transformation of \methodG in validation loss, Figure~\ref{fig:gpt2small_rank_localization_frontier_new} illustrates a clear trade-off on GPT-2 Small that larger $r$ lowers validation loss at higher optimizer cost.
We use $r=32$ as the default spectral budget to balance performance and efficiency. At this budget, \methodL yields clear optimization gains with lower optimizer latency and end-to-end step time than \methodG (Table~\ref{tab:training-efficiency}), serving as a practical and scalable alternative.

\section{Related Work}
\label{sec:related}
\textbf{LLM Optimizers.}\quad Recent optimizers span coordinate-wise methods such as AdamW~\citep{kingma2015iclr-adam,loshchilov2019decoupled}, Lion~\citep{chen2023symbolic}, Sophia~\citep{liu2024sophia}, and Cautious Optimizers~\citep{liang2026cautious}, and block-wise approaches like Adam-mini~\citep{zhang2025adam} and Blockwise LR~\citep{wang2025sharpness}. Alongside these, matrix-level methods like K-FAC~\citep{martens2015optimizing}, Shampoo~\citep{gupta2018shampoo,shi2023distributed}, and SOAP~\citep{vyas2025soap} construct preconditioners from curvature or accumulated statistics. We focus on matrix optimizers operating directly on update geometry via orthogonalization or row/column normalization.

\textbf{Orthogonalization and Normalization.}\quad Matrix structure is increasingly exploited via orthogonalization or lightweight row/column normalization. On the orthogonalization front, Muon applies a matrix-sign transformation to momentum via Newton-Schulz iterations~\citep{jordan2024muon,liu2025muon}, while broader norm-constrained frameworks connect these updates to steepest descent and spectral spheres~\citep{bernstein2024old,pethick:hal-04941364,xie2026controlled}. Alternatively, row/column normalization offers a lightweight way. For example, MANO combines momentum projection with alternating row/column normalization~\citep{gu2026mano}, while RMNP uses only row normalization~\citep{deng2026rmnp}. MOGA derives such normalized updates from mean-normalized operator norms under the steepest-descent perspective~\citep{xu2026width}.

\textbf{Closely Relevant Work.}\quad Several methods retain orthogonalization as a core operation and incorporate normalization or adaptive scaling. One line modifies the post-orthogonalization update: Muon+ applies row/column normalization to the orthogonalized output~\citep{zhang2026muon+}, while NorMuon and AdaMuon introduce neuron-wise and element-wise adaptive scaling~\citep{li2025normuon,si2025adamuon}. 
Another line applies pre-orthogonalization normalization for numerical stability. For instance, SWAN normalizes instantaneous gradients before whitening~\citep{ma2024swan} and MuonEq~\citep{chang2026muoneq} equilibrates momentum to improve Newton-Schulz conditioning.
While these approaches discuss the utility of incorporating normalization, it remains underexplored to formulate a systematic hierarchical framework and investigate if targeted spectral schemes can be effectively developed.
We establish the \textbf{Normalize-Then-Precondition} framework, introducing two realizations of spectral preconditioning from spectral steepest descent and its regularized form with leading mode selection.

\section{Conclusion}
This paper formalizes the Normalize-Then-Precondition framework to hierarchically organize marginal scales and interactions in matrix optimizers. Building on this, we propose \method, which achieves superior LLM pretraining performance over established baselines with a clear performance-efficiency trade-off. We highlight three promising directions for future research addressing current limitations: (a) Developing hardware-aware implementations to improve the efficiency of spectral transformations; (b) Scaling our empirical validation beyond 1.7B parameters; (c) Exploring adaptive spectral schemes to narrow the performance gap between localized and global preconditioning while preserving efficiency.
Overall, our work provides a systematic framework for integrating normalization and spectral preconditioning, inspiring more effective and efficient LLM training.

\subsection*{AI use statement}
We used OpenAI Codex and Anthropic Claude Code to assist with core code implementation, data processing, experimental evaluation, translation, and the verification and refinement of mathematical proofs. Additionally, we used these tools for literature search, manuscript drafting and revision. The research questions and ideas, conceptual framework, methodology, proof strategies and initial proof drafts, and experimental design were developed by the authors without AI assistance. We have reviewed all AI-assisted work, including manuscript text, source code, data processing, and training and evaluation pipelines. We retain full responsibility for the final content of this work, including text, claims or artifacts produced with the aid of generative AI.

\subsection*{Ethics statement}
This work studies effective and efficient optimization methods for large language model training. It does not involve human-subject experiments, collection of personal information or deployment in real-world decision-making systems. Our experiments use established language-modeling datasets, including OpenWebText, C4 and Pile, and we do not introduce or release new training data. We follow standard research practices for experimental evaluation, reporting and reproducibility.

\subsection*{Reproducibility statement}
Section~\ref{sec:method} provides a detailed description of the proposed optimizer \method, including two variants \methodG and \methodL. Section~\ref{sec:experiments} presents the experimental setup, datasets, evaluation protocol and main results, with additional configurations, results, dynamics analyses and ablations provided in Appendix~\ref{app:experimental-details}. Implementation details for spectral preconditioning and computational complexity analyses are given in Appendices~\ref{app:spectral-implementation} and~\ref{app-sec:compute_analysis}. The assumptions, derivations and complete proofs supporting our theoretical results are provided in Appendix~\ref{app:proof}. A repository containing the source code and materials for reproducing the experiments is linked in the abstract.

\bibliography{reference}
\bibliographystyle{unsrtnat}

\newpage
\appendix
\renewcommand{\appendixpagename}{\centering \LARGE Appendix}
\appendixpage
\startcontents[section]
\printcontents[section]{l}{1}{\setcounter{tocdepth}{3}}
\input{Appendix/appendix_exp}

\input{Appendix/appendix_implementation_complexity}
\input{Appendix/appendix_prop}
\input{Appendix/appendix_convergence}

\end{document}

%% file: math_commands.tex
\usepackage{amsmath,amsfonts,bm}

\def\eqref#1{equation~\ref{#1}}

\def\1{\bm{1}}

\DeclareMathAlphabet{\mathsfit}{\encodingdefault}{\sfdefault}{m}{sl}
\SetMathAlphabet{\mathsfit}{bold}{\encodingdefault}{\sfdefault}{bx}{n}

%% file: Appendix/appendix_exp.tex
\newpage

\section{Experimental Details}
\label{app:experimental-details}

\subsection{Experimental Configurations}\label{app:experimental-configurations}

\textbf{Models and Data.}\quad
Our experiments use GPT-2 Small on OpenWebText~\citep{radford2019language,Gokaslan2019OpenWeb}, LLaMA-\{130M, 350M, 1.3B\} on C4~\citep{touvron2023llama,raffel2020exploring} and Qwen3-\{0.6B, 1.7B\} on Pile~\citep{yang2025qwen3,gao2020pile}. GPT-2 Small is reproduced using the nanoGPT codebase~\citep{Karpathy2022}. We follow the general pretraining setup in~\citet{zhao2024galore,raffel2020exploring}. Table~\ref{tab:model-training-configurations} summarizes the model and training configurations.

OpenWebText is processed with the GPT-2 BPE tokenizer, and the preprocessed training and validation data are loaded into CPU memory before training. For C4, we tokenize the English corpus with the T5-base tokenizer. For Pile, we recover documents from the Pythia/Pile token stream and re-tokenize them with the Qwen3 tokenizer. For C4 and Pile, we construct fixed preprocessed datasets of $5{,}120{,}000$ training sequences and $2{,}048$ validation sequences, each with length $1024$. Documents are processed independently by truncating long documents and padding shorter ones without cross-document packing. The resulting data order is fixed and shared across all optimizer runs. With sequence length $1024$, effective batch size $512$ and $10{,}000$ optimization steps, each training trajectory processes approximately $5.24$B nominal token positions.

\begin{table}[!ht]
    \centering
    \caption{Model and Training Configurations.}
    \label{tab:model-training-configurations}
    \small
    \renewcommand{\arraystretch}{1.2}
    \resizebox{\linewidth}{!}{
    \begin{tabular}{lccccccccccc}
    \toprule[1.5pt]
    Model & Model Size & Dataset & Layers & Hidden & FFN & Heads & Seq. Len. & Eff. Batch & Base Peak LR & Warmup & Steps \\
    \midrule
    GPT-2 Small & 124M & OpenWebText & 12 & 768 & 3,072 & 12 & 1,024 & 512 & $6\times10^{-4}$ & 2,000 & 10,000 \\
    LLaMA-130M & 130M & C4 & 12 & 768 & 2,048 & 12 & 1,024 & 512 & $6\times10^{-4}$ & 1,000 & 10,000 \\
    LLaMA-350M & 350M & C4 & 24 & 1,024 & 2,736 & 16 & 1,024 & 512 & $3\times10^{-4}$ & 1,000 & 10,000 \\
    LLaMA-1.3B & 1.3B & C4 & 24 & 2,048 & 5,461 & 32 & 1,024 & 512 & $3\times10^{-4}$ & 1,000 & 10,000 \\
    Qwen3-0.6B & 0.6B & Pile & 28 & 1,024 & 3,072 & 16 & 1,024 & 512 & $3\times10^{-4}$ & 1,000 & 10,000 \\
    Qwen3-1.7B & 1.7B & Pile & 28 & 2,048 & 6,144 & 16 & 1,024 & 512 & $3\times10^{-4}$ & 1,000 & 10,000 \\
    \toprule[1.5pt]
    \end{tabular}}
\end{table}

\textbf{Optimizers.}\quad
We compare \methodG and \methodL with AdamW~\citep{loshchilov2019decoupled}, Muon~\citep{jordan2024muon} and MANO~\citep{gu2026mano}. Table~\ref{tab:optimizer-configurations} summarizes the default optimizer configurations. We use $(\beta_1,\beta_2)=(0.9,0.95)$ for AdamW and momentum coefficient $\mu=0.95$ for Muon, MANO and both \method variants. Muon and \methodG use five Newton-Schulz iterations. \methodL uses the Sketch-based implementation with interaction rank $r=32$, sketch oversampling $o=8$ and one power iteration $p=1$. Both \method variants match the post-preconditioning update RMS to a target value of $0.2$ before the parameter update. For GPT-2 Small, Muon follows the Keller--Jordan convention~\citep{jordan2024muon} for hidden matrix parameters, using an effective peak matrix learning rate of $0.02$ together with width-dependent update scaling. For LLaMA and Qwen3, Muon follows the scalable RMS-matching convention~\citep{liu2025muon} and uses the base learning rate of the corresponding model setting. Muon, MANO and both \method variants are applied to hidden two-dimensional attention and MLP weight matrices, with token embeddings, output heads, normalization parameters, biases and other one-dimensional parameters optimized by AdamW.

\begin{table}[!ht]
    \centering
    \caption{Default Optimizer Configurations. A dash indicates that the corresponding hyperparameter is not applicable.}
    \label{tab:optimizer-configurations}
    \small
    \renewcommand{\arraystretch}{1}
    \resizebox{0.68\linewidth}{!}{
    \begin{tabular}{lccccc}
    \toprule[1.5pt]
    Hyperparameter & AdamW & Muon & MANO & \methodG & \methodL \\
    \midrule
    $\beta_1$ & 0.9 & -- & -- & -- & -- \\
    $\beta_2$ & 0.95 & -- & -- & -- & -- \\
    Momentum $\mu$ & -- & 0.95 & 0.95 & 0.95 & 0.95 \\
    Newton-Schulz steps & -- & 5 & -- & 5 & -- \\
    Weight decay & 0.1 & 0.1 & 0.1 & 0.1 & 0.1 \\
    Interaction rank $r$ & -- & -- & -- & -- & 32 \\
    Sketch oversampling $o$ & -- & -- & -- & -- & 8 \\
    Power iterations $p$ & -- & -- & -- & -- & 1 \\
    \bottomrule[1.5pt]
    \end{tabular}
    }
\end{table}

\textbf{Training.}\quad All models are trained for $10{,}000$ optimization steps with sequence length $1024$ and effective batch size $512$. We use linear warmup followed by cosine decay to $10\%$ of the peak learning rate, with the model-specific learning rates and warmup steps given in Table~\ref{tab:model-training-configurations}. Weight decay is $0.1$ and gradients are clipped at $1.0$. All models are evaluated every $500$ optimization steps. GPT-2 Small uses $200$ randomly sampled validation mini-batches per evaluation, while LLaMA and Qwen3 evaluate on the complete fixed validation split of $2{,}048$ sequences. Within each setting, all compared optimizers use matched model initialization, data order, training budget, base learning-rate schedule and validation protocol. All experiments use the same PyTorch training codebase and NVIDIA A100 80GB GPUs. GPT-2 Small uses one GPU per training trajectory and LLaMA and Qwen3 use four-rank data parallelism.

\subsection{Additional Experimental Results}\label{app-sec:exp-res}
\textbf{Training and Validation Loss for Qwen3 Models.}\quad
We provide the training and validation loss curves for Qwen3-0.6B and Qwen3-1.7B on Pile in Figure~\ref{fig:qwen_pile}.
\begin{figure}[!h]
    \centering

    \subfloat[Qwen3-0.6B on Pile]{
        \begin{minipage}[t]{0.4\linewidth}
            \centering
            \includegraphics[width=\linewidth]{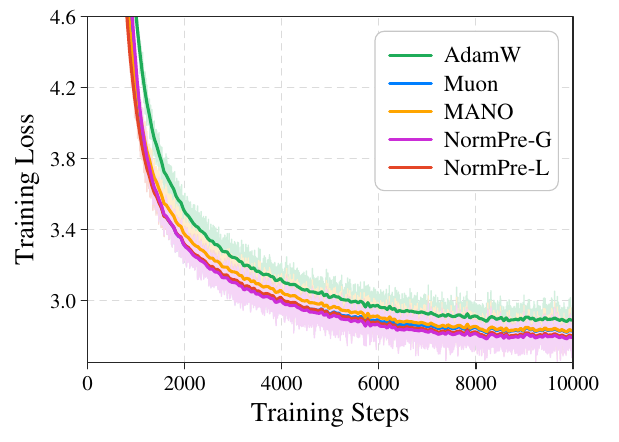}

            \vspace{2pt}

            \includegraphics[width=\linewidth]{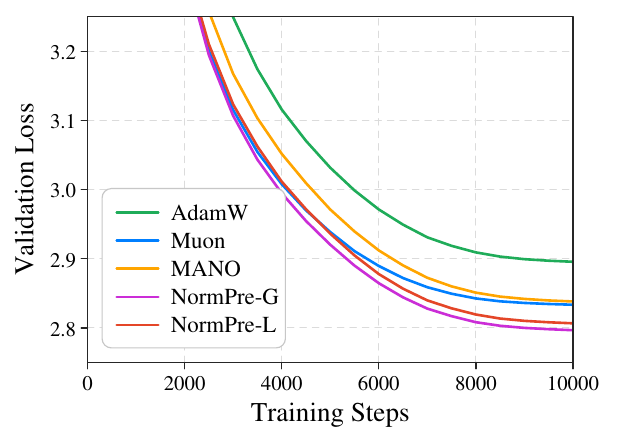}
            \label{fig:qwen3_0p6b_pile_trainval_loss}
        \end{minipage}
    }\hspace{-6pt}%
    \subfloat[Qwen3-1.7B on Pile]{
        \begin{minipage}[t]{0.4\linewidth}
            \centering
            \includegraphics[width=\linewidth]{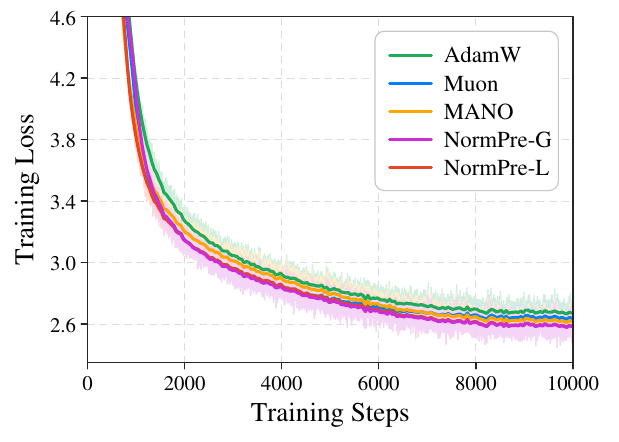}

            \vspace{2pt}

            \includegraphics[width=\linewidth]{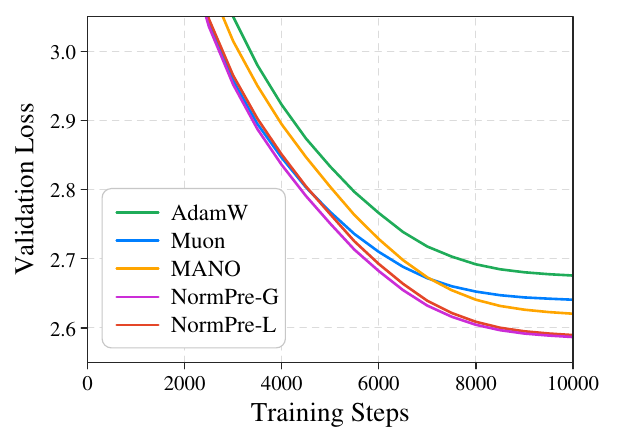}
            \label{fig:qwen3_1p7b_pile_trainval_loss}
        \end{minipage}
    }
    \caption{\textbf{Qwen3 Models on Pile.} Training and validation loss for Qwen3-\{0.6B, 1.7B\} with AdamW, Muon, MANO, \methodG and \methodL. For training loss, dark curves denote 50-step moving averages and light curves the raw trajectories.}
    \label{fig:qwen_pile}
\end{figure}

\textbf{Training Efficiency Configurations.}\quad
Efficiency measurements mirror the main pretraining configurations (including model architecture, sequence length of 1024, effective batch size of 512, and gradient clipping) with the following specific profiling protocols:
\begin{itemize}[leftmargin=*,nosep]
    \item \textit{Precision \& Data Loading:} GPT-2 Small and LLaMA use FP32 parameters with BF16 autocast, while Qwen3 uses pure BF16. OpenWebText is fully materialized in host memory, whereas C4 and Pile are read dynamically from persistent NumPy memmaps (transferring only the active batch to the GPU to avoid full-dataset materialization).
    \item \textit{Measurement Window:} We profile 100 optimizer steps following 20 warmup steps. To isolate core algorithmic overhead, we disable optimizer diagnostics (e.g., descent alignment), validation, checkpointing, W\&B logging, and \texttt{torch.compile}.
    \item \textit{Metric Definitions:} Optimizer latency isolates \texttt{optimizer.step()} using strict CUDA synchronization immediately before and after the call. E2E step time encompasses the complete training loop (data transfer, forward/backward passes, gradient accumulation, clipping, and optimizer update). Throughput is computed as global tokens per E2E step time, and peak memory tracks the maximum CUDA allocated memory during a measured step.
    \item \textit{TF32 Settings:} Unless otherwise stated, TF32 refers to the CUDA matrix-multiplication TF32 setting. To isolate intrinsic algorithmic scaling from hardware-specific Tensor Core accelerations at small dimensions, the main GPT-2 Small efficiency results disable TF32.
\end{itemize}

\textbf{Training Efficiency on Qwen3-1.7B.}\quad
We further evaluate training efficiency on Qwen3-1.7B with Pile under the main experimental configuration. Similar to the observations in Section~\ref{sec:efficiency}, relative to Muon, \methodG increases optimizer latency by $6.77\%$, with an end-to-end step-time overhead of $0.53\%$. \methodL reduces optimizer latency by $57.16\%$, yielding a $4.21\%$ reduction in step time and a $4.41\%$ increase in throughput. Both variants remain slower per step than AdamW and MANO. All evaluated optimizers have the same whole-training peak allocated memory of $53.17$ GiB per GPU. This shared peak is determined by forward and backward computation, whose memory peak exceeds that of the optimizer step.

\begin{table}[t]
    \centering
    \caption{\textbf{Training Efficiency on Qwen3-1.7B.}
    Measurements use the same training configuration as the main Qwen3-1.7B experiment and are averaged over 100 optimizer steps after 20 warmup steps. We report global throughput across 4 GPUs and per-GPU peak allocated memory.}
    \label{tab:qwen-training-efficiency}
    \small
    \renewcommand{\arraystretch}{1.2}
    \setlength{\tabcolsep}{5pt}
    \resizebox{\linewidth}{!}{%
    \begin{tabular}{ccccccc}
        \toprule[1.5pt]
        Model
        & Dataset
        & Optimizer
        & Optimizer Latency $\downarrow$
        & E2E Step Time $\downarrow$
        & Throughput $\uparrow$
        & Peak Memory $\downarrow$ \\
        &
        &
        & (ms/step)
        & (ms/step)
        & (k tokens/s)
        & (GiB) \\
        \midrule
    
        \multirow{5}{*}{Qwen3-1.7B}
        & \multirow{5}{*}{Pile}
        & \textcolor{adamwcolor}{\textbf{AdamW}}
        & 18.3
        & 12102.7
        & 43.32
        & 53.17 \\
    
        &
        & \textcolor{muoncolor}{\textbf{Muon}}
        & 940.0
        & 13050.4
        & 40.17
        & 53.17 \\
    
        &
        & \textcolor{manocolor}{\textbf{MANO}}
        & 118.7
        & 12217.4
        & 42.91
        & 53.17 \\
    
        &
        & \textcolor{normpregcolor}{\textbf{\methodG}}
        & 1003.6\rlap{\;\textcolor{muoncolor}{\scriptsize$\uparrow$6.77\%}}
        & 13119.7\rlap{\;\textcolor{muoncolor}{\scriptsize$\uparrow$0.53\%}}
        & 39.96\rlap{\;\textcolor{muoncolor}{\scriptsize$\downarrow$0.52\%}}
        & 53.17 \\
    
        &
        & \textcolor{normprecolor}{\textbf{\methodL}}
        & 402.7\rlap{\;\textcolor{muoncolor}{\scriptsize$\downarrow$57.16\%}}
        & 12501.4\rlap{\;\textcolor{muoncolor}{\scriptsize$\downarrow$4.21\%}}
        & 41.94\rlap{\;\textcolor{muoncolor}{\scriptsize$\uparrow$4.41\%}}
        & 53.17 \\
        \bottomrule[1.5pt]
    \end{tabular}%
    }
\end{table}

\subsection{Additional Details on Spectral and Training Dynamics}\label{app-sec:dynamics}
\textbf{Spectral Dynamics.}\quad
We collect spectral diagnostics from a NormPre-L training run on GPT-2 Small with OpenWebText and random seed 1337. We use the Sketch implementation with rank $r=32$, oversampling $o=8$, and one power iteration ($p=1$). Diagnostics cover the 48 attention and MLP weight matrices at checkpoints corresponding to steps 100 (101), 500 (501), 1,000 (1,001), 2,000 (2,001), 5,000 (5,001), and 10,000 (9,999). Figure~\ref{fig:before_and_after_spectrum_selected} presents row-active spectra, and Figure~\ref{fig:column_before_and_after_spectrum_selected} provides the column-active counterpart.

\begin{figure}[t]
    \centering
    \subfloat[]{
        \includegraphics[width=0.4\linewidth]{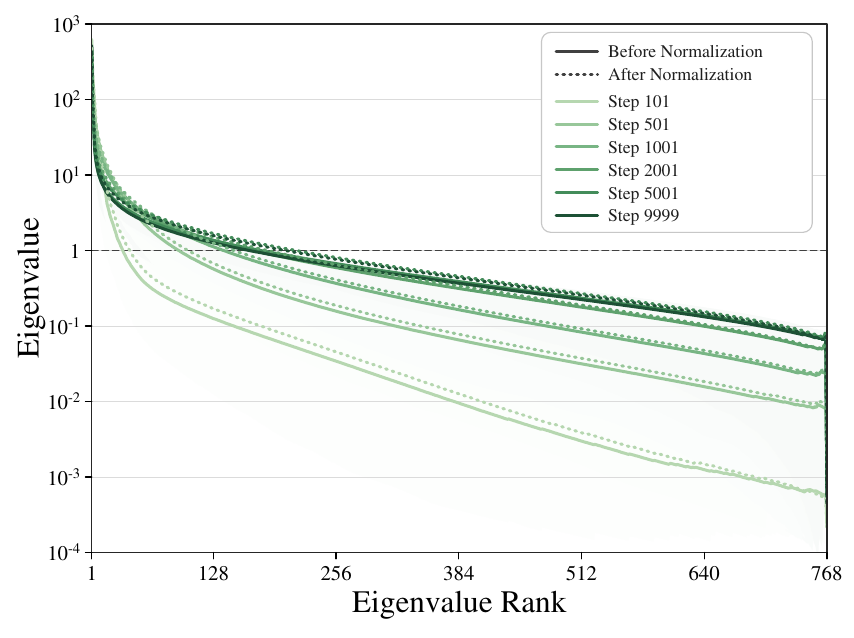}
        \label{fig:column_before_and_after_spectrum_selected}
    }\hspace{12pt}%
    \subfloat[]{
        \includegraphics[width=0.4\linewidth]{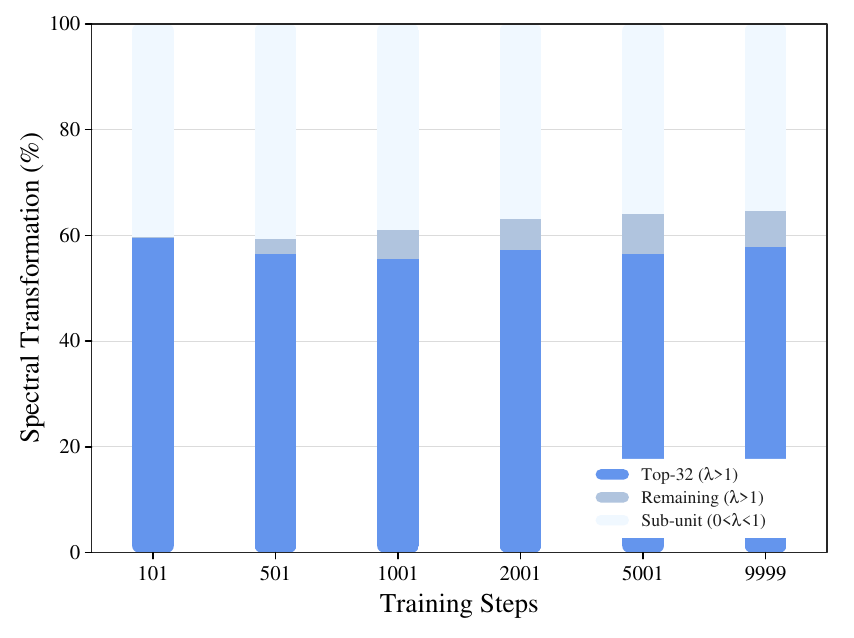}
        \label{fig:column_spectral_transformation}
    }
    \caption{\textbf{Spectral Dynamics.} (a) Eigenspectra before and after marginal normalization in the column-active orientation. (b) Spectral transformation percentages across modes under global and localized preconditioning (column-active orientation).}
    \label{fig:app-spectral}
\end{figure}

\textbf{(1) Eigenspectra Before and After Normalization.}\quad
Following Section~\ref{sec:method}, let $X$ denote the update after relaxed tangent processing (i.e., before normalization) and $\Psi$ denote its marginally normalized counterpart. Both are expressed in the active orientation and measured before spectral preconditioning. To isolate changes in spectral shape from differences in global update magnitude, we compare
$$
\widehat G_X=\frac{d}{\operatorname{tr}(XX^\top)}XX^\top, \quad \Gamma=\Psi\Psi^\top,
$$
where $d$ is the active-axis dimension. Both matrices have trace $d$ and then the same total spectral mass. The scalar rescaling of $XX^\top$ preserves its eigenvalue ratios and relative row-norm differences, which is used only for analysis in spectral dynamics.
We sort the nonzero eigenvalues of $\widehat G_X$ and $\Gamma$ in descending order and average them across matrices at each rank. The shaded regions show the corresponding 25th--75th percentile ranges. Different colors correspond to different training steps. Solid curves show the spectra before marginal normalization ($\widehat G_X$), and dotted curves show those after normalization ($\Gamma$). The horizontal line at $\lambda=1$ indicates the target under full-spectrum preconditioning and localized preconditioning.

\textit{Effect of Marginal Normalization on the Spectrum.}
Before normalization, correlations between row directions are weighted by the products of their row norms in the Gram matrix. Normalization gives each row unit norm, removing these weights and increasing the relative influence of smaller-norm rows. To illustrate this effect, consider mutually orthogonal rows with norms $a_1,\ldots,a_d>0$. The trace-matched Gram matrix is diagonal, with eigenvalues $d a_i^2/\sum_j a_j^2=a_i^2/\overline{a^2}$, where $\overline{a^2}=\frac{1}{d}\sum_j a_j^2$. After marginal normalization, the rows remain orthogonal and each has unit norm, so all eigenvalues equal one. Thus, the eigenvalue corresponding to a row increases if its squared norm is below the average and decreases if it is above the average. For nonorthogonal rows, eigenvalues do not directly correspond to individual row norms. Removing row-norm weighting still changes the spectral structure formed jointly by the rows and might increase eigenvalues across multiple ranks. This provides a possible explanation for the dotted curves lying above the solid curves over many ranks in Figures~\ref{fig:before_and_after_spectrum_selected} and \ref{fig:column_before_and_after_spectrum_selected}.

\textbf{(2) Global and Localized Spectral Preconditioning.}\quad
We use row-active and column-active snapshots from the same NormPre-L trajectory described above. For each snapshot, we compare the spectral transformations induced by global and localized preconditioning on the same normalized interaction spectrum.

Let $\{\lambda_i\}$ denote the nonzero eigenvalues of $\Gamma=\Psi\Psi^\top$. Global preconditioning maps each corresponding singular value $\sqrt{\lambda_i}$ to one. We define the spectral transformation energy of each mode as its squared singular-value change, $e_i=(\sqrt{\lambda_i}-1)^2$, and partition the total into three components:
$$
E_{\mathrm{top}}=\sum_{i\in C}e_i,
\quad
E_{\mathrm{remain}}=\sum_{\substack{\lambda_i>1,i\notin C}}e_i,
\quad
E_{\mathrm{sub}}=\sum_{0<\lambda_i<1}e_i,
$$
where $C$ contains the indices of the largest at most $r=32$ eigenvalues exceeding one. Modes with $\lambda_i=1$ contribute zero. The total spectral transformation energy is $E_{\mathrm{global}}=E_{\mathrm{top}}+E_{\mathrm{remain}}+E_{\mathrm{sub}}$. For each matrix, we express each component as a percentage of $E_{\mathrm{global}}$. The stacked bars show these percentages averaged equally across the 48 matrices at each checkpoint. This decomposition distinguishes the spectral scope of the two variants. Localized preconditioning contracts only the modes in $C$ and leaves the others unchanged. Global preconditioning additionally contracts the remaining modes above one and amplifies the positive modes below one.

Figure~\ref{fig:column_spectral_transformation} presents the column-active results, which follow a similar pattern to the row-active results in Figure~\ref{fig:row_spectral_transformation}. The top-32 modes account for approximately $55\%$--$60\%$ of the total spectral transformation energy across checkpoints.

\begin{figure}[t]
    \centering
    \subfloat[Gradient Norm]{
        \includegraphics[width=0.4\linewidth]{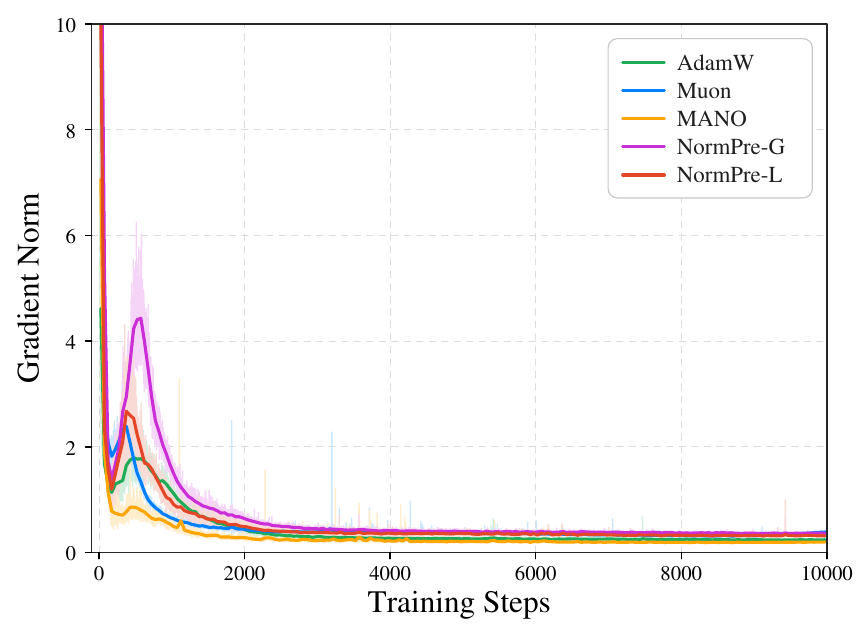}
        \label{fig:gpt2small_gradient_norm_smoothed}
    }
    \hspace{12pt}%
    \subfloat[Descent Alignment]{
        \includegraphics[width=0.4\linewidth]{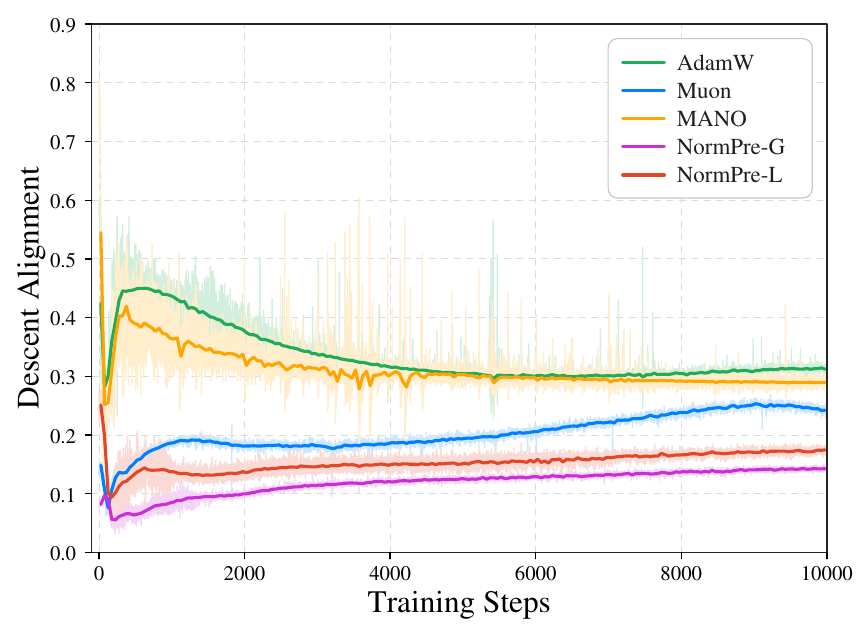}
        \label{fig:gpt2small_descent_cosine_smoothed}
    }
    \caption{\textbf{Training Dynamics.} (a) Pre-clipping global gradient norm over all trainable parameters. (b) Descent alignment between the gradient and optimizer update direction over matrix-optimized parameters. Dark curves show 50-step moving averages and light curves show the corresponding raw trajectories.}
    \label{fig:training_dynamics}
\end{figure}

\textbf{Training Dynamics.}\quad
We examine training dynamics on GPT-2 Small with OpenWebText. Gradient norm is measured over all trainable parameters before clipping. Descent alignment is computed over the attention and MLP weight matrices. For each matrix $W$, let $G_t^{(W)}$ denote its gradient and $\Phi_t^{(W)}$ its optimizer update direction before learning-rate scaling, excluding decoupled weight decay. We report
$$
\operatorname{Align}_t
=\frac{1}{|\mathcal{M}|}
\sum_{W\in\mathcal{M}}
\frac{\langle G_t^{(W)},\Phi_t^{(W)}\rangle_F}
{\|G_t^{(W)}\|_F\|\Phi_t^{(W)}\|_F},
$$
where $\mathcal{M}$ is the set of evaluated weight matrices. This averages the per-matrix cosine similarities. Each cosine equivalently measures the alignment between the optimizer-induced parameter change and the current negative gradient.

\textit{Analysis.} Both NormPre variants exhibit transient peaks in gradient norm during early training, with a more pronounced peak for \methodG. Their descent alignment is also lower than Muon's, indicating greater deviation from the current negative gradient. These distinct early dynamics do not develop into persistent gradient instability. As training progresses, their gradient norms decrease and stabilize, with fewer isolated spikes overall than Muon. Directional differences persist after the gradient norms stabilize: both variants maintain lower alignment than Muon, with \methodG remaining the lowest. Thus, NormPre continues to reshape the update direction throughout training. Together with the lower validation losses, these observations demonstrate an effective update geometry: NormPre adjusts update directions while maintaining positive average descent alignment, sustains relatively steady optimization dynamics in later training, and achieves better final solutions.

\subsection{Additional Ablations}\label{app:additional-ablations}
\begin{table}[t]
    \centering
    \caption{Ablations on Marginally Normalized Update. Results are reported on GPT-2 Small with OpenWebText using seed $1337$ under the same $10$k-step training budget.}
    \label{tab:ablation-main}
    \small
    \renewcommand{\arraystretch}{1.2}
    \setlength{\tabcolsep}{6pt}
    
    \begin{minipage}[t]{0.43\linewidth}
        \centering
        (a) Momentum Processing\\[3pt]
        \begin{tabular}{@{}lcc@{}}
            \toprule[1.5pt]
            \multirow{2}{*}{Variant} & \multicolumn{2}{c}{Validation Loss} \\
            \cmidrule(lr){2-3}
            & \methodG & \methodL \\
            \midrule
            No Tangent      & 3.0945 & 3.1139 \\
            Strict Tangent  & 3.1869 & 3.1196 \\
            Relaxed Tangent & 3.0711 & 3.0864 \\
            \bottomrule[1.5pt]
        \end{tabular}
    \end{minipage}
    \hfill
    \begin{minipage}[t]{0.52\linewidth}
        \centering
        (b) Normalization Orientation\\[3pt]
        \begin{tabular}{@{}lcc@{}}
            \toprule[1.5pt]
            \multirow{2}{*}{Orientation} & \multicolumn{2}{c}{Validation Loss} \\
            \cmidrule(lr){2-3}
            & \methodG & \methodL \\
            \midrule
            Row                    & 3.0604 & 3.0817 \\
            Alternating Row/Column & 3.0711 & 3.0864 \\
            Row+Column/Step        & 3.0735 & 3.0878 \\
            \bottomrule[1.5pt]
        \end{tabular}
    \end{minipage}
\end{table}

\begin{table}[t]
    \centering
    \caption{Ablations on Interaction Geometry Source.
    Results on GPT-2 Small with OpenWebText using seed 1337 and the same 10k-step training budget.}
    \label{tab:interaction-source-ablation}
    \small
    \renewcommand{\arraystretch}{1.15}
    \setlength{\tabcolsep}{8pt}

    \resizebox{\linewidth}{!}{%
    \begin{tabular}{lcccc}
        \toprule[1.5pt]
        \multirow{2}{*}{Interaction Source}
        & \multicolumn{2}{c}{\methodG}
        & \multicolumn{2}{c}{\methodL} \\
        \cmidrule(lr){2-3}\cmidrule(lr){4-5}
        & Training Loss
        & Validation Loss
        & Training Loss
        & Validation Loss \\
        \midrule

        Update before normalization ($X$)
        & 3.0844
        & 3.0900
        & 3.1375
        & 3.1305 \\

        Update after normalization ($\Psi$)
        & \textbf{3.0655}
        & \textbf{3.0711}
        & \textbf{3.0810}
        & \textbf{3.0864} \\

        \bottomrule[1.5pt]
    \end{tabular}
    }
\end{table}

\begin{table}[t]
    \centering
    \caption{Spectral Budget in \methodL. Results are reported on GPT-2 Small with OpenWebText using seed $1337$ under the same $10$k-step training budget.}
    \label{tab:spectral-budget}
    \small
    \renewcommand{\arraystretch}{1.2}
    \begin{tabular}{lcc}
        \toprule[1.5pt]
        Rank $r$ & Training Loss & Validation Loss \\
        \midrule
        8            & 3.0911 & 3.0962 \\
        16           & 3.0857 & 3.0908 \\
        32 (default) & 3.0810 & 3.0864 \\
        64           & 3.0768 & 3.0827 \\
        \bottomrule[1.5pt]
    \end{tabular}
\end{table}

\textbf{Ablations on Marginally Normalized Update.}\quad
Table~\ref{tab:ablation-main} examines two design choices in constructing the marginally normalized update for both \methodG and \methodL. In part (a), relaxed tangent projection achieves the lowest validation loss for both variants, outperforming both leaving the momentum unchanged and strict tangent projection. The degradation under strict tangent projection is particularly pronounced for \methodG, suggesting that completely removing the radial component can discard useful information before spectral refinement. These results support the relaxed design, which prioritizes direction-changing information while retaining a limited scale-dependent component. From part (b), the choice of normalization orientation has a smaller effect. For \methodL, row-only and alternating row/column normalization achieve comparable validation losses, with a slight advantage for the row-only variant in this controlled comparison. For \methodG, the row-only variant shows a clearer improvement over alternating normalization, while normalizing both axes at every step performs similarly to the alternating scheme. We adopt alternating normalization as the default to treat the two matrix axes symmetrically and avoid introducing a fixed preferred orientation.

\textbf{Ablations on Interaction Geometry Source.}\quad
We compare interaction geometry extracted from the update before normalization ($X$) with that extracted from the update after normalization ($\Psi$). Both settings apply preconditioning to the same normalized base update $\Psi$ and differ only in the source used to construct the preconditioner.

Table~\ref{tab:interaction-source-ablation} shows that extracting interaction geometry after normalization yields lower training and validation loss for both variants. This supports constructing the preconditioner from the geometry of the update on which it acts. Using the update before normalization causes a larger degradation for \methodL, suggesting greater sensitivity to the source when refinement is restricted to selected modes. The marginal scales in $X$ can affect the spectral ordering, so its leading modes may differ from those of $\Psi$. Normalization removes this scale weighting first, allowing the top-$r$ eigenspace to capture interaction structure without the influence of unequal row norms. This offers a possible explanation for the greater sensitivity of localized refinement. Together, these results support using marginal normalization both to construct the base update and to establish the interaction geometry for subsequent spectral preconditioning.

\textbf{Spectral Budget in \methodL.}\quad
Table~\ref{tab:spectral-budget} reports training and validation losses for different spectral budgets, complementing Figure~\ref{fig:gpt2small_rank_localization_frontier_new}. Both losses decrease as $r$ increases over the evaluated range. We use $r=32$ as the default to balance these performance gains against the optimizer cost.

%% file: Appendix/appendix_implementation_complexity.tex
\newpage
\section{Detailed Implementations on Spectral Preconditioning}
\label{app:spectral-implementation}

From Section~\ref{sec:spectral-implementation}, \methodG realizes global spectral preconditioning by applying Newton-Schulz iteration to the normalized update $\Psi$, approximating $T^{\text{G}}=\operatorname{msign}(\Psi)$. This follows the standard Newton-Schulz implementation used by Muon~\citep{jordan2024muon, liu2025muon}. \methodL operates on a selected interaction eigenspace, which can be obtained through exact eigendecomposition or a randomized sketch.
In the following, we focus on the implementation details of \methodL.

\textbf{Exact Eigenspace Extraction.}\quad The exact implementation explicitly forms the interaction matrix $\Gamma=\Psi\Psi^\top$ and computes its eigendecomposition $\Gamma=\widetilde U\Lambda\widetilde U^\top$. Following the localized construction in Section~\ref{sec:prelim-spectral-preconditioning}, we define the active set $\mathcal A:=\{i:\lambda_i>1\}$ and let $\mathcal C\subseteq\mathcal A$ index the $\min\{r,|\mathcal A|\}$ largest active eigenvalues. The selected eigenpairs $(\widetilde U_{\mathcal C},\Lambda_{\mathcal C})$ are directly used in the localized preconditioner
$$
P^\text{L}=I_m+\widetilde U_{\mathcal C}(\Lambda_{\mathcal C}^{-1/2}-I)\widetilde U_{\mathcal C}^\top.
$$

\textbf{Sketch-Based Eigenspace Extraction.}\quad To avoid forming and decomposing the full interaction matrix, we approximate its leading eigenspace through a randomized sketch~\citep{halko2011finding}. Among the top-$r$ estimated interaction modes, those satisfying $\widehat\lambda_i>1$ are retained for the localized preconditioner
$$
\widehat P^\text{L}=I_m+\widehat U_{\mathcal C}(\widehat\Lambda_{\mathcal C}^{-1/2}-I)\widehat U_{\mathcal C}^\top.
$$
Algorithm~\ref{alg:interaction-sketch} gives the complete procedure.

\begin{algorithm}[!h]
    \caption{Randomized Interaction Sketch}
    \label{alg:interaction-sketch}
    \begin{algorithmic}[1]
    
    \Require Marginally normalized update $\Psi\in\mathbb{R}^{m\times n}$, rank $r\le m$, oversampling $o$, power iterations $p$.
    
    \State $\ell\gets\min\{m,r+o\}$
    
    \State Sample $\Omega\in\mathbb{R}^{n\times\ell}$ with $\Omega_{ij}\sim\mathcal N(0,1)$ and set $Y\gets\Psi\Omega$
    \Comment{Gaussian Sketch}
    
    \For{$j=1,\ldots,p$}
        \State $Y\gets\Psi(\Psi^\top Y)$
    \EndFor
    \Comment{Power Iteration}
    
    \State $Q\gets\operatorname{qr}(Y)$
    \Comment{Candidate Subspace}
    
    \State $\Gamma_Q\gets(Q^\top\Psi)(Q^\top\Psi)^\top$
    \Comment{Rayleigh--Ritz Matrix}
    
    \State $(Z,\widehat\Lambda)\gets$ top-$r$ eigenpairs of $\Gamma_Q$
    \Comment{Ritz Pairs}
    
    \State $\widehat U\gets QZ$
    \Comment{Lifted Ritz Vectors}

    \State $\mathcal C\gets\{i:\widehat\lambda_i>1\}$
    
    \State \Return $(\widehat U_{\mathcal C},\widehat\Lambda_{\mathcal C})$
    
    \end{algorithmic}
\end{algorithm}

The main steps of Algorithm~\ref{alg:interaction-sketch} are as follows:
\begin{itemize}[itemsep=0.3em, parsep=0em, topsep=0.3em, partopsep=0em, leftmargin=2em]
    \item \textit{Randomized Subspace Construction (Lines $2$--$6$).} The Gaussian sketch $Y=\Psi\Omega$ probes the range of $\Psi$ and the power iterations $Y\leftarrow\Psi(\Psi^\top Y)$ amplify the separation of its leading singular directions. Then, QR decomposition yields an orthonormal basis $Q\in\mathbb{R}^{m\times\ell}$ for the candidate interaction subspace.

    \item \textit{Rayleigh--Ritz Eigenspace Extraction (Lines $7$--$10$).} The interaction matrix is projected onto the candidate subspace through
    $$
    \Gamma_Q=Q^\top\Gamma Q=(Q^\top\Psi)(Q^\top\Psi)^\top\in\mathbb{R}^{\ell\times\ell}.
    $$
    Computing the top-$r$ eigenpairs $(Z,\widehat\Lambda)$ of $\Gamma_Q$ gives the corresponding Ritz pairs and the Ritz vectors are lifted to the original interaction space through $\widehat U=QZ$. We retain the estimated modes satisfying $\widehat\lambda_i>1$, yielding $(\widehat U_{\mathcal C},\widehat\Lambda_{\mathcal C})$ for localized spectral preconditioning. Since the Ritz values are ordered in descending order, applying $\widehat{\lambda}_i > 1$ after top-$r$ selection is equivalent to first identifying the active modes and then retaining the largest $r$, consistent with Section~\ref{sec:prelim-spectral-preconditioning}.
\end{itemize}

\newpage
\section{Detailed Computational Complexity Analysis}
\label{app-sec:compute_analysis}

This section analyzes the optimizer-side computational complexity. 
For \method, the analysis covers momentum update, relaxed tangent projection, diagonal-Gram normalization, spectral preconditioning, update RMS scaling and parameter update. We consider both \methodG and \methodL, and include Muon as a reference for the Newton-Schulz computation. Table~\ref{tab:optimizer-complexity-summary} summarizes the resulting complexities.

Let $m\times n$ denote the matrix dimensions entering the spectral preconditioning stage and define $s:=\min\{m,n\}$. For the Newton-Schulz implementations in Muon and \methodG, the input is transposed when necessary so that the Gram matrix is $s\times s$. For \methodL, $m\times n$ follows the current active orientation. Let $q$ denote the number of Newton-Schulz iterations, $r$ the interaction rank, $o$ the oversampling parameter, $p$ the number of power iterations and $\ell:=\min\{m,r+o\}$. We omit constant factors and lower-order terms.

\begin{table}[!ht]
    \centering
    \caption{Computational Complexity of Muon and NormPre Variants.}
    \label{tab:optimizer-complexity-summary}
    \small
    \begin{tabular}{lc}
        \toprule
        \textbf{Method} & \textbf{Computational Complexity} \\
        \midrule
        Muon
        & $\mathcal O(mn+qmns)$ \\
        NormPre-G
        & $\mathcal O(mn+qmns)$ \\
        NormPre-L (Exact)
        & $\mathcal O(m^2n+m^3)$ \\
        NormPre-L (Sketch)
        & $\mathcal O\!\left((p+1)mn\ell+(m+n)\ell^2+\ell^3\right)$ \\
        \bottomrule
    \end{tabular}
\end{table}

\subsection{Shared Components in NormPre-G and NormPre-L}\label{app:complexity-shared}
Following Algorithm~\ref{alg:normpre}, NormPre-G and NormPre-L share the same optimization pipeline except for the spectral preconditioning step. We analyze these shared operations under the active orientation.

\begin{itemize}[itemsep=0.4em, parsep=0em, topsep=0.3em, partopsep=0em, leftmargin=2em]
    \item \textit{Momentum Update and Relaxed Tangent Projection.}
    Updating the momentum requires $\mathcal O(mn)$ operations. The tangent projection is applied independently to each row with $\mathcal O(n)$ cost per row (inner products and vector rescaling), giving $\mathcal O(mn)$ operations in total.
    
    \item \textit{Diagonal-Gram Normalization.}
    Computing the row norms of $X$ and forming the normalized update $\Psi=D(X)^{-1}X$ require $\mathcal O(mn)$ operations.

    \item \textit{Update RMS Scaling and Parameter Update.}
    Computing $\operatorname{RMS}(T)$ and rescaling the update both require $\mathcal O(mn)$ operations. Applying weight decay and updating the parameter matrix also require $\mathcal O(mn)$ operations.
\end{itemize}

Overall, these shared operations contribute $\mathcal O(mn)$ computation to both variants and are included in the total costs reported in Table~\ref{tab:optimizer-complexity-summary}.

\subsection{NormPre-G and Muon}
\textbf{Muon.}\quad
Muon applies a Newton-Schulz approximation of the matrix-sign transformation to its momentum update. The input is transposed when necessary so that the Gram matrix is $s\times s$. Each Newton-Schulz iteration involves matrix products between the $s\times s$ Gram matrix and the matrix update, with dominant cost $\mathcal O(mns)$. For $q$ iterations, the matrix transformation requires $\mathcal O(qmns)$. The remaining optimizer operations (\textit{e.g.}, momentum update, update RMS scaling and parameter update) contribute $\mathcal O(mn)$, giving the total complexity $\mathcal O(mn+qmns).$

\textbf{NormPre-G.}\quad
NormPre-G applies the same $q$-step Newton-Schulz transformation to the normalized update $\Psi$. Thus, its spectral preconditioning stage requires $\mathcal O(qmns)$ computation. Together with the shared $\mathcal O(mn)$ operations analyzed in Section~\ref{app:complexity-shared}, the total complexity is $\mathcal O(mn+qmns).$
Therefore, Muon and NormPre-G have the same leading asymptotic computational complexity for a fixed number of Newton-Schulz iterations. The relaxed tangent projection and diagonal-Gram normalization in NormPre-G add only $\mathcal O(mn)$ computation and do not change the asymptotic order.

\subsection{NormPre-L}
For the Exact implementation, the selected eigenvectors $\widetilde U_{\mathcal C}\in\mathbb R^{m\times|\mathcal C|}$ and eigenvalues $\Lambda_{\mathcal C}$ give $P^{\mathrm{L}}\Psi = \Psi+\widetilde U_{\mathcal C}\left(\Lambda_{\mathcal C}^{-1/2}-I\right)\widetilde U_{\mathcal C}^{\top}\Psi.$ For the Sketch-based implementation, the estimated eigenpairs $(\widehat U_{\mathcal C},\widehat\Lambda_{\mathcal C})$ yield $\widehat P^{\mathrm{L}}\Psi=\Psi+\widehat U_{\mathcal C}\left(\widehat\Lambda_{\mathcal C}^{-1/2}-I\right)\widehat U_{\mathcal C}^{\top}\Psi.$
In both cases, the selected eigenvector matrix has dimension $m\times|\mathcal C|$ with $|\mathcal C|\le r$. Computing $\widetilde U_{\mathcal C}^{\top}\Psi$ or $\widehat U_{\mathcal C}^{\top}\Psi$ requires $\mathcal O(mn|\mathcal C|)$ operations. Applying the diagonal spectral scaling costs $\mathcal O(n|\mathcal C|)$ and multiplying by $\widetilde U_{\mathcal C}$ or $\widehat U_{\mathcal C}$ requires another $\mathcal O(mn|\mathcal C|)$ operations. 
Therefore, the localized spectral application costs $\mathcal O(mnr)$. The exact and sketch-based implementations differ in the cost of extracting the selected interaction eigenspace.

\textbf{NormPre-L (Exact).}\quad
The exact implementation forms the interaction matrix $\Gamma=\Psi\Psi^\top$, which costs $\mathcal O(m^2n)$. The resulting $m\times m$ matrix is then processed by a dense symmetric eigendecomposition with cost $\mathcal O(m^3)$. 
Together with the shared $\mathcal O(mn)$ operations analyzed in Section~\ref{app:complexity-shared} and the $\mathcal O(mnr)$ localized spectral application, we obtain $\mathcal O\!\left(mn+mnr+m^2n+m^3\right)=\mathcal O(m^2n+m^3)$, where the simplification follows from $r\leq m$.

\textbf{NormPre-L (Sketch).}\quad
The sketch-based implementation avoids forming and decomposing the full $m\times m$ interaction matrix by using the randomized sketch in Algorithm~\ref{alg:interaction-sketch}. Its eigenspace-extraction cost can be decomposed as follows, the main intermediate dimensions are summarized in Table~\ref{tab:sketch-dimensions}.

\begin{itemize}[itemsep=0.5em, parsep=0em, topsep=0.3em, partopsep=0em, leftmargin=2em]
    \item \textit{Randomized Sketch and Power Iteration.}
    Forming $Y=\Psi\Omega$ costs $\mathcal O(mn\ell)$. 
    Each power iteration computes $\Psi^\top Y$ followed by $\Psi(\Psi^\top Y)$, with both matrix multiplications requiring $\mathcal O(mn\ell)$ operations. Over $p$ iterations, the power-iteration cost is $\mathcal O(pmn\ell)$.

    \item \textit{Candidate Subspace.}
    Computing the QR decomposition of $Y\in\mathbb{R}^{m\times\ell}$ costs $\mathcal O(m\ell^2)$.

    \item \textit{Rayleigh--Ritz Matrix.}
    Forming $Q^\top\Psi\in\mathbb{R}^{\ell\times n}$ costs $\mathcal O(mn\ell)$ and constructing $\Gamma_Q=(Q^\top\Psi)(Q^\top\Psi)^\top\in\mathbb{R}^{\ell\times\ell}$ costs $\mathcal O(n\ell^2)$.

    \item \textit{Rayleigh--Ritz Extraction.}
    A dense eigendecomposition of $\Gamma_Q\in\mathbb{R}^{\ell\times\ell}$ costs $\mathcal O(\ell^3)$. Computing $\widehat U=QZ$ costs $\mathcal O(m\ell r)$, which is bounded by $\mathcal O(m\ell^2)$ because $r\leq\ell$.
\end{itemize}

We have $\mathcal O\!\left(mn+mnr+(p+1)mn\ell+(m+n)\ell^2+\ell^3\right)$ when combining these terms with the shared $\mathcal O(mn)$ operations and $\mathcal O(mnr)$ localized spectral application. Since $r\leq\ell$, the $\mathcal O(mn)$ and $\mathcal O(mnr)$ terms are absorbed into the sketching cost. Therefore, the total computational complexity of NormPre-L (Sketch) is $\mathcal O\!\left((p+1)mn\ell+(m+n)\ell^2+\ell^3\right).$

\begin{table}[!h]
    \centering
    \caption{Dimensions of the Main Quantities in Algorithm~\ref{alg:interaction-sketch}}
    \label{tab:sketch-dimensions}
    \small
    \begin{tabular}{lll}
        \toprule
        \textbf{Symbol} & \textbf{Dimension} & \textbf{Description} \\
        \midrule
        $\Psi$ & $m\times n$ & Marginally normalized update \\
        $\Omega$ & $n\times\ell$ & Gaussian random matrix \\
        $Y$ & $m\times\ell$ & Gaussian sketch \\
        $Q$ & $m\times\ell$ & Candidate subspace \\
        $\Gamma_Q$ & $\ell\times\ell$ & Rayleigh--Ritz matrix \\
        $Z$ & $\ell\times r$ & Top-$r$ eigenvectors of $\Gamma_Q$ \\
        $\widehat\Lambda$ & $r\times r$ & Estimated eigenvalues \\
        $\widehat U$ & $m\times r$ & Estimated eigenvectors \\
        \bottomrule
    \end{tabular}
\end{table}

%% file: Appendix/appendix_prop.tex
\newpage
\section{Proofs}\label{app:proof}
\subsection{Optimization Formulation of Localized Spectral Refinement}
\label{app:proof-regularized-spectral-steepest-descent}

\begin{proposition}[Regularized Spectral Steepest Descent]
    \label{prop:regularized_spectral_sd}
    Suppose that $\Gamma(X)\succ0$. The closest update to $\Psi$ within the unit spectral-norm ball is
    \begin{align}
        T_+:=\arg\min_{\|T\|_{\mathrm{op}}\leq1}\frac{1}{2}\|T-\Psi\|_F^2=\arg\max_{\|T\|_{\mathrm{op}}\leq1}\left\{\langle\Psi,T\rangle_F-\frac{1}{2}\|T\|_F^2\right\}.
        \label{eq:regularized-steepest-problem}
    \end{align}
    Its unique solution is $T_+=P_+\Psi$, where the preconditioner is
    \begin{align}
        P_+=\widetilde U\operatorname{Diag}\left(\min\{1,\lambda_1^{-1/2}\},\ldots,\min\{1,\lambda_m^{-1/2}\}\right)\widetilde U^\top.
        \label{eq:regularized-steepest-preconditioner}
    \end{align}
    In particular, $0\prec P_+\preceq I_m$ and $\sigma_i(T_+)=\min\{\widetilde\sigma_i,1\}$.
\end{proposition}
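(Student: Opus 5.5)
The argument is a Frobenius-norm projection onto the spectral-norm ball; the minimizer is found by clipping singular values.

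\textbf{Setup.} The two formulations in \eqref{eq:regularized-steepest-problem} are equivalent, because
\[
\tfrac12\|T-\Psi\|_F^2=\tfrac12\|\Psi\|_F^2-\bigl(\langle\Psi,T\rangle_F-\tfrac12\|T\|_F^2\bigr)
\]
and $\|\Psi\|_F^2$ does not depend on $T$. The feasible set $\{\|T\|_{\mathrm{op}}\le1\}$ is closed, convex and nonempty. The objective is $1$-strongly convex, so a minimizer exists and is unique. It therefore suffices to exhibit one minimizer and identify it with $P_+\Psi$.

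\textbf{Lower bound via von Neumann.} Write $\Psi=\widetilde U\widetilde\Sigma\widetilde V^\top$ and let $T$ be feasible with singular values $\tau_1\ge\cdots\ge\tau_m$, all lying in $[0,1]$. Von Neumann's trace inequality gives $\langle\Psi,T\rangle_F\le\sum_i\widetilde\sigma_i\tau_i$. Hence
\[
\|T-\Psi\|_F^2\ \ge\ \sum_i(\widetilde\sigma_i-\tau_i)^2\ \ge\ \sum_i(\widetilde\sigma_i-\min\{\widetilde\sigma_i,1\})^2,
\]
where the second inequality minimizes each scalar term $(\widetilde\sigma_i-\tau_i)^2$ over $\tau_i\in[0,1]$, with minimizer $\min\{\widetilde\sigma_i,1\}$.

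\textbf{Attainment.} The matrix $T^\star=\widetilde U\operatorname{Diag}(\min\{\widetilde\sigma_i,1\})\widetilde V^\top$ is feasible and attains this bound, since $T^\star-\Psi$ has singular values $(\widetilde\sigma_i-1)_+$. By uniqueness, $T_+=T^\star$, and this gives $\sigma_i(T_+)=\min\{\widetilde\sigma_i,1\}$.

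\textbf{Identification with $P_+\Psi$.} Because $\Gamma(X)\succ0$, every $\widetilde\sigma_i>0$, $\widetilde U$ is square orthogonal, and $\lambda_i=\widetilde\sigma_i^2$. Then
\[
P_+\Psi=\widetilde U\operatorname{Diag}(\min\{1,\lambda_i^{-1/2}\})\widetilde U^\top\widetilde U\widetilde\Sigma\widetilde V^\top=\widetilde U\operatorname{Diag}(\min\{1,\widetilde\sigma_i^{-1}\}\widetilde\sigma_i)\widetilde V^\top=T^\star,
\]
using $\min\{1,\widetilde\sigma_i^{-1}\}\widetilde\sigma_i=\min\{\widetilde\sigma_i,1\}$.

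\textbf{Bounds on $P_+$.} The eigenvalues of $P_+$ are $\min\{1,\lambda_i^{-1/2}\}\in(0,1]$, which gives $0\prec P_+\preceq I_m$. Writing these eigenvalues as $1+\mathbb 1_{\lambda_i>1}(\lambda_i^{-1/2}-1)$ also recovers the active-set form $P_+=I_m+\widetilde U_{\mathcal A}(\Lambda_{\mathcal A}^{-1/2}-I)\widetilde U_{\mathcal A}^\top$ stated in Section~\ref{sec:prelim-spectral-preconditioning}.

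The main obstacle is the lower bound, which requires von Neumann's trace inequality. Uniqueness itself is straightforward from strong convexity, so no care about repeated singular values is needed.
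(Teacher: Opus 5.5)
Your proof is correct and follows essentially the same route as the paper. Both arguments use the equivalence of the two objectives, then von Neumann's trace inequality to reduce to independent scalar problems over $[0,1]$ with clipped solution $\min\{\widetilde\sigma_i,1\}$, then uniqueness from strict (strong) convexity, and finally the identification $T_+=P_+\Psi$ with the eigenvalues of $P_+$ lying in $(0,1]$. The only cosmetic difference is that you bound $\|T-\Psi\|_F^2$ directly and check attainment explicitly, whereas the paper maximizes the equivalent per-mode objective $\sum_i(\widetilde\sigma_i\tau_i-\tfrac12\tau_i^2)$.
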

\begin{remark}
    Proposition~\ref{prop:regularized_spectral_sd} gives the exact projection of $\Psi$ onto the unit spectral-norm ball, clipping singular values above one to one while leaving the remaining singular values unchanged.
    Define the active set $\mathcal A:=\{i:\lambda_i>1\}$, together with $\widetilde U_{\mathcal A}:=[\widetilde u_i]_{i\in\mathcal A}$ and $\Lambda_{\mathcal A}:=\operatorname{Diag}(\lambda_i)_{i\in\mathcal A}$. Equivalently, the preconditioner can be written as 
    $$
    P_+=I_m+\widetilde U_{\mathcal A}(\Lambda_{\mathcal A}^{-1/2}-I)\widetilde U_{\mathcal A}^\top.
    $$
    Its non-identity component is $\Delta_{\mathcal A}:=P_+-I_m$. Hence, $P_+\Psi=\Psi+\Delta_{\mathcal A}\Psi$, where $\Delta_{\mathcal A}\Psi$ is the induced update refinement relative to the normalized update $\Psi$. 
\end{remark}

\begin{proof}
    Let $\Psi=\widetilde U\widetilde\Sigma\widetilde V^\top$ be a thin SVD, where $\widetilde\Sigma=\operatorname{Diag}(\widetilde\sigma_1,\ldots,\widetilde\sigma_m)$ with $\widetilde\sigma_1\geq\cdots\geq\widetilde\sigma_m>0$. Suppose that $\Gamma(X)=\Psi\Psi^\top=\widetilde U\Lambda \widetilde U^\top \succ0$, i.e., $\Psi$ has full row rank and $\lambda_i=\widetilde\sigma_i^2>0$ for every $i$.

    For the regularized steepest-descent problem,
    $$
    \frac{1}{2}\|T-\Psi\|_F^2=\frac{1}{2}\|\Psi\|_F^2-\left(\langle\Psi,T\rangle_F-\frac{1}{2}\|T\|_F^2\right).
    $$
    Since $\|\Psi\|_F^2/2$ is independent of $T$, the two optimization problems in Equation~\ref{eq:regularized-steepest-problem} are equivalent.

    Following the standard spectral steepest-descent derivation of the matrix-sign update, we solve the regularized problem under the same unit spectral-norm constraint by optimizing its singular values modewise.
    
    Let $\tau_1,\ldots,\tau_m$ denote the singular values of a feasible $T$. The constraint $\|T\|_{\mathrm{op}}\leq1$ implies $0\leq\tau_i\leq1$. By von Neumann's trace inequality, $\langle\Psi,T\rangle_F\leq\sum_{i=1}^m\widetilde\sigma_i\tau_i,$ with equality when $T$ shares the singular directions of $\Psi$. Since $\|T\|_F^2=\sum_{i=1}^m\tau_i^2$, the regularized objective satisfies
    $$
    \langle\Psi,T\rangle_F-\frac{1}{2}\|T\|_F^2\leq\sum_{i=1}^m\left(\widetilde\sigma_i\tau_i-\frac{1}{2}\tau_i^2\right).
    $$
    Thus, the matrix problem reduces to independently maximizing $\widetilde\sigma_i\tau_i-\frac{1}{2}\tau_i^2$ over $0\leq\tau_i\leq1$ for each singular mode.
    For each $i$, consider the scalar objective $f_i(\tau_i):=\widetilde\sigma_i\tau_i-\frac{1}{2}\tau_i^2.$ Since $f_i'(\tau_i)=\widetilde\sigma_i-\tau_i$ and $f_i''(\tau_i)=-1<0$, its unique unconstrained maximizer is $\tau_i=\widetilde\sigma_i$. With the unit spectral-norm constraint $0\leq\tau_i\leq1$, we have 
    $$
    \tau_i^\star=\min\{\widetilde\sigma_i,1\}.
    $$
    Therefore, the unique solution is
    $$
    T_+=\widetilde U\operatorname{Diag}\left(\min\{\widetilde\sigma_1,1\},\ldots,\min\{\widetilde\sigma_m,1\}\right)\widetilde V^\top.
    $$
    The uniqueness follows from the strict concavity of $\langle\Psi,T\rangle_F-\frac{1}{2}\|T\|_F^2$ over the convex feasible set.

    We next express the optimal solution as a left spectral preconditioning of $\Psi$. Since $\lambda_i=\widetilde\sigma_i^2$, define
    $$
    P_+:=\widetilde U\operatorname{Diag}\left(\min\{1,\lambda_1^{-1/2}\},\ldots,\min\{1,\lambda_m^{-1/2}\}\right)\widetilde U^\top.
    $$
    Then $P_+\Psi
    =\widetilde U\operatorname{Diag}\left(\min\{1,\widetilde\sigma_1^{-1}\}\widetilde\sigma_1,\ldots,\min\{1,\widetilde\sigma_m^{-1}\}\widetilde\sigma_m\right)\widetilde V^\top
    =T_+$, i.e., $T_+=P_+\Psi$.
    Since
    $$
    \min\{1,\lambda_i^{-1/2}\}
    =
    \begin{cases}
    1, & \lambda_i\leq1,\\
    \lambda_i^{-1/2}, & \lambda_i>1,
    \end{cases}
    $$
    letting $\mathcal A:=\{i:\lambda_i>1\}$ gives the equivalent representation of the preconditioner
    $$
    P_+=I_m+\widetilde U_{\mathcal A}\left(\Lambda_{\mathcal A}^{-1/2}-I\right)\widetilde U_{\mathcal A}^\top.
    $$
    Thus, $P_+$ leaves modes with $\lambda_i\leq1$ unchanged and contracts modes with $\lambda_i>1$ by $\lambda_i^{-1/2}$. Moreover, every eigenvalue of $P_+$ lies in $(0,1]$, so $0\prec P_+\preceq I_m$.
\end{proof}

\begin{proposition}[Rank-Constrained Spectral Refinement]
    \label{prop:optimal-rank-constrained-refinement}
    Suppose that $\Gamma(X)\succ0$. A closest rank-constrained update refinement to $\Delta_{\mathcal A}\Psi$ is obtained by solving
    \begin{align}
        \Delta^\star
        \in
        \arg\min_{\operatorname{rank}(\Delta)\leq r}
        \frac{1}{2}\left\|(\Delta-\Delta_{\mathcal A})\Psi\right\|_F^2.
        \label{eq:optimal-rank-constrained-refinement}
    \end{align}
    Let $\mathcal C\subseteq\mathcal A$ index the $\min\{r,|\mathcal A|\}$ largest eigenvalues satisfying $\lambda_i>1$. Then one solution is
    \begin{align}
        \Delta^\star
        =\widetilde U_{\mathcal C}
        \left(\Lambda_{\mathcal C}^{-1/2}-I\right)
        \widetilde U_{\mathcal C}^\top.
        \label{eq:optimal-interaction-subspace}
    \end{align}
    The resulting preconditioner and update are $P:=I_m+\Delta^\star$ and $T:=P\Psi$.
\end{proposition}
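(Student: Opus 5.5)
The problem is a weighted low-rank approximation, and the plan is to change variables so that it becomes an unweighted Eckart--Young problem. Write the thin SVD $\Psi=\widetilde U\widetilde\Sigma\widetilde V^\top$. Since $\Gamma(X)\succ0$, $\Psi$ has full row rank, $\widetilde U$ is $m\times m$ orthogonal, and $\widetilde\Sigma\succ0$. For any $m\times m$ matrix $\Delta$,
\[
\|(\Delta-\Delta_{\mathcal A})\Psi\|_F=\|(\Delta-\Delta_{\mathcal A})\widetilde U\widetilde\Sigma\|_F,
\]
because $\widetilde V$ has orthonormal columns. Set $E:=\Delta\widetilde U\widetilde\Sigma$. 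The map $\Delta\mapsto E$ is a bijection on $m\times m$ matrices that preserves rank, since $\widetilde U\widetilde\Sigma$ is invertible. Problem~\eqref{eq:optimal-rank-constrained-refinement} is therefore equivalent to
\[
\min_{\operatorname{rank}(E)\le r}\tfrac12\|E-B\|_F^2,\qquad B:=\Delta_{\mathcal A}\widetilde U\widetilde\Sigma .
\]

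Next I compute $B$ explicitly. The Remark after Proposition~\ref{prop:regularized_spectral_sd} gives $\Delta_{\mathcal A}=\widetilde U_{\mathcal A}(\Lambda_{\mathcal A}^{-1/2}-I)\widetilde U_{\mathcal A}^\top$. Substituting,
\[
B=\widetilde U\,\operatorname{Diag}(d_1,\ldots,d_m),\qquad d_i=(\lambda_i^{-1/2}-1)\widetilde\sigma_i=1-\widetilde\sigma_i \ \text{ for } i\in\mathcal A,\qquad d_i=0 \ \text{ otherwise}.
\]
Thus $B$ is given directly by an SVD: its singular values are $|d_i|=\widetilde\sigma_i-1$ for $i\in\mathcal A$, with left vectors $\widetilde u_i$ and right vectors $-e_i$. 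Because $\widetilde\sigma_i-1$ is increasing in $\lambda_i$, the $\min\{r,|\mathcal A|\}$ largest singular values of $B$ are attained exactly on the index set $\mathcal C$. Ties may make $\mathcal C$ non-unique, which is why the statement claims only ``one solution''.

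By the Eckart--Young--Mirsky theorem, a best rank-$r$ approximation of $B$ is its truncated SVD:
\[
E^\star=\sum_{i\in\mathcal C}(1-\widetilde\sigma_i)\,\widetilde u_i e_i^\top .
\]
If $|\mathcal A|\le r$, then $E^\star=B$ and the residual is zero. Mapping back gives $\Delta^\star=E^\star(\widetilde U\widetilde\Sigma)^{-1}$. For each $i\in\mathcal C$, the factor $e_i^\top\widetilde\Sigma^{-1}\widetilde U^\top$ equals $\widetilde\sigma_i^{-1}\widetilde u_i^\top$, and $(1-\widetilde\sigma_i)\widetilde\sigma_i^{-1}=\lambda_i^{-1/2}-1$. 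Hence
\[
\Delta^\star=\widetilde U_{\mathcal C}(\Lambda_{\mathcal C}^{-1/2}-I)\widetilde U_{\mathcal C}^\top,
\]
which is~\eqref{eq:optimal-interaction-subspace}. This $\Delta^\star$ has rank at most $r$ and is optimal because the change of variables preserves both the objective and the rank constraint. Setting $P:=I_m+\Delta^\star$ and $T:=P\Psi$ then follows by definition.

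The main obstacle is the change of variables. It needs $\Gamma(X)\succ0$ so that the weight $\widetilde U\widetilde\Sigma$ is invertible and rank is preserved. It also needs the sign convention in the SVD of $B$ handled so that the ordering of $|d_i|$ matches the ordering of the $\lambda_i$.
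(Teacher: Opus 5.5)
Your proof is correct, and it differs from the paper's only in how the constrained problem over $\Delta$ is reduced to a plain Eckart--Young--Mirsky problem. The paper works directly with the $m\times n$ target $\Delta_{\mathcal A}\Psi=\widetilde U_{\mathcal A}\operatorname{Diag}(1-\widetilde\sigma_i)_{i\in\mathcal A}\widetilde V_{\mathcal A}^\top$ and argues by relaxation. Since $\operatorname{rank}(\Delta\Psi)\le\operatorname{rank}(\Delta)\le r$, replacing $\Delta\Psi$ by an arbitrary rank-$\le r$ matrix $Z\in\mathbb R^{m\times n}$ gives a lower bound. The paper then checks that $\Delta^\star\Psi$ equals the truncated SVD $Z^\star$, so the bound is attained.

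You instead strip the orthonormal factor $\widetilde V^\top$ and reparametrize by $E=\Delta\widetilde U\widetilde\Sigma$, a rank-preserving bijection of $\mathbb R^{m\times m}$. Your $B$ is exactly the paper's target with $\widetilde V^\top$ removed, since $B\widetilde V^\top=\Delta_{\mathcal A}\Psi$. Hence the singular values and the selected set $\mathcal C$ coincide, and both arguments rest on the same diagonalization.

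What your route buys is an exact equivalence rather than a one-sided bound. Every minimizer of the original problem is $E(\widetilde U\widetilde\Sigma)^{-1}$ for some best rank-$r$ approximation $E$ of $B$, so you obtain the whole solution set. In particular, $\Delta^\star$ is unique when $|\mathcal A|\le r$, or when the $r$-th and $(r+1)$-st largest active eigenvalues differ.

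The price is that you genuinely use $\Gamma(X)\succ0$: you need $\widetilde U\widetilde\Sigma$ to be invertible, which also forces $m\le n$. The paper's relaxation-plus-attainment argument uses only the inequality $\operatorname{rank}(\Delta\Psi)\le\operatorname{rank}(\Delta)$ and the fact that the active singular values exceed one. It never inverts anything, but as written it only certifies that $\Delta^\star$ is \emph{a} solution.
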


\begin{remark}
    Proposition~\ref{prop:optimal-rank-constrained-refinement} characterizes the optimal rank-constrained spectral refinement. Since $\Gamma(X)\succ0$, $\Psi$ has full row rank and $\operatorname{rank}(\Delta\Psi)=\operatorname{rank}(\Delta)$, so the rank constraint on the non-identity component is equivalent to that on its induced update refinement. The result then follows from the Eckart--Young--Mirsky theorem~\citep{eckart1936approximation,mirsky1960symmetric} applied to $\Delta_{\mathcal A}\Psi$. Its nonzero singular values are $|1-\sqrt{\lambda_i}|$ for $i\in\mathcal A$ and increase monotonically with $\lambda_i$, so the optimal truncation retains the active modes associated with the largest $\lambda_i$ indexed by $\mathcal C$.
\end{remark}

\begin{proof}
    Recall that $\Psi=\widetilde U\widetilde\Sigma\widetilde V^\top$ is a thin SVD, where $\widetilde\Sigma=\operatorname{Diag}(\widetilde\sigma_1,\ldots,\widetilde\sigma_m)$ with $\widetilde\sigma_1\geq\cdots\geq\widetilde\sigma_m>0$. Suppose that $\Gamma(X)=\Psi\Psi^\top=\widetilde U\Lambda \widetilde U^\top \succ0$, i.e., $\Psi$ has full row rank and $\lambda_i=\widetilde\sigma_i^2>0$ for every $i$.
    From Proposition~\ref{prop:regularized_spectral_sd}, with $\mathcal A:=\{i:\lambda_i>1\}$ and $P_+=I_m+\widetilde U_{\mathcal A}\left(\Lambda_{\mathcal A}^{-1/2}-I\right)\widetilde U_{\mathcal A}^\top$,
    $$
    \Delta_{\mathcal A}=P_+-I_m=\widetilde U_{\mathcal A}\left(\Lambda_{\mathcal A}^{-1/2}-I\right)\widetilde U_{\mathcal A}^\top.
    $$
    Applying this refinement to $\Psi$, and using that $\widetilde U_{\mathcal A}$ and $\widetilde V_{\mathcal A}$ collect the singular directions indexed by $\mathcal A$, we have
    \begin{align*}
        \Delta_{\mathcal A}\Psi
        &=\widetilde U_{\mathcal A}\left(\Lambda_{\mathcal A}^{-1/2}-I\right)\widetilde U_{\mathcal A}^\top\widetilde U\widetilde\Sigma\widetilde V^\top\\
        &=\widetilde U_{\mathcal A}\left(\Lambda_{\mathcal A}^{-1/2}-I\right)\widetilde\Sigma_{\mathcal A}\widetilde V_{\mathcal A}^\top\\
        &=\widetilde U_{\mathcal A}\operatorname{Diag}\left(1-\widetilde\sigma_i\right)_{i\in\mathcal A}\widetilde V_{\mathcal A}^\top,
    \end{align*}
    where the last equality follows from $\lambda_i=\widetilde\sigma_i^2$.
    Since $\lambda_i>1$ for $i\in\mathcal A$, we have $\widetilde\sigma_i=\sqrt{\lambda_i}>1$. Therefore, the nonzero singular values of $\Delta_{\mathcal A}\Psi$ are $\left|1-\widetilde\sigma_i\right|=\sqrt{\lambda_i}-1$ for $i\in\mathcal A$.

    For any feasible $\Delta$ with $\operatorname{rank}(\Delta)\leq r$, we have $\operatorname{rank}(\Delta\Psi)\leq\operatorname{rank}(\Delta)\leq r.$ Replacing $\Delta\Psi$ by an arbitrary matrix $Z$ with rank at most $r$ enlarges the feasible set. Therefore,
    $$
    \min_{\operatorname{rank}(Z)\leq r}\|Z-\Delta_{\mathcal A}\Psi\|_F^2
    \leq
    \min_{\operatorname{rank}(\Delta)\leq r}\|\Delta\Psi-\Delta_{\mathcal A}\Psi\|_F^2.
    $$
    By the Eckart--Young--Mirsky theorem, the left-hand side is minimized by retaining the $\min\{r,|\mathcal A|\}$ largest singular values of $\Delta_{\mathcal A}\Psi$, whose rank is $|\mathcal A|$.
    Since $\sqrt{\lambda_i}-1$ is increasing in $\lambda_i$, the optimal rank-$r$ approximation retains the modes with the largest active eigenvalues. Let $\mathcal C\subseteq\mathcal A$ index these modes. The resulting truncated approximation is
    $$
    Z^\star=\widetilde U_{\mathcal C}\operatorname{Diag}\left(1-\widetilde\sigma_i\right)_{i\in\mathcal C}\widetilde V_{\mathcal C}^\top.
    $$
    Define $\Delta^\star:=\widetilde U_{\mathcal C}\left(\Lambda_{\mathcal C}^{-1/2}-I\right)\widetilde U_{\mathcal C}^\top.$
    Since $|\mathcal C|\leq r$, we have $\operatorname{rank}(\Delta^\star)\leq r$, so $\Delta^\star$ is feasible for the original problem. Moreover,
    \begin{align*}
        \Delta^\star\Psi
        &=\widetilde U_{\mathcal C}\left(\Lambda_{\mathcal C}^{-1/2}-I\right)\widetilde\Sigma_{\mathcal C}\widetilde V_{\mathcal C}^\top\\
        &=\widetilde U_{\mathcal C}\operatorname{Diag}\left(1-\widetilde\sigma_i\right)_{i\in\mathcal C}\widetilde V_{\mathcal C}^\top\\
        &=Z^\star,
    \end{align*}
    Thus, $\Delta^\star$ attains the optimal rank-$r$ approximation obtained from the relaxed problem. Together with the preceding lower bound, this gives
    $$
    \min_{\operatorname{rank}(\Delta)\leq r}\|\Delta\Psi-\Delta_{\mathcal A}\Psi\|_F^2
    =
    \min_{\operatorname{rank}(Z)\leq r}\|Z-\Delta_{\mathcal A}\Psi\|_F^2.
    $$
    Therefore, $\Delta^\star$ is a solution to Equation~\ref{eq:optimal-rank-constrained-refinement}.
    The resulting preconditioner and refined update are
    $$
    P=I_m+\Delta^\star,\quad T=P\Psi,
    $$
    where $\Delta^\star:=\widetilde U_{\mathcal C}\left(\Lambda_{\mathcal C}^{-1/2}-I\right)\widetilde U_{\mathcal C}^\top.$
\end{proof}

%% file: Appendix/appendix_convergence.tex
\subsection{Convergence Analysis of NormPre}\label{app:proof-normpre-convergence}
This section establishes convergence guarantees for \methodG and \methodL as stated in Theorem~\ref{thm:normpre-convergence}, with the Sketch-based implementation for \methodL given in Corollary~\ref{cor:normpre-sketch-convergence}.
Throughout this convergence analysis, we use a fixed orientation
$k_t\equiv k$ for both variants. For \methodG, we use the exact
matrix-sign transformation and assume $\Gamma_t\succ0$ for all $t$.

In the deterministic setting without momentum, let $G_t:=\nabla L(W_t)$ and define the actively oriented parameter and gradient as $\overline W_t:=\orient_{k_t}(W_t)$ and $\overline G_t:=\orient_{k_t}(G_t)$. For their $i$-th rows $\bar w_{t,i}$ and $\bar g_{t,i}$, respectively, define the relaxed tangent gradient
$$
x_{t,i}:=\bar g_{t,i}-\langle \bar g_{t,i},\bar w_{t,i}\rangle\bar w_{t,i}.
$$
Decompose $\bar g_{t,i}$ into the tangent and radial components with respect to $\bar w_{t,i}$:
$$
\bar g_{t,i}=g_{t,i}^{\mathrm{tan}}+g_{t,i}^{\mathrm{rad}},\quad g_{t,i}^{\mathrm{rad}}:=\frac{\langle \bar g_{t,i},\bar w_{t,i}\rangle}{\|\bar w_{t,i}\|_2^2}\bar w_{t,i}.
$$
Then $x_{t,i}=g_{t,i}^{\mathrm{tan}}+\left(1-\|\bar w_{t,i}\|_2^2\right)g_{t,i}^{\mathrm{rad}}.$
Thus, $x_{t,i}$ preserves the tangent component of $\bar g_{t,i}$ while retaining a norm-dependent radial component. In particular, it reduces to the exact tangent projection when $\|\bar w_{t,i}\|_2=1$. Collecting the resulting rows gives $X_t:=\mathcal T_{W_t,k_t}(G_t)$.

As in Section~\ref{sec:prelim}, we present the derivation for matrices in $\mathbb{R}^{m\times n}$ under the row orientation; the column-oriented case follows by applying the same argument to the transpose. We assume throughout that the row normalization is well defined. 
Through row normalization of $X_t$, we get the normalized update $\Psi_t:=D(X_t)^{-1}X_t$ and its $i$-th row $\psi_{t,i}$. 
Applying spectral preconditioning, we get the refined update $T_t$. As in Algorithm~\ref{alg:normpre}, the resulting chain is
$$
\overline G_t\xrightarrow{\text{Relaxed Tangent Projection}}X_t\xrightarrow{\text{Diagonal-Gram Normalization}}\Psi_t\xrightarrow{\text{Spectral Preconditioning}}T_t.
$$

The following lemmas quantify the gradient contribution retained by the normalized update, the spectral properties of the global and localized refinements and the radial interaction induced by each preconditioner. These quantities are used to lower bound the gradient-update inner product and control the smoothness term in the subsequent descent analyses.

\subsubsection{Lemmas}

\begin{lemma}
    \label{lem:tangent-base-alignment}
    Let $\phi_{t,i}$ denote the angle between $\bar g_{t,i}$ and $\bar w_{t,i}$. Suppose $\sin(\phi_{t,i})\geq\gamma>0$ for every row~$i$, where $\gamma$ represents the tangential fraction. With the diagonal-Gram normalization $\Psi_t:=D(X_t)^{-1}X_t$ in Algorithm~\ref{alg:normpre},
    $$
    \langle X_t,\Psi_t\rangle_F
    \geq \gamma\|\overline G_t\|_F.
    $$
\end{lemma}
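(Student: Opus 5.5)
The plan is to compute the inner product $\langle X_t,\Psi_t\rangle_F$ row by row and then lower bound each row's contribution using the tangent component of the gradient. Since $\Psi_t=D(X_t)^{-1}X_t$, the $i$-th row of $\Psi_t$ is $\psi_{t,i}=x_{t,i}/\|x_{t,i}\|_2$. Here I use the standing assumption that the row normalization is well defined, so $x_{t,i}\neq0$. This gives
$$
\langle X_t,\Psi_t\rangle_F=\sum_{i=1}^m\langle x_{t,i},\psi_{t,i}\rangle=\sum_{i=1}^m\|x_{t,i}\|_2 .
$$
The problem therefore reduces to showing $\sum_i\|x_{t,i}\|_2\geq\gamma\|\overline G_t\|_F$.

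Next I bound each $\|x_{t,i}\|_2$ from below. I use the decomposition already established above the lemma:
$$
x_{t,i}=g_{t,i}^{\mathrm{tan}}+\bigl(1-\|\bar w_{t,i}\|_2^2\bigr)g_{t,i}^{\mathrm{rad}}.
$$
The two summands are orthogonal, because $g^{\mathrm{tan}}\perp\bar w_{t,i}$ and $g^{\mathrm{rad}}\parallel\bar w_{t,i}$. By Pythagoras,
$$
\|x_{t,i}\|_2^2=\|g_{t,i}^{\mathrm{tan}}\|_2^2+\bigl(1-\|\bar w_{t,i}\|_2^2\bigr)^2\|g_{t,i}^{\mathrm{rad}}\|_2^2\geq\|g_{t,i}^{\mathrm{tan}}\|_2^2 .
$$
By definition of the angle $\phi_{t,i}$, we have $\|g_{t,i}^{\mathrm{tan}}\|_2=\sin(\phi_{t,i})\|\bar g_{t,i}\|_2$. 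The hypothesis then gives $\|x_{t,i}\|_2\geq\gamma\|\bar g_{t,i}\|_2$. This is the only step where the relaxed, rather than strict, tangent projection matters: the leftover radial term can only increase the norm, so it never hurts the lower bound.

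Finally, I sum over rows and compare the $\ell_1$ and $\ell_2$ norms of the vector of row norms:
$$
\sum_i\|x_{t,i}\|_2\geq\gamma\sum_i\|\bar g_{t,i}\|_2\geq\gamma\Bigl(\sum_i\|\bar g_{t,i}\|_2^2\Bigr)^{1/2}=\gamma\|\overline G_t\|_F .
$$
This completes the argument.

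No step is a serious obstacle; the only care needed is with degenerate cases. The angle $\phi_{t,i}$ is only meaningful when $\bar w_{t,i}\neq0$ and $\bar g_{t,i}\neq0$. Rows with $\bar g_{t,i}=0$ contribute zero to both sides, so they can be dropped. The case $\bar w_{t,i}=0$ is excluded implicitly by the hypothesis $\sin\phi_{t,i}\geq\gamma$, which presupposes a well-defined angle. I would state these conventions explicitly before running the row-wise bound.
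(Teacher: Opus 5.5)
Your proposal is correct and follows the paper's proof essentially step for step: the row-wise identity $\langle X_t,\Psi_t\rangle_F=\sum_i\|x_{t,i}\|_2$, the Pythagorean bound $\|x_{t,i}\|_2\geq\sin(\phi_{t,i})\|\bar g_{t,i}\|_2$ from the tangent/radial decomposition (the paper writes this as $\|x_{t,i}\|_2^2=\|\bar g_{t,i}\|_2^2[\sin^2\phi_{t,i}+(1-\|\bar w_{t,i}\|_2^2)^2\cos^2\phi_{t,i}]$), and the $\ell_1\geq\ell_2$ comparison of row norms. Your separate treatment of rows with $\bar g_{t,i}=0$ is unnecessary: if the normalization is well defined, then $x_{t,i}\neq0$, which already forces $\bar g_{t,i}\neq0$.
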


\begin{proof}
    Since $x_{t,i}$ preserves the tangent component of $\bar g_{t,i}$ while retaining a norm-dependent radial component along $\bar w_{t,i}$, letting $\phi_{t,i}$ denote the angle between $\bar g_{t,i}$ and $\bar w_{t,i}$ gives
    $$
    \|x_{t,i}\|_2^2=\|\bar g_{t,i}\|_2^2\left[\sin^2(\phi_{t,i})+\left(1-\|\bar w_{t,i}\|_2^2\right)^2\cos^2(\phi_{t,i})\right]\geq\|\bar g_{t,i}\|_2^2\sin^2(\phi_{t,i}).
    $$
    Thus, we get $\|x_{t,i}\|_2\geq\|\bar g_{t,i}\|_2\sin(\phi_{t,i})$.
    Moreover, since $\Psi_t=D(X_t)^{-1}X_t$, its $i$-th row satisfies $\psi_{t,i}=x_{t,i}/\|x_{t,i}\|_2$. Then
    \begin{align*}
        \langle X_t,\Psi_t\rangle_F
        &=\sum_i\langle x_{t,i},\psi_{t,i}\rangle
        =\sum_i\left\langle x_{t,i},\frac{x_{t,i}}{\|x_{t,i}\|_2}\right\rangle
        =\sum_i\|x_{t,i}\|_2
        \geq\sum_i\|\bar g_{t,i}\|_2\sin(\phi_{t,i}).
    \end{align*}
    The lower bound becomes zero when the full gradient vanishes or when the gradient becomes parallel to the corresponding weight direction, $\sin(\phi_{t,i})=0$. If we assume that the gradient is not perfectly aligned with the weight, \textit{i.e.}, the gradient maintains a tangential fraction $\sin(\phi_{t,i})\geq\gamma>0$, we obtain
    \begin{align*}
        \langle X_t,\Psi_t\rangle_F
        \geq\gamma\sum_i\|\bar g_{t,i}\|_2
        \geq\gamma\left(\sum_i\|\bar g_{t,i}\|_2^2\right)^{1/2}
        =\gamma\|\overline G_t\|_F.
    \end{align*}
    Thus, this lemma quantifies the first-order gradient signal retained through the relaxed tangent projection and row normalization.
\end{proof}

\begin{lemma}
    \label{lem:exact-spectral-refinement}
    Consider the \methodG update with matrix-sign transformation and the \methodL update with Exact implementation in Algorithm~\ref{alg:normpre} and Section~\ref{sec:spectral-implementation}. For \methodG, suppose $\Gamma_t=\Psi_t\Psi_t^\top\succ0$. Let $P_t$ denote the corresponding spectral preconditioner and $T_t=P_t\Psi_t$. Then
    \begin{enumerate}[label=\textit{(\alph*)}, itemsep=0.3em, parsep=0em, topsep=0.3em, partopsep=0em, leftmargin=2em]
        \item Inner product with relaxed tangent gradient: $\langle X_t,T_t\rangle_F\geq\lambda_{\min}(P_t)\langle X_t,\Psi_t\rangle_F$.
        
        \item Norm of the refined update: $\|T_t\|_F\leq\max\{m,n\}^{1/2}$.

        \item Minimum eigenvalue of preconditioner: $\lambda_{\min}(P_t)\geq\lambda_{t,1}^{-1/2}\geq\max\{m,n\}^{-1/2}$, where the first inequality holds with equality for \methodG.
    \end{enumerate}
\end{lemma}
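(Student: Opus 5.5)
The plan is to exploit two structural facts. First, $X_t=D(X_t)\Psi_t$ with $D(X_t)\succ0$ diagonal. Second, in both variants the preconditioner $P_t$ is a spectral function of $\Gamma_t=\Psi_t\Psi_t^\top$, so $P_t$ and $\Gamma_t$ commute and are simultaneously diagonalized by $\widetilde U$.

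For \methodG, $P_t=\Gamma_t^{-1/2}$, so its eigenvalues are $\lambda_{t,i}^{-1/2}$. For \methodL (Exact), Proposition~\ref{prop:optimal-rank-constrained-refinement} gives $P_t=I_m+\widetilde U_{\mathcal C}(\Lambda_{\mathcal C}^{-1/2}-I)\widetilde U_{\mathcal C}^\top$. Its eigenvalues are $\lambda_{t,i}^{-1/2}$ for $i\in\mathcal C$ and $1$ otherwise; all lie in $(0,1]$ because $\mathcal C\subseteq\mathcal A$.

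\emph{Part (a).} I will rewrite the inner product as a trace:
\[
\langle X_t,P_t\Psi_t\rangle_F=\operatorname{tr}(P_t\Psi_tX_t^\top)=\operatorname{tr}(P_t\Gamma_tD(X_t)),
\qquad
\langle X_t,\Psi_t\rangle_F=\operatorname{tr}(\Gamma_tD(X_t)).
\]
Since $P_t$ and $\Gamma_t$ commute and are both PSD, the matrix $P_t\Gamma_t-\lambda_{\min}(P_t)\Gamma_t=\widetilde U\,\mathrm{Diag}\big((p_i-\lambda_{\min}(P_t))\lambda_{t,i}\big)\widetilde U^\top$ is PSD, where the $p_i$ are the eigenvalues of $P_t$. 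The trace of a product of two PSD matrices is nonnegative. Hence $\operatorname{tr}\big((P_t\Gamma_t-\lambda_{\min}(P_t)\Gamma_t)D(X_t)\big)\geq0$, which is exactly (a). I expect this step to be the main obstacle, because $D(X_t)$ does not commute with $P_t$. The commuting-PSD trace argument is what handles this, and it is the point I would check most carefully.

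\emph{Part (b).} For \methodG, $T_t=\operatorname{msign}(\Psi_t)$ has $\operatorname{rank}(\Psi_t)\le\min\{m,n\}$ singular values equal to one. Hence $\|T_t\|_F^2\le\min\{m,n\}$.

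For \methodL, I use $0\prec P_t\preceq I_m$. Then
\[
\|P_t\Psi_t\|_F^2=\operatorname{tr}(P_t^2\Gamma_t)\le\operatorname{tr}(\Gamma_t)=\|\Psi_t\|_F^2=m,
\]
since every row of $\Psi_t$ has unit norm. In both cases $\|T_t\|_F\le\max\{m,n\}^{1/2}$. I use $\|\Psi_t\|_F^2$ rather than an operator-norm bound because the unselected super-unit modes of \methodL may exceed one.

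\emph{Part (c).} For \methodG, $\lambda_{\min}(P_t)=\min_i\lambda_{t,i}^{-1/2}=\lambda_{t,1}^{-1/2}$, so equality holds. For \methodL, every eigenvalue of $P_t$ is either $\lambda_{t,i}^{-1/2}\ge\lambda_{t,1}^{-1/2}$ or $1$.

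It therefore remains to show $\lambda_{t,1}\ge1$, so that $1\ge\lambda_{t,1}^{-1/2}$. This follows from $\operatorname{tr}\Gamma_t=m$ with $m$ eigenvalues, which forces the largest eigenvalue to be at least the average, namely one.

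Finally, $\lambda_{t,1}=\|\Psi_t\|_{\mathrm{op}}^2\le\|\Psi_t\|_F^2=m\le\max\{m,n\}$. This gives the second inequality $\lambda_{t,1}^{-1/2}\ge\max\{m,n\}^{-1/2}$.
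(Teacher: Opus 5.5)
Your proposal is correct and follows essentially the same route as the paper. For (a) you use the trace identity $\langle X_t,T_t\rangle_F=\operatorname{tr}(P_t\Gamma_tD(X_t))$, then $P_t\Gamma_t\succeq\lambda_{\min}(P_t)\Gamma_t$ from the shared eigenbasis, and then nonnegativity of the trace of a product of PSD matrices; for (b) the bound $\|T_t\|_F^2\le\operatorname{tr}(\Gamma_t)=m$; and for (c) the eigenvalue listing of $P_t$ together with $\operatorname{tr}(\Gamma_t)=m$. The differences are cosmetic: you bound $\operatorname{tr}(P_t^2\Gamma_t)$ via $P_t\preceq I_m$ rather than listing the eigenvalues of $T_tT_t^\top$, and you make explicit the fact $\lambda_{t,1}\ge 1$, which the paper uses without justification.
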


\begin{proof}
    Recall that $X_t$ is the relaxed tangent gradient under the active orientation, $\Psi_t=D(X_t)^{-1}X_t$ is the normalized update and $T_t=P_t\Psi_t$ is the update after spectral preconditioning. We follow the chain
    $$
    \overline{G}_t \xrightarrow{\text{Relaxed Tangent Projection}} X_t \xrightarrow{\text{Diagonal-Gram Normalization}} \Psi_t \xrightarrow{\text{Spectral Preconditioning}} T_t.
    $$

    \textbf{We first characterize how spectral preconditioning changes the inner product with the relaxed tangent gradient.}
    Let $\Gamma_t=\Psi_t\Psi_t^\top=\widetilde U_t\Lambda_t\widetilde U_t^\top$, and let $\tilde u_{t,i}$ denote the $i$-th column of $\widetilde U_t$, with corresponding eigenvalue $\lambda_{t,i}$ of $\Gamma_t$.
    For \methodG, $\Gamma_t\succ0$ and
    $$
    P_t=\Gamma_t^{-1/2}=\widetilde U_t\Lambda_t^{-1/2}\widetilde U_t^\top.
    $$
    We have $P_t\tilde u_{t,i}=\lambda_{t,i}^{-1/2}\tilde u_{t,i}$ for every $i$.
    For \methodL,
    $$
    P_t=I+\widetilde U_{\mathcal C_t}(\Lambda_{\mathcal C_t}^{-1/2}-I)\widetilde U_{\mathcal C_t}^\top.
    $$
    For $i\in\mathcal C_t$, $P_t\tilde u_{t,i}=\lambda_{t,i}^{-1/2}\tilde u_{t,i},$ while for $i\notin\mathcal C_t$, $P_t\tilde u_{t,i}=\tilde u_{t,i}$. Since every selected mode satisfies $\lambda_{t,i}>1$, the eigenvalues of $P_t$ are $\lambda_{t,i}^{-1/2}\in(0,1)$ on the selected modes and $1$ on the unselected modes.

    Therefore, for both realizations, $P_t$ and $\Gamma_t$ share the same eigenbasis and $P_t\succ0$. Since $P_t\succeq\lambda_{\min}(P_t)I$,
    \begin{align}\label{eq:lemma2_pt_gammat_lambdaminp}
        P_t\Gamma_t\succeq\lambda_{\min}(P_t)\Gamma_t.
    \end{align}
    With $\Psi_t=D(X_t)^{-1}X_t$, let $D_t:=D(X_t)=\operatorname{Diag}\left(\|x_{t,1}\|_2,\ldots,\|x_{t,m}\|_2\right).$
    Then $X_t=D_t\Psi_t$ and $D_t\succeq0$. Using $T_t=P_t\Psi_t$ and $\Gamma_t=\Psi_t\Psi_t^\top$,
    \begin{align}
        \langle X_t,T_t\rangle_F
        &=\operatorname{tr}\!\left(X_t^\top T_t\right)
        =\operatorname{tr}\!\left((D_t\Psi_t)^\top P_t\Psi_t\right)
        \nonumber \\
        &=\operatorname{tr}\!\left(D_tP_t\Psi_t\Psi_t^\top\right)
        =\operatorname{tr}(D_tP_t\Gamma_t).\label{eq:lemma2_xt_Tt_Fnorm}
    \end{align}
    Together with Equation~\ref{eq:lemma2_pt_gammat_lambdaminp}, we have $\operatorname{tr}\!\left(D_t\left(P_t\Gamma_t-\lambda_{\min}(P_t)\Gamma_t\right)\right)\geq0,$ and thus 
    \begin{align}\label{eq:lemma2_dt_pt_gammat}
        \operatorname{tr}(D_tP_t\Gamma_t)\geq\lambda_{\min}(P_t)\operatorname{tr}(D_t\Gamma_t).
    \end{align}
    Since each row of $\Psi_t$ has unit norm, $\operatorname{diag}(\Gamma_t)=\mathbf 1$. Then
    \begin{align}\label{eq:lemma2_tr_dt_gammat}
        \operatorname{tr}(D_t\Gamma_t)
        =\sum_i(D_t)_{ii}(\Gamma_t)_{ii}
        =\sum_i\|x_{t,i}\|_2.
    \end{align}
    Combining Equations~\ref{eq:lemma2_xt_Tt_Fnorm}--\ref{eq:lemma2_tr_dt_gammat} gives $\langle X_t,T_t\rangle_F\geq\lambda_{\min}(P_t)\sum_i\|x_{t,i}\|_2.$
    Since $\psi_{t,i}=x_{t,i}/\|x_{t,i}\|_2$,
    $$
    \langle X_t,\Psi_t\rangle_F=\sum_i\langle x_{t,i},\psi_{t,i}\rangle=\sum_i\|x_{t,i}\|_2.
    $$
    Therefore,
    \begin{align}\label{eq:lemma2_conclusion1}
        \langle X_t,T_t\rangle_F\geq\lambda_{\min}(P_t)\langle X_t,\Psi_t\rangle_F.
    \end{align}
    This inequality quantifies the change in the inner product with the relaxed tangent gradient induced by exact spectral preconditioning: $\langle X_t,\Psi_t\rangle_F$ before preconditioning is transformed into $\langle X_t,T_t\rangle_F$ after preconditioning, while retaining at least a $\lambda_{\min}(P_t)$ fraction. Together with Lemma~\ref{lem:tangent-base-alignment}, this provides the positive first-order descent signal used later in the descent analysis.
    
    \textbf{We next bound the norm of the refined update $T_t$.}
    Since $P_t$ is symmetric and $T_t=P_t\Psi_t$,
    $$
    T_tT_t^\top=P_t\Psi_t\Psi_t^\top P_t=P_t\Gamma_tP_t.
    $$
    Recall that $P_t$ and $\Gamma_t$ share the same eigenbasis, and let $p_{t,i}$ denote the eigenvalue of $P_t$ associated with $\tilde u_{t,i}$. Then $T_tT_t^\top=\widetilde U_t\operatorname{Diag}(p_{t,i}^2\lambda_{t,i})\widetilde U_t^\top.$

    For \methodG, $p_{t,i}=\lambda_{t,i}^{-1/2}$ for every $i$. We have $p_{t,i}^2\lambda_{t,i}=1$ and
    $$
    \|T_t\|_F^2=\operatorname{tr}(T_tT_t^\top)=m.
    $$
    
    For \methodL, $p_{t,i}=\lambda_{t,i}^{-1/2}$ for $i\in\mathcal C_t$ and $p_{t,i}=1$ for $i\notin\mathcal C_t$. We have
    $$
    \|T_t\|_F^2=\operatorname{tr}(T_tT_t^\top)=\sum_{i\in\mathcal C_t}1+\sum_{i\notin\mathcal C_t}\lambda_{t,i}\leq\sum_i\lambda_{t,i}=\operatorname{tr}(\Gamma_t)=m.
    $$
    The inequality follows because every selected mode satisfies $\lambda_{t,i}>1$. Therefore, for both realizations,
    $$
    \|T_t\|_F\leq m^{1/2}\leq\max\{m,n\}^{1/2},
    $$
    which later controls the quadratic term in the smoothness inequality.

    \textbf{Finally, we bound the minimum eigenvalue of the preconditioner.}
    For \methodG, the eigenvalues of $P_t=\Gamma_t^{-1/2}$ are $\lambda_{t,i}^{-1/2}$. We have $\lambda_{\min}(P_t)=\lambda_{t,1}^{-1/2}.$
    For \methodL, the eigenvalues of $P_t$ are $\lambda_{t,i}^{-1/2}$ on the selected modes and $1$ on the unselected modes. Since $\lambda_{t,i}\leq\lambda_{t,1}$ for every selected mode and $\lambda_{t,1}\geq1$, every eigenvalue of $P_t$ is at least $\lambda_{t,1}^{-1/2}$.
    Therefore, for both realizations,
    $$
    \lambda_{\min}(P_t)\geq\lambda_{t,1}^{-1/2}.
    $$
    Since $\operatorname{tr}(\Gamma_t)=m$, we have $\lambda_{t,1}\leq m$. Thus,
    $$
    \lambda_{\min}(P_t)\geq\lambda_{t,1}^{-1/2}\geq m^{-1/2}\geq\max\{m,n\}^{-1/2}.
    $$
\end{proof}

\begin{lemma}
    \label{lem:cross-row-radial-leakage}
    Under the setting of Lemma~\ref{lem:exact-spectral-refinement}, let $\nu_t:=\max_i\frac{\|\bar g_{t,i}-x_{t,i}\|_2}{\|x_{t,i}\|_2}$ denote the maximum radial ratio. Let $\phi'_{t,j,i}$ denote the angle between $\psi_{t,j}$ and $\bar w_{t,i}$, and suppose $\max_i(\sum_j\cos^2(\phi'_{t,j,i}))^{1/2}\leq\gamma'$. For \methodG, let $\kappa_t:=\sqrt{\lambda_{t,1}/\lambda_{t,m}}$ and define
    $$
    \chi_t:=\begin{cases}\kappa_t,&\methodG,\\ \sqrt{\lambda_{t,1}},&\methodL.\end{cases}
    $$
    Then, we have
    $$
    \left|\langle \overline G_t-X_t,T_t\rangle_F\right|\leq\nu_t\gamma'\chi_t\langle X_t,T_t\rangle_F.
    $$
\end{lemma}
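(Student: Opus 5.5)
We write the radial residual row by row. For each $i$, set $r_{t,i}:=\bar g_{t,i}-x_{t,i}=\langle\bar g_{t,i},\bar w_{t,i}\rangle\bar w_{t,i}$, which is parallel to $\bar w_{t,i}$ with $\|r_{t,i}\|_2\le\nu_t\|x_{t,i}\|_2$. Then
$$\langle\overline G_t-X_t,T_t\rangle_F=\sum_i\langle r_{t,i},\tau_{t,i}\rangle,$$
where $\tau_{t,i}$ is the $i$-th row of $T_t=P_t\Psi_t$, namely $\tau_{t,i}=\sum_j(P_t)_{ij}\psi_{t,j}$. Let $\hat w_{t,i}:=\bar w_{t,i}/\|\bar w_{t,i}\|_2$. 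Since $r_{t,i}$ is parallel to $\bar w_{t,i}$,
$$|\langle r_{t,i},\tau_{t,i}\rangle|=\|r_{t,i}\|_2\,\Bigl|\sum_j(P_t)_{ij}\langle\psi_{t,j},\hat w_{t,i}\rangle\Bigr|=\|r_{t,i}\|_2\,\Bigl|\sum_j(P_t)_{ij}\cos\phi'_{t,j,i}\Bigr|.$$
By Cauchy--Schwarz this is at most $\|r_{t,i}\|_2\,\|P_t e_i\|_2\,\gamma'\le\nu_t\gamma'\,\lambda_{\max}(P_t)\,\|x_{t,i}\|_2$, because $\|P_te_i\|_2\le\|P_t\|_{\mathrm{op}}=\lambda_{\max}(P_t)$. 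Summing over $i$ gives
$$\bigl|\langle\overline G_t-X_t,T_t\rangle_F\bigr|\le\nu_t\gamma'\lambda_{\max}(P_t)\sum_i\|x_{t,i}\|_2=\nu_t\gamma'\lambda_{\max}(P_t)\langle X_t,\Psi_t\rangle_F,$$
where the last equality is the identity $\langle X_t,\Psi_t\rangle_F=\sum_i\|x_{t,i}\|_2$ from the proof of Lemma~\ref{lem:tangent-base-alignment}.

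Next, Lemma~\ref{lem:exact-spectral-refinement}(a) gives $\langle X_t,\Psi_t\rangle_F\le\langle X_t,T_t\rangle_F/\lambda_{\min}(P_t)$. Substituting, the right-hand side becomes $\nu_t\gamma'\,\frac{\lambda_{\max}(P_t)}{\lambda_{\min}(P_t)}\,\langle X_t,T_t\rangle_F$, so it remains to show $\lambda_{\max}(P_t)/\lambda_{\min}(P_t)\le\chi_t$.

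For \methodG, $P_t=\Gamma_t^{-1/2}$, whose eigenvalues are $\lambda_{t,i}^{-1/2}$. The ratio is therefore exactly $\sqrt{\lambda_{t,1}/\lambda_{t,m}}=\kappa_t$.

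For \methodL, the eigenvalues of $P_t$ lie in $(0,1]$, so $\lambda_{\max}(P_t)\le1$. By Lemma~\ref{lem:exact-spectral-refinement}(c), $\lambda_{\min}(P_t)\ge\lambda_{t,1}^{-1/2}$. Hence the ratio is at most $\sqrt{\lambda_{t,1}}$. This bound implicitly uses $\lambda_{t,1}\ge1$, which holds because $\operatorname{tr}\Gamma_t=m$ forces the largest eigenvalue to be at least the average, which is $1$.

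The step I expect to need most care is the Cauchy--Schwarz passage. There, the hypothesis $(\sum_j\cos^2\phi'_{t,j,i})^{1/2}\le\gamma'$ must be paired with the row norm $\|P_te_i\|_2$ rather than with an entrywise bound, so that no extra $\sqrt m$ factor appears. Using the symmetry of $P_t$ and its operator norm handles this cleanly.
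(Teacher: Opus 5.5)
Your proposal is correct and follows essentially the same route as the paper. Both arguments expand the radial residual row by row and apply Cauchy--Schwarz, pairing the $i$-th row of $P_t$ with $\Psi_t\bar w_{t,i}^\top/\|\bar w_{t,i}\|_2$ to obtain the factor $\|P_t\|_{\mathrm{op}}\gamma'$. They then use Lemma~\ref{lem:exact-spectral-refinement}(a) to replace $\sum_i\|x_{t,i}\|_2$ by $\langle X_t,T_t\rangle_F/\lambda_{\min}(P_t)$, and finish with the case-by-case bound $\|P_t\|_{\mathrm{op}}/\lambda_{\min}(P_t)\le\chi_t$.
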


\begin{proof}
    Recall that $X_t$ is the rowwise relaxed tangent gradient of $\overline G_t$ and $\overline G_t-X_t$ is the corresponding radial residual. Using $T_t=P_t\Psi_t$, we have
    $$
    \langle \overline G_t-X_t,T_t\rangle_F=\langle \overline G_t-X_t,P_t\Psi_t\rangle_F.
    $$
    By the definition of $\nu_t$,
    $$
    \frac{\|\bar g_{t,i}-x_{t,i}\|_2}{\|x_{t,i}\|_2}\leq\nu_t,
    $$
    or equivalently, $\|\bar g_{t,i}-x_{t,i}\|_2\leq\nu_t\|x_{t,i}\|_2$.

    We next bound the interaction between the radial residual and the spectrally refined update.
    From the definition of the relaxed tangent projection, $\bar g_{t,i}-x_{t,i}=\langle\bar g_{t,i},\bar w_{t,i}\rangle\bar w_{t,i}$. Let $e_i$ denote the $i$-th standard basis vector. Expanding the Frobenius inner product rowwise gives
    \begin{align*}
        \left|\langle \overline G_t-X_t,T_t\rangle_F\right|
        &=\left|\langle \overline G_t-X_t,P_t\Psi_t\rangle_F\right|
        =\left|\sum_i\left\langle \bar g_{t,i}-x_{t,i},e_i^\top P_t\Psi_t\right\rangle\right|\\
        &=\left|\sum_i\langle\bar g_{t,i},\bar w_{t,i}\rangle e_i^\top P_t\Psi_t\bar w_{t,i}^\top\right|
        \leq\sum_i\|\bar g_{t,i}-x_{t,i}\|_2\|e_i^\top P_t\|_2\frac{\|\Psi_t\bar w_{t,i}^\top\|_2}{\|\bar w_{t,i}\|_2}\\
        &\leq\nu_t\|P_t\|_{\mathrm{op}}\sum_i\|x_{t,i}\|_2\frac{\|\Psi_t\bar w_{t,i}^\top\|_2}{\|\bar w_{t,i}\|_2}.
    \end{align*}
    For all $j$ and $i$, let $\phi'_{t,j,i}$ denote the angle between $\psi_{t,j}$ and $\bar w_{t,i}$. Since each row of $\Psi_t$ has unit norm,
    $$
    \frac{\|\Psi_t\bar w_{t,i}^\top\|_2}{\|\bar w_{t,i}\|_2}=\left(\sum_j\cos^2(\phi'_{t,j,i})\right)^{1/2}\leq\gamma'.
    $$
    This quantity measures the aggregate non-orthogonality between the $i$-th weight direction and the normalized relaxed tangent gradients. A smaller $\gamma'$ corresponds to stronger orthogonality between the normalized relaxed tangent gradients and the weight directions. Therefore,
    \begin{align}\label{eq:lemma3_radial_refinement}
        \left|\langle \overline G_t-X_t,T_t\rangle_F\right|\leq\nu_t\gamma'\|P_t\|_{\mathrm{op}}\sum_i\|x_{t,i}\|_2.
    \end{align}
    From Lemma~\ref{lem:exact-spectral-refinement}(a), $\langle X_t,T_t\rangle_F\geq\lambda_{\min}(P_t)\sum_i\|x_{t,i}\|_2.$
    Combining this inequality with Equation~\ref{eq:lemma3_radial_refinement} gives
    $$
    \left|\langle \overline G_t-X_t,T_t\rangle_F\right|\leq\nu_t\gamma'\frac{\|P_t\|_{\mathrm{op}}}{\lambda_{\min}(P_t)}\langle X_t,T_t\rangle_F.
    $$

    For \methodG, $P_t=\Gamma_t^{-1/2}$ and the eigenvalues of $\Gamma_t$ satisfy $\lambda_{t,1}\geq\cdots\geq\lambda_{t,m}>0$. We have
    $$
    \|P_t\|_{\mathrm{op}}=\lambda_{t,m}^{-1/2},\quad \lambda_{\min}(P_t)=\lambda_{t,1}^{-1/2}.
    $$
    Therefore,
    $$
    \frac{\|P_t\|_{\mathrm{op}}}{\lambda_{\min}(P_t)}=\sqrt{\frac{\lambda_{t,1}}{\lambda_{t,m}}}=\kappa_t.
    $$
    For \methodL, all eigenvalues of $P_t$ lie in $(0,1]$, so $\|P_t\|_{\mathrm{op}}\leq1$. From Lemma~\ref{lem:exact-spectral-refinement}(c), $\lambda_{\min}(P_t)\geq\lambda_{t,1}^{-1/2}$. Therefore,
    $$
    \frac{\|P_t\|_{\mathrm{op}}}{\lambda_{\min}(P_t)}\leq\sqrt{\lambda_{t,1}}.
    $$
    Combining the two cases gives
    $$
    \left|\langle \overline G_t-X_t,T_t\rangle_F\right|\leq\nu_t\gamma'\chi_t\langle X_t,T_t\rangle_F.
    $$
    In summary, the radial interaction in the spectrally refined update is controlled relative to the inner product between the refined update and the relaxed tangent gradient. Its relative magnitude is determined by the radial ratio $\nu_t$, the non-orthogonality $\gamma'$ between the normalized row $\psi_{t,j}$ and the weight $\bar w_{t,i}$ and the spectral factor $\chi_t$.
\end{proof}

\subsubsection{Convergence of \methodG and \methodL (Exact)}\label{app:proof-exact}

We now combine Lemmas~\ref{lem:tangent-base-alignment}, \ref{lem:exact-spectral-refinement} and~\ref{lem:cross-row-radial-leakage} to prove the \methodG and \methodL (Exact) cases of Theorem~\ref{thm:normpre-convergence}.

\begin{proof}
    We consider \methodG with matrix-sign transformation and \methodL with Exact implementation in the deterministic setting without momentum, RMS scaling or weight decay. Let $\chi_t=\kappa_t$ for \methodG and $\chi_t=\sqrt{\lambda_{t,1}}$ for \methodL, as in Lemma~\ref{lem:cross-row-radial-leakage}. Taking $G_t=\nabla \mathcal L(W_t)$, the final update is
    $$
    W_{t+1}=W_t-\eta\,\orient_{k_t}^{-1}(T_t).
    $$
    Since $\orient_{k_t}$ is either the identity or transpose, it preserves Frobenius inner products and norms. Hence, $\langle G_t,\orient_{k_t}^{-1}(T_t)\rangle_F=\langle \overline G_t,T_t\rangle_F$ and $\|\orient_{k_t}^{-1}(T_t)\|_F=\|T_t\|_F$. Moreover, $\|\overline G_t\|_F=\|G_t\|_F=\|\nabla \mathcal L(W_t)\|_F$.
    
    By the $L$-smoothness of $\mathcal L$,
    \begin{align}
        \mathcal L(W_{t+1})&\leq\mathcal L(W_t)+\langle \nabla\mathcal L(W_t),W_{t+1}-W_t\rangle_F+\frac{L}{2}\|W_{t+1}-W_t\|_F^2\nonumber\\
        &= \mathcal L(W_t)-\eta\left\langle G_t,\orient_{k_t}^{-1}(T_t)\right\rangle_F+\frac{L\eta^2}{2}\left\|\orient_{k_t}^{-1}(T_t)\right\|_F^2 \nonumber\\
        &=\mathcal L(W_t)-\eta\langle \overline G_t,T_t\rangle_F+\frac{L\eta^2}{2}\|T_t\|_F^2.
        \label{eq:exact-smoothness}
    \end{align}
    Thus, the descent analysis reduces to lower bounding the first-order gradient signal $\langle \overline G_t,T_t\rangle_F$ and upper bounding the refined update norm $\|T_t\|_F$.

    We first lower bound $\langle \overline G_t,T_t\rangle_F$.
    Using the decomposition $\overline G_t=X_t+(\overline G_t-X_t)$,
    \begin{align*}
        \langle \overline G_t,T_t\rangle_F
        &=\langle X_t,T_t\rangle_F+\langle \overline G_t-X_t,T_t\rangle_F
        \geq\left(1-\nu_t\gamma'\chi_t\right)\langle X_t,T_t\rangle_F\\
        &\geq\epsilon\langle X_t,T_t\rangle_F
        \geq\epsilon\lambda_{\min}(P_t)\langle X_t,\Psi_t\rangle_F\\
        &\geq\epsilon\gamma\lambda_{\min}(P_t)\|\overline G_t\|_F
        \geq\epsilon\gamma\max\{m,n\}^{-1/2}\|\overline G_t\|_F.
    \end{align*}
    We now explain each step in this bound. The first inequality follows from Lemma~\ref{lem:cross-row-radial-leakage}, which gives
    $$
    \left|\langle \overline G_t-X_t,T_t\rangle_F\right|\leq\nu_t\gamma'\chi_t\langle X_t,T_t\rangle_F.
    $$
    The second inequality follows from the assumption that there exists $\epsilon>0$ such that
    $$
    \max_{0\leq t\leq T}\nu_t\gamma'\chi_t\leq1-\epsilon.
    $$
    This condition ensures that, in the worst case, the radial interaction can cancel at most a $(1-\epsilon)$ fraction of the positive first-order term $\langle X_t,T_t\rangle_F$, leaving at least an $\epsilon$ fraction in the full gradient-update inner product. Since Lemma~\ref{lem:cross-row-radial-leakage} bounds the sign-indefinite term $\langle \overline G_t-X_t,T_t\rangle_F$ only in magnitude, this provides a sufficient but not necessary condition for convergence.

    The third inequality follows from Lemma~\ref{lem:exact-spectral-refinement}(a), which gives $\langle X_t,T_t\rangle_F\geq\lambda_{\min}(P_t)\langle X_t,\Psi_t\rangle_F.$
    The fourth inequality follows from Lemma~\ref{lem:tangent-base-alignment}, which gives $\langle X_t,\Psi_t\rangle_F\geq\gamma\|\overline G_t\|_F$.
    Finally, Lemma~\ref{lem:exact-spectral-refinement}(c) gives $\lambda_{\min}(P_t)\geq\max\{m,n\}^{-1/2}$.

    We next control the quadratic term in Equation~\ref{eq:exact-smoothness}. Lemma~\ref{lem:exact-spectral-refinement}(b) gives $\|T_t\|_F^2\leq\max\{m,n\}.$
    Substituting the preceding lower and upper bounds into Equation~\ref{eq:exact-smoothness} and using $\|\overline G_t\|_F=\|\nabla \mathcal L(W_t)\|_F$, gives
    \begin{align}
        \mathcal L(W_{t+1})
        &\leq \mathcal L(W_t)-\eta\epsilon\gamma\max\{m,n\}^{-1/2}\|\nabla \mathcal L(W_t)\|_F+\frac{L\eta^2}{2}\max\{m,n\}.
        \label{eq:exact-one-step-descent}
    \end{align}

    Summing Equation~\ref{eq:exact-one-step-descent} from $t=0$ to $T$ gives
    $$
    \eta\epsilon\gamma\max\{m,n\}^{-1/2}\sum_{t=0}^{T}\|\nabla \mathcal L(W_t)\|_F\leq\mathcal L(W_0)-\mathcal L(W_{T+1})+\frac{(T+1)L\eta^2}{2}\max\{m,n\}.
    $$
    Since $\mathcal L(W_{T+1})\geq\mathcal L_{\inf}$ and $\Delta_0:=\mathcal L(W_0)-\mathcal L_{\inf}$,
    $$
    \eta\epsilon\gamma\max\{m,n\}^{-1/2}\sum_{t=0}^{T}\|\nabla \mathcal L(W_t)\|_F\leq\Delta_0+\frac{(T+1)L\eta^2}{2}\max\{m,n\}.
    $$
    Dividing by $\eta\epsilon\gamma\max\{m,n\}^{-1/2}(T+1)$ and using that the minimum does not exceed the average yields
    $$
    \min_{0\leq t\leq T}\|\nabla \mathcal L(W_t)\|_F\leq\frac{\Delta_0\max\{m,n\}^{1/2}}{\eta\epsilon\gamma(T+1)}+\frac{L\eta\max\{m,n\}^{3/2}}{2\epsilon\gamma}.
    $$
    For any constant $C>0$, choosing $\eta=C/\sqrt{T+1}$ gives
    $$
    \min_{0\leq t\leq T}\|\nabla \mathcal L(W_t)\|_F\leq\frac{1}{\sqrt{T+1}}\left(\frac{\Delta_0\max\{m,n\}^{1/2}}{\epsilon\gamma C}+\frac{LC\max\{m,n\}^{3/2}}{2\epsilon\gamma}\right).
    $$
    For $\Delta_0>0$, the bound is minimized by
    $$
    C^\star=\sqrt{\frac{2\Delta_0}{L\max\{m,n\}}},\qquad \eta^\star=\frac{C^\star}{\sqrt{T+1}}=\sqrt{\frac{2\Delta_0}{L\max\{m,n\}(T+1)}}.
    $$
    Substituting $C^\star$ gives
    $$
    \min_{0\leq t\leq T}\|\nabla \mathcal L(W_t)\|_F\leq\frac{\max\{m,n\}\sqrt{2L\Delta_0}}{\epsilon\gamma\sqrt{T+1}}.
    $$
    Thus, both \methodG with matrix-sign transformation and \methodL with Exact implementation achieve an $\mathcal O(T^{-1/2})$ convergence rate under their corresponding conditions.
\end{proof}

\subsubsection{Convergence of \methodL (Sketch)}\label{app:proof-sketch}

We next extend the preceding analysis to \methodL with the Sketch-based implementation by viewing the sketched localized update as a perturbation of the exact update constructed from the same normalized update $\Psi_t$.

\begin{corollary}[Convergence of \methodL (Sketch) w/o Momentum]
    \label{cor:normpre-sketch-convergence}
    Suppose that the assumptions of Theorem~\ref{thm:normpre-convergence} hold for the sequence generated by \methodL with the Sketch-based implementation. Let $P_t$ and $\widehat P_t$ denote the exact and sketch-based localized spectral preconditioners constructed from the same $\Psi_t$, respectively. Assume that there exists $\delta\geq0$ such that $\|\widehat P_t-P_t\|_{\mathrm{op}}\leq\delta$ and $\delta\max_{0\leq t\leq T}\lambda_{t,1}<\epsilon\gamma$. For any constant $C>0$, choosing $\eta=C/\sqrt{T+1}$ gives
    $$
    \min_{0\leq t\leq T}\|\nabla\mathcal L(W_t)\|_F\leq\frac{1}{\sqrt{T+1}}\left(\frac{\Delta_0\max\{m,n\}^{1/2}}{\left(\epsilon\gamma-\delta\max_{0\leq t\leq T}\lambda_{t,1}\right)C}+\frac{LC\max\{m,n\}^{3/2}}{2\left(\epsilon\gamma-\delta\max_{0\leq t\leq T}\lambda_{t,1}\right)}\right).
    $$
\end{corollary}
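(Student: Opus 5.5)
The plan is to treat the sketched update $\widehat T_t:=\widehat P_t\Psi_t$ as a perturbation of the exact localized update $T_t=P_t\Psi_t$. Write $E_t:=\widehat P_t-P_t$. We then rerun the smoothness argument of Theorem~\ref{thm:normpre-convergence} with two modified ingredients: a perturbed lower bound on $\langle\overline G_t,\widehat T_t\rangle_F$, and an upper bound on $\|\widehat T_t\|_F$. As before, $\orient_{k_t}$ preserves Frobenius inner products and norms, so $L$-smoothness gives
$$\mathcal L(W_{t+1})\leq\mathcal L(W_t)-\eta\langle\overline G_t,\widehat T_t\rangle_F+\tfrac{L\eta^2}{2}\|\widehat T_t\|_F^2 .$$

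\textbf{First-order term.} Split $\langle\overline G_t,\widehat T_t\rangle_F=\langle\overline G_t,T_t\rangle_F+\langle\overline G_t,E_t\Psi_t\rangle_F$.

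For the exact part, the chain in the proof of Theorem~\ref{thm:normpre-convergence} (Lemmas~\ref{lem:cross-row-radial-leakage}, \ref{lem:exact-spectral-refinement}(a) and \ref{lem:tangent-base-alignment}) gives $\langle\overline G_t,T_t\rangle_F\geq\epsilon\gamma\lambda_{\min}(P_t)\|\overline G_t\|_F$. I will deliberately stop at the sharper intermediate bound $\lambda_{\min}(P_t)\geq\lambda_{t,1}^{-1/2}$ from Lemma~\ref{lem:exact-spectral-refinement}(c), rather than passing to $\max\{m,n\}^{-1/2}$.

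For the perturbation, write $|\operatorname{tr}(\overline G_t^\top E_t\Psi_t)|\leq\|\overline G_t\|_F\|E_t\Psi_t\|_F$. Then use $\|E_t\Psi_t\|_F\leq\|E_t\|_{\mathrm{op}}\|\Psi_t\|_F$, or better $\|E_t\|_{\mathrm{op}}\|\Psi_t\|_{\mathrm{op}}$-type bounds. The aim is $|\langle\overline G_t,E_t\Psi_t\rangle_F|\leq\delta\sqrt{\lambda_{t,1}}\,\|\overline G_t\|_F$, with $\|\Psi_t\|_{\mathrm{op}}=\sqrt{\lambda_{t,1}}$ as the relevant scale.

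Combining the two parts gives
$$\langle\overline G_t,\widehat T_t\rangle_F\geq\lambda_{t,1}^{-1/2}\bigl(\epsilon\gamma-\delta\lambda_{t,1}\bigr)\|\overline G_t\|_F .$$
Using $\lambda_{t,1}\leq\max_s\lambda_{s,1}$, $\lambda_{t,1}\leq m\leq\max\{m,n\}$, and the positivity assumption $\delta\max_t\lambda_{t,1}<\epsilon\gamma$, this becomes
$$\langle\overline G_t,\widehat T_t\rangle_F\geq\max\{m,n\}^{-1/2}\bigl(\epsilon\gamma-\delta\max_t\lambda_{t,1}\bigr)\|\overline G_t\|_F .$$

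\textbf{Quadratic term.} I would avoid perturbation here and use the structure of Algorithm~\ref{alg:interaction-sketch} directly. Since $Q$ has orthonormal columns and $Z$ consists of orthonormal eigenvectors of $\Gamma_Q$, $\widehat U=QZ$ has orthonormal columns. Every retained Ritz value satisfies $\widehat\lambda_i>1$. Hence $\widehat P_t$ has eigenvalues in $(0,1]$, so $0\prec\widehat P_t\preceq I$. This yields $\|\widehat T_t\|_F\leq\|\Psi_t\|_F=\sqrt m\leq\max\{m,n\}^{1/2}$, exactly as in Lemma~\ref{lem:exact-spectral-refinement}(b).

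\textbf{Telescoping.} Substitute both bounds into the smoothness inequality and sum from $t=0$ to $T$. Divide by $\eta(T+1)\max\{m,n\}^{-1/2}(\epsilon\gamma-\delta\max_t\lambda_{t,1})$, bound the minimum by the average, and set $\eta=C/\sqrt{T+1}$. This gives the stated rate, with $\epsilon\gamma$ replaced by $\epsilon\gamma-\delta\max_t\lambda_{t,1}$ in both terms.

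\textbf{Main obstacle.} The delicate step is getting the perturbation term to scale as $\delta\lambda_{t,1}$ relative to the exact descent coefficient. If one bounds the exact term crudely by $\max\{m,n\}^{-1/2}$ first, the powers of $\lambda_{t,1}$ do not match. Keeping the factor $\lambda_{t,1}^{-1/2}$ and pulling it out jointly is what makes the condition $\delta\max_t\lambda_{t,1}<\epsilon\gamma$ the natural one. I would verify carefully that $\|E_t\Psi_t\|_F\leq\delta\sqrt{\lambda_{t,1}}$ can be justified here. If only a Frobenius-norm version is available, I would absorb the extra dimension factor through $\lambda_{t,1}\leq m$ and adjust constants accordingly.
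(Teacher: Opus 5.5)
\textbf{Gap in the perturbation step.} From $\|E_t\|_{\mathrm{op}}\le\delta$ alone, Frobenius Cauchy--Schwarz only yields $\|E_t\Psi_t\|_F\le\|E_t\|_{\mathrm{op}}\|\Psi_t\|_F=\delta\sqrt m$. There is no valid inequality $\|E_t\Psi_t\|_F\le\|E_t\|_{\mathrm{op}}\|\Psi_t\|_{\mathrm{op}}$, because the Frobenius norm of a product needs one Frobenius factor. Using that $E_t$ has rank at most $2r$ would still cost a factor $\sqrt{2r}$.

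With the bound you can actually justify, your combined coefficient becomes $\lambda_{t,1}^{-1/2}\bigl(\epsilon\gamma-\delta\sqrt{m\lambda_{t,1}}\bigr)$. This needs the stronger hypothesis $\delta\sqrt{m\lambda_{t,1}}<\epsilon\gamma$ and gives a worse constant. Your fallback of absorbing the factor via $\lambda_{t,1}\le m$ goes in the wrong direction: $\sqrt m\ge\sqrt{\lambda_{t,1}}$. So it cannot recover the stated condition $\delta\max_t\lambda_{t,1}<\epsilon\gamma$; it proves a different, weaker corollary.

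\textbf{The missing idea: change the unit of comparison.} You measure both terms against $\|\overline G_t\|_F$, which is an $\ell_2$ quantity over rows. The paper measures both against $\langle X_t,\Psi_t\rangle_F=\sum_i\|x_{t,i}\|_2$, an $\ell_1$ row-sum.
\begin{itemize}
\item For the perturbation, use nuclear/operator duality: $|\langle\overline G_t,E_t\Psi_t\rangle_F|\le\|\overline G_t\|_*\,\|E_t\Psi_t\|_{\mathrm{op}}\le\delta\sqrt{\lambda_{t,1}}\,\|\overline G_t\|_*$.
\item Control the nuclear norm by the row-norm sum rather than by $\|\overline G_t\|_F$. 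This gives $\|\overline G_t\|_*\le\sum_i\|\bar g_{t,i}\|_2\le\gamma^{-1}\sum_i\|x_{t,i}\|_2=\gamma^{-1}\langle X_t,\Psi_t\rangle_F$, using $\|x_{t,i}\|_2\ge\gamma\|\bar g_{t,i}\|_2$ from the proof of Lemma~\ref{lem:tangent-base-alignment}.
\item Keep the exact term in the same units, $\langle\overline G_t,T_t\rangle_F\ge\epsilon\lambda_{t,1}^{-1/2}\langle X_t,\Psi_t\rangle_F$. Do not apply Lemma~\ref{lem:tangent-base-alignment} before combining.
\item Pulling out the common factor gives $\frac{\epsilon\gamma-\delta\lambda_{t,1}}{\gamma\sqrt{\lambda_{t,1}}}\langle X_t,\Psi_t\rangle_F$. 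Only then, using positivity of this coefficient, apply $\langle X_t,\Psi_t\rangle_F\ge\gamma\|\overline G_t\|_F$.
\end{itemize}

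This is where the $\sqrt m$ is saved. In your version, the exact lower bound has already discarded the slack $\sum_i\|\bar g_{t,i}\|_2\ge\|\overline G_t\|_F$. The perturbation, measured in $\|\overline G_t\|_F$ units, then necessarily carries a dimension or rank factor. Your choice to keep $\lambda_{t,1}^{-1/2}$ rather than $\max\{m,n\}^{-1/2}$ is correct and matches the paper. Your quadratic-term bound $\|\widehat T_t\|_F\le\|\Psi_t\|_F$ and the telescoping are also fine.
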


\begin{proof}
    Let $P_t$ and $\widehat P_t$ denote the localized spectral preconditioners constructed from the exact and sketch-based interaction eigenspaces, respectively (Section~\ref{sec:spectral-implementation}):
    $$
    P_t:=I+\widetilde U_{\mathcal C_t}\left(\Lambda_{\mathcal C_t}^{-1/2}-I\right)\widetilde U_{\mathcal C_t}^{\top},\quad \widehat P_t:=I+\widehat U_{\mathcal C_t}\left(\widehat\Lambda_{\mathcal C_t}^{-1/2}-I\right)\widehat U_{\mathcal C_t}^{\top}.
    $$
    For notational simplicity, $\mathcal C_t$ denotes the selected index set in each implementation; the exact and sketch-based sets may differ in both membership and cardinality.
    The corresponding updates are $T_t:=P_t\Psi_t$ and $\widehat T_t:=\widehat P_t\Psi_t$. Both are evaluated from the same $\Psi_t$ and differ only in the interaction eigenspace extraction.

    We first bound the resulting update perturbation. Using the assumption $\|\widehat P_t-P_t\|_{\mathrm{op}}\leq\delta$, we have
    \begin{align*}
        \|\widehat T_t-T_t\|_{\mathrm{op}}
        =\|(\widehat P_t-P_t)\Psi_t\|_{\mathrm{op}}
        \leq\|\widehat P_t-P_t\|_{\mathrm{op}}\|\Psi_t\|_{\mathrm{op}}
        \leq\delta\sqrt{\lambda_{t,1}},
    \end{align*}
    where $\|\Psi_t\|_{\mathrm{op}}=\sqrt{\lambda_{t,1}}$ follows from $\Gamma_t=\Psi_t\Psi_t^\top$.

    We next lower bound the first-order gradient signal. By nuclear/operator norm duality,
    \begin{align}\label{eq:coro-g-T-sketch}
        \langle\overline G_t,\widehat T_t\rangle_F
        &\geq\langle\overline G_t,T_t\rangle_F-\|\overline G_t\|_*\|\widehat T_t-T_t\|_{\mathrm{op}}.
    \end{align}
    As in Lemma~\ref{lem:tangent-base-alignment}, $\|x_{t,i}\|_2\geq\|\bar g_{t,i}\|_2\sin(\phi_{t,i})$ with $\sin(\phi_{t,i})\geq\gamma$. Then, $\|\overline G_t\|_*\leq\sum_i\|\bar g_{t,i}\|_2\leq\gamma^{-1}\sum_i\|x_{t,i}\|_2=\gamma^{-1}\langle X_t,\Psi_t\rangle_F$. Therefore,
    \begin{align}\label{eq:coro-g-star}
        \|\overline G_t\|_*\|\widehat T_t-T_t\|_{\mathrm{op}}\leq\frac{\delta\sqrt{\lambda_{t,1}}}{\gamma}\langle X_t,\Psi_t\rangle_F.
    \end{align}
    As derived in Appendix~\ref{app:proof-exact} for \methodL (Exact),
    \begin{align}\label{eq:coro-g-T-exact}
        \langle\overline G_t,T_t\rangle_F\geq\epsilon\lambda_{\min}(P_t)\langle X_t,\Psi_t\rangle_F.
    \end{align}
    Combining Equations~\ref{eq:coro-g-T-sketch}--\ref{eq:coro-g-T-exact} with $\lambda_{\min}(P_t)\geq\lambda_{t,1}^{-1/2}$ from Lemma~\ref{lem:exact-spectral-refinement}, we obtain
    \begin{align*}
        \langle\overline G_t,\widehat T_t\rangle_F
        &\geq\left(\epsilon\lambda_{t,1}^{-1/2}-\frac{\delta\sqrt{\lambda_{t,1}}}{\gamma}\right)\langle X_t,\Psi_t\rangle_F\\
        &=\frac{\epsilon\gamma-\delta\lambda_{t,1}}{\gamma\sqrt{\lambda_{t,1}}}\langle X_t,\Psi_t\rangle_F\\
        &\geq\frac{\epsilon\gamma-\delta\lambda_{t,1}}{\sqrt{\lambda_{t,1}}}\|\overline G_t\|_F\\
        &\geq\left(\epsilon\gamma-\delta\max_{0\leq t\leq T}\lambda_{t,1}\right)\max\{m,n\}^{-1/2}\|\overline G_t\|_F.
    \end{align*}
    Thus, the assumption $\delta\max_{0\leq t\leq T}\lambda_{t,1}<\epsilon\gamma$ preserves a positive first-order gradient signal.

    It remains to bound the sketched update norm. From $\widehat P_t=I+\widehat U_{\mathcal C_t}\left(\widehat\Lambda_{\mathcal C_t}^{-1/2}-I\right)\widehat U_{\mathcal C_t}^\top$, since the columns of $\widehat U_{\mathcal C_t}$ are orthonormal, $\widehat P_t$ has eigenvalues $\widehat\lambda_{t,i}^{-1/2}$ along the selected directions and $1$ on the non-selected ones. Since the selected modes satisfy $\widehat\lambda_{t,i}>1$, all eigenvalues of $\widehat P_t$ lie in $(0,1]$. Therefore, $\|\widehat T_t\|_F\leq\|\widehat P_t\|_{\mathrm{op}}\|\Psi_t\|_F\leq\|\Psi_t\|_F$, which gives $\|\widehat T_t\|_F^2\leq\max\{m,n\}$.

    Substituting the preceding first-order and norm bounds into the same $L$-smoothness derivation as in the proof of Appendix~\ref{app:proof-exact} gives
    $$
    \mathcal L(W_{t+1})\leq\mathcal L(W_t)-\eta\left(\epsilon\gamma-\delta\max_{0\leq t\leq T}\lambda_{t,1}\right)\max\{m,n\}^{-1/2}\|\nabla\mathcal L(W_t)\|_F+\frac{L\eta^2}{2}\max\{m,n\}.
    $$
    Summing from $t=0$ to $T$, using $\mathcal L(W_{T+1})\geq\mathcal L_{\inf}$ and $\Delta_0=\mathcal L(W_0)-\mathcal L_{\inf}$ and choosing $\eta=C/\sqrt{T+1}$ as in Theorem~\ref{thm:normpre-convergence} yields
    $$
    \min_{0\leq t\leq T}\|\nabla\mathcal L(W_t)\|_F\leq\frac{1}{\sqrt{T+1}}\left(\frac{\Delta_0\max\{m,n\}^{1/2}}{\left(\epsilon\gamma-\delta\max_{0\leq t\leq T}\lambda_{t,1}\right)C}+\frac{LC\max\{m,n\}^{3/2}}{2\left(\epsilon\gamma-\delta\max_{0\leq t\leq T}\lambda_{t,1}\right)}\right).
    $$
    Therefore, under the condition $\delta\max_{0\leq t\leq T}\lambda_{t,1}<\epsilon\gamma$ with a positive margin independent of $T$, \methodL with the Sketch-based implementation preserves the $\mathcal O(T^{-1/2})$ convergence rate established for the Exact implementation.
\end{proof}